\documentclass[10pt]{article}
\usepackage[
  margin=2.75cm,
  includefoot,
  footskip=30pt]{geometry}
\usepackage[parfill]{parskip}

\usepackage{natbib}

\usepackage{bbm}
\usepackage{amssymb}
\usepackage{amsmath}
\usepackage{amsthm}
\usepackage{mathrsfs}  
\usepackage{xcolor}
\usepackage{bbm}
\usepackage{nicefrac}
\usepackage[noend]{algorithm2e}
\usepackage{setspace}
\usepackage{color}

\usepackage{enumitem}
\setlist[enumerate]{leftmargin=25pt}
\setlist[itemize]{leftmargin=25pt}

\usepackage{tikz}
\usepackage{caption}
\usepackage{subcaption}
\usepackage{wrapfig}

\setlength{\algomargin}{1em}

\newtheorem{theorem}{Theorem}[section]
 
\newtheorem{corollary}[theorem]{Corollary}
\newtheorem{lemma}[theorem]{Lemma}
\newtheorem{proposition}[theorem]{Proposition}

\newtheorem{remark}[theorem]{Remark}

\theoremstyle{definition}
\newtheorem{definition}{Definition}[section]

\renewcommand{\hat}{\widehat}
\renewcommand{\tilde}{\widetilde}

%Operators: These operators are such that a subscript appears below
%in \[ \] math mode, and to the bottom right in regular $ $ math mode

%regular version

\def\min{\qopname\relax n{min}}
\def\max{\qopname\relax n{max}}
\def\argmin{\qopname\relax n{argmin}}
\def\argmax{\qopname\relax n{argmax}}

%bold version

\def\supp{\qopname\relax n{{supp}}}

\def\E{\mathcal{E}}

\newcommand{\cO}{{\mathcal{O}}}

\newcommand{\eat}[1]{}

%Plain eps or pdf figure. Use IPE to embed tex in it.

%Combined PS/Latex figure. This is option of choice for including tex
%code from xfig. Remember to export from xfig using "combined ps/latex" option

%Combined PDF/Latex figure. This is option of choice for including tex
%code from xfig. Remember to export from xfig using "combined pdf/latex" option

%LP environment stuff

\newenvironment{lp*}{\begin{equation*}  \begin{array}{lll}}{\end{array}\end{equation*}}

%% MISC

\renewcommand{\E}{\mathbb{E}}
    % penalty function 

\newcommand{\bmu}{{\boldsymbol{\mu}}}
\newcommand{\bsigma}{{\boldsymbol{\sigma}}}

\newcommand{\s}{s}
\newcommand{\bs}{{\boldsymbol{s}}}

\newcommand{\NE}{\mathrm{NE}}

\def\P{\mathbb{P}}

\newcommand{\ind}{\mathbbm{1}}
\newcommand{\hist}{\mathcal{H}}

\newcommand{\ucbs}{\textnormal{UCB-S}}

\newcommand{\KL}{\mathrm{KL}}

\newcommand{\Bern}{\mathrm{Bern}}

\newcommand{\U}{v} % Agent utility

\newcommand{\oraclem}{\mu\text{-Oracle}}
% {\text{Oracle}_{\mu}}
\newcommand{\oraclesm}
{(s, \mu)\text{-Oracle}}
% {\text{Oracle}_{\s, \mu}}

% referencing itemize elements
\makeatletter
\newcommand{\mylabel}[2]{#2\def\@currentlabel{#2}\label{#1}}
\makeatother

\newcommand{\learner}[1]{learner} 
\newcommand{\Learner}[1]{Learner}

\usepackage{wrapfig}
\usepackage{caption}

\usepackage{hyperref}
\usepackage{url}

\usepackage{color} 
\newcommand{\setup}[1]{click-bandits} 
\newcommand{\Setup}[1]{Click-bandits} 
\newcommand{\algo}[1]{UCB-S}

% \usepackage{authblk}
% \renewcommand*{\Authsep}{, }
% \renewcommand*{\Authand}{, }%<---------------remove and
% \renewcommand*{\Authands}{, }
% \renewcommand*{\Affilfont}{\normalsize}
% \renewcommand*{\Authfont}{\normalsize}

% \author[1]{Thomas Kleine Buening}
% \author[2]{Aadirupa Saha\footnote{Author is currently with Apple ML Research.}}
% \author[3]{Christos Dimitrakakis}
% \author[4]{Haifeng Xu}
% \affil[1]{University of Oslo}
% \affil[2]{Toyota Technical Institute at Chicago}
% \affil[3]{University of Neuchatel}
% \affil[4]{University of Chicago}

\author{\normalsize Thomas Kleine Buening \\ \normalsize University of Oslo \and \normalsize Aadirupa Saha\footnote{Author is currently with Apple ML Research.} \\ \normalsize TTIC \and \normalsize Christos Dimitrakakis \\ \normalsize University of Neuchatel \and \normalsize Haifeng Xu \\ \normalsize University of Chicago}

% \author{%
%     \begin{tabular}{ c c }
% \normalsize       Thomas Kleine Buening \hspace{1cm}  &    \normalsize Aadirupa Saha\footnote{Author is currently with Apple ML Research.}  \\
% \normalsize     University of Oslo  \hspace{1cm} & \normalsize Toyota Technical Institute at Chicago \\
%     \\
%        \normalsize Christos Dimitrakakis \hspace{1cm}  &  \normalsize  Haifeng Xu   \\
%     \normalsize University of Neuchatel \hspace{1cm} & \normalsize University of Chicago 
%   \end{tabular}
% }

\date{\vspace{-.1cm}}

\title{\vspace{-1cm} Bandits Meet Mechanism Design to \\ Combat Clickbait in Online Recommendation}

\begin{document}

\maketitle

\begin{abstract} 
  We study a strategic variant of the multi-armed bandit problem, which we coin the \emph{strategic click-bandit}. This model is motivated by applications in online recommendation where the choice of recommended items depends on both the click-through rates and the post-click rewards. Like in classical bandits, rewards follow a fixed unknown distribution. However, we assume that the click-rate of each arm is chosen strategically by the arm (e.g., a host on Airbnb) in order to maximize the number of times it gets clicked. The algorithm designer does not know the post-click rewards nor the arms' actions (i.e., strategically chosen click-rates) in advance, and must learn both values over time. To solve this problem, we design an incentive-aware learning algorithm, UCB-S, which achieves two goals simultaneously: (a) incentivizing desirable arm behavior under uncertainty; (b) minimizing regret by learning unknown parameters. We characterize all approximate Nash equilibria among arms under UCB-S and show a $\tilde{\mathcal{O}} (\sqrt{KT})$ regret bound uniformly in \emph{every} equilibrium. We also show that incentive-unaware algorithms generally fail to achieve low regret in the strategic click-bandit. Finally, we support our theoretical results by simulations of strategic arm behavior which confirm the effectiveness and robustness of our proposed incentive design.

\end{abstract}

%%%% For arXiv

% We study a strategic variant of the multi-armed bandit problem, which we coin the strategic click-bandit. This model is motivated by applications in online recommendation where the choice of recommended items depends on both the click-through rates and the post-click rewards. Like in classical bandits, rewards follow a fixed unknown distribution. However, we assume that the click-rate of each arm is chosen strategically by the arm (e.g., a host on Airbnb) in order to maximize the number of times it gets clicked. The algorithm designer does not know the post-click rewards nor the arms' actions (i.e., strategically chosen click-rates) in advance, and must learn both values over time. To solve this problem, we design an incentive-aware learning algorithm, UCB-S, which achieves two goals simultaneously: (a) incentivizing desirable arm behavior under uncertainty; (b) minimizing regret by learning unknown parameters. We characterize all approximate Nash equilibria among arms under UCB-S and show a $\tilde{\mathcal{O}} (\sqrt{KT})$ regret bound uniformly in every equilibrium. We also show that incentive-unaware algorithms generally fail to achieve low regret in the strategic click-bandit. Finally, we support our theoretical results by simulations of strategic arm behavior which confirm the effectiveness and robustness of our proposed incentive design.

\section{Introduction}
Recommendation platforms act as intermediaries between \emph{vendors}
and \emph{users} so as to recommend \emph{items} from the former to
the latter.  On Amazon, vendors sell physical items,
while on Youtube the recommended items are videos.  The
recommendation problem is how to select one or more items to present
to each user so that they are most likely to click on at least one of them. 

However, vendor-chosen \emph{item descriptions} are an essential aspect of the
problem that is often ignored.
These invite vendors to exaggerate their true value in the descriptions in order to increase their Click-Through-Rates (CTRs).   
As a consequence, even though online learning algorithms
can generally identify relevant items, the existence of
unrepresentative or exaggerated item descriptions remains a challenge \citep{yue2010beyond, hofmann2012caption}. These include
thumbnails or headlines that do not truly reflect the underlying item (see Figure~\ref{fig:intro})---a well-known internet
phenomenon called the \emph{clickbait} \citep{ wang2021clicks}.
While moderately increasing user click-rates through attractive descriptions is  often encouraged since it helps to increase the overall user activity, clickbait can be harmful to a platform as  it leads to bad recommendation outcomes and damage to  the platform's reputation which may exceed  the value of any additional clicks. 
% While thumbnail descriptions for internet products often appear better
% than reality, in some cases the
% description may have no relation to the underlying item at all \citep{potthast2018clickbait}.  
A key reason for such dishonest or exaggerated item deceptions is the \emph{strategic behavior} of vendors driven by their incentive
to increase their item's exposure and click probability. Thus
naturally, vendors are better off carefully choosing descriptions
so as to increase click-rates, which leads to phenomena such as clickbait.\footnote{This is possible because
  most platforms rely on vendors to provide descriptions about their
  items. For instance, the images of restaurants on Yelp, rentals on
  Airbnb, hotels on Expedia, title and thumbnails of Youtube videos, and
  descriptions of products on Amazon are all provided by the vendors.} 

\begin{figure}[t]
     \centering
         \begin{subfigure}[b]{0.24\textwidth}
         \centering
         \includegraphics[width=\textwidth]{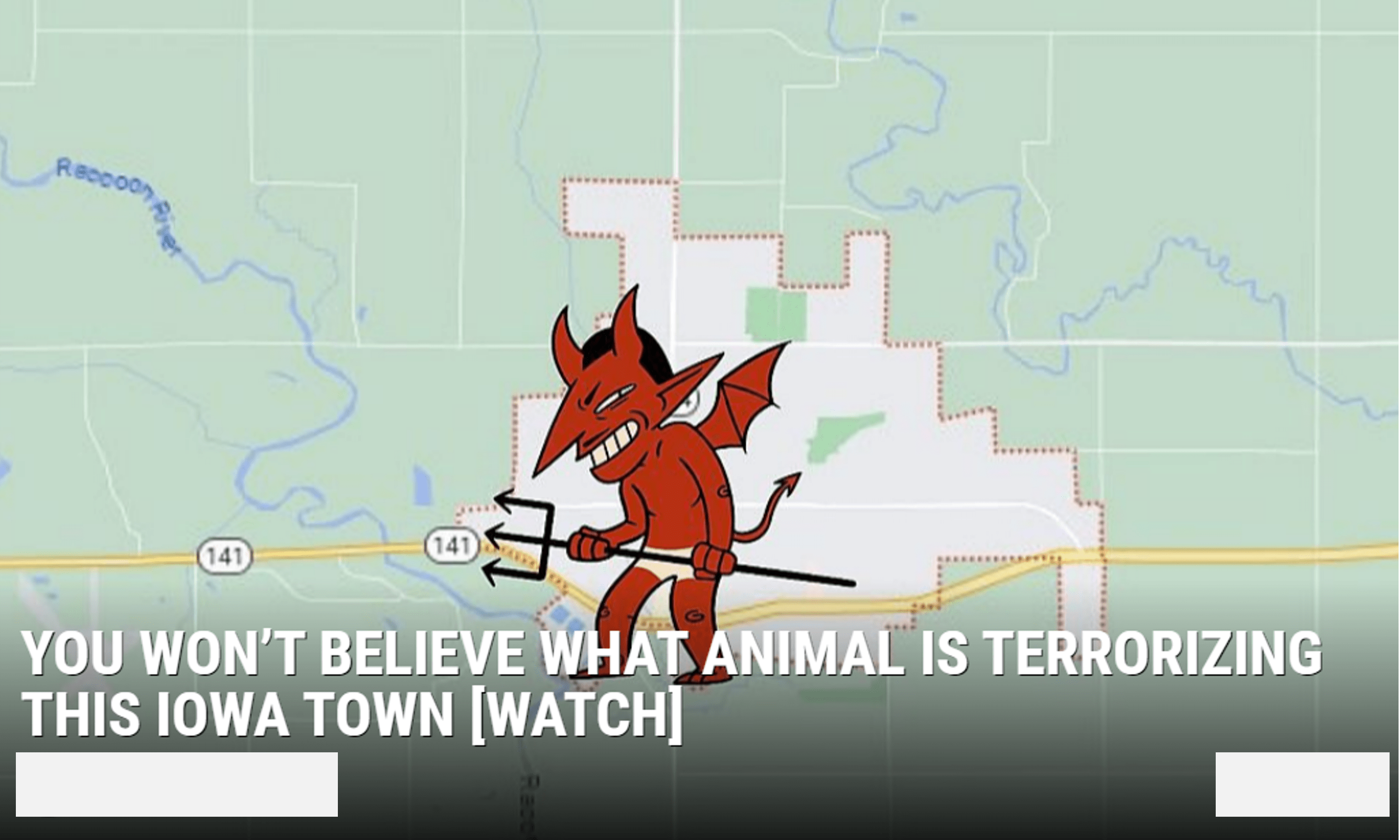}
        %  \caption{   }
     \end{subfigure}
      \hfill
    \begin{subfigure}[b]{0.24\textwidth}
         \centering
         \includegraphics[width=\textwidth]{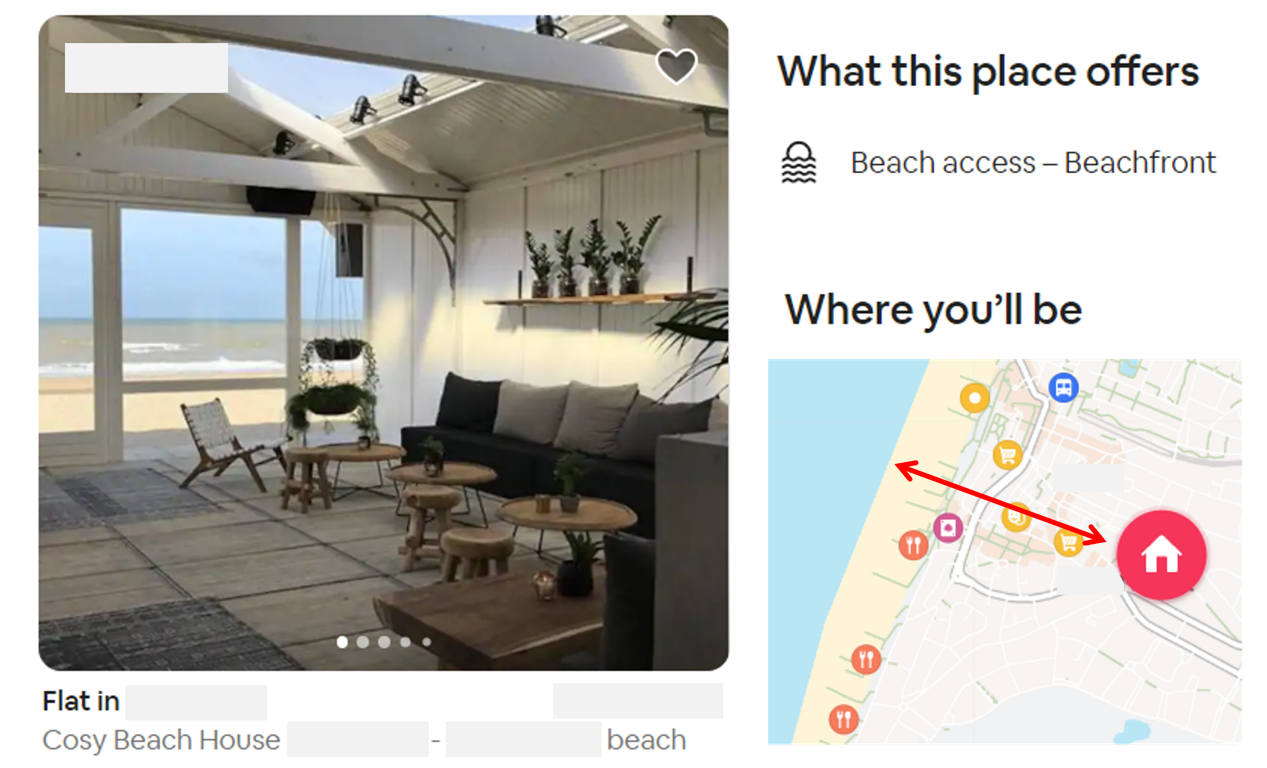}
     \end{subfigure}
     \hfill
    \begin{subfigure}[b]{0.24\textwidth}
         \centering
         \includegraphics[width=\textwidth]{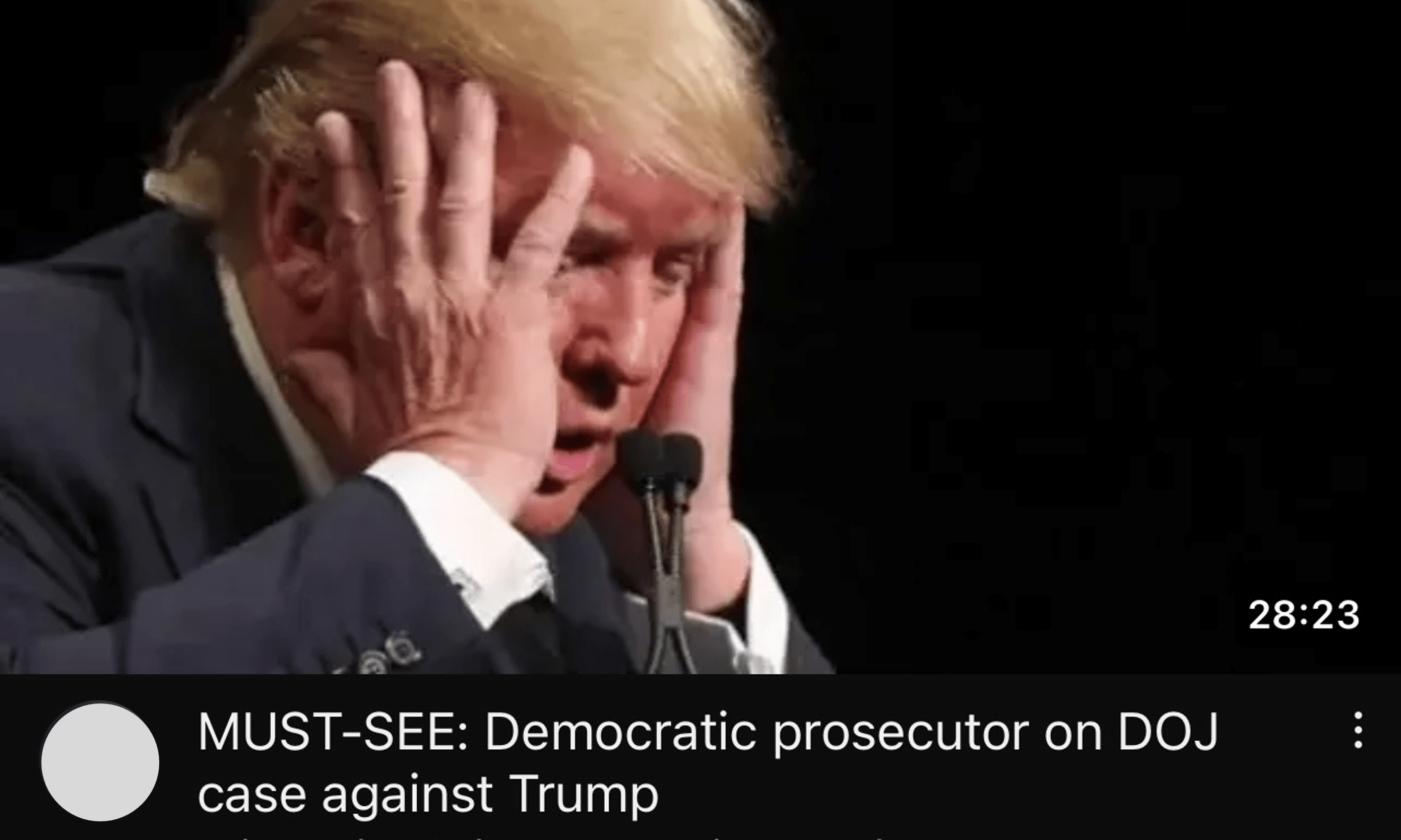}
     \end{subfigure}
     \hfill
     \begin{subfigure}[b]{0.24\textwidth}
         \centering
         \includegraphics[width=\textwidth]{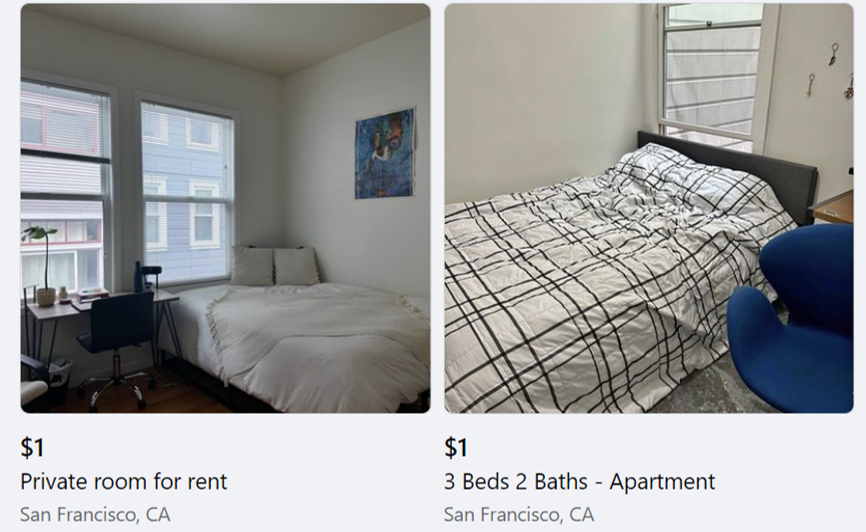}
        %  \caption{  }
     \end{subfigure}
     \caption{Examples of unrepresentative or clickbait headlines and thumbnails on Bing News, Airbnb, Youtube, and Facebook Marketplace (identifying information partly redacted). }
     \label{fig:intro}
\end{figure}

To address this issue, we take an approach that marries
\emph{mechanism design} without payments with \emph{online learning}, which are two
celebrated research areas, however, mostly studied as separate
streams. Since clickbait is fundamentally driven by vendor
incentives, we believe that the novel design of online learning policies
\emph{that can carefully align vendor incentives with the
  platform's overall objective} may help to resolve this issue from
its root.

To incorporate vendor-chosen item descriptions in this setting, we
propose and study a natural strategic variant of the classical
Multi-Armed Bandit (MAB) problem, which we call the \emph{strategic click-bandit} in order to
emphasize the strategic role that clicks and CTRs play in our setup.\footnote{We use the terms click-through-rate, click-rate, and click probability interchangeably.}  Concretely, in strategic \setup{}, each arm $i$ is
characterized by (a) a reward distribution with mean $\mu_i$, inherent
to the arm; and (b) a click probability $s_i \in [0,1]$, chosen freely by the
arm at the beginning. Since the learner (i.e., the recommendation
system) knows neither of these values in advance, it must learn them through
interaction. 
The learner's objective is represented through a general utility function $u(s_i, \mu_i)$ that depends on both click-rate and post-click rewards.

% The strategic click-bandit proceeds in two phases. In the first phase, the learner  commits to an online learning policy $M$, upon which each arm $i$ chooses a description, which results in a corresponding click-rate $s_i \in [0,1]$. The second phase proceeds in rounds. At each round $t$: (1) the algorithm $M$ pulls/recommends an arm $i_t$ based on
% observed past data; (2) arm $i_t$ is clicked with probability~$s_{i_t}$; (3) if $i_t$ is clicked, arm $i_t$ receives utility $1$ (whereas all other arms $i$ receive utility $0$) and the learner observes a post-click reward drawn from $i_t$'s reward distribution.  If $i_t$ is \emph{not} clicked, all arms receive $0$ utility and the learner does not observe any post-click rewards.  From the learner's perspective, \emph{both} $s_i$ and $\mu_i$ of each arm are unknown but can be learned from online bandit feedback, that is, whether the recommended arm is clicked and, if so, what its realized reward is.

We highlight two fundamental differences between strategic \setup{} and standard
MABs. First, each arm in the strategic click-bandit is a \emph{self-interested agent}
whose objective is to maximize the number of times it gets
clicked. 
This captures the strategic behavior of many vendors
in online recommendations, especially those who are rewarded based on
user clicks (e.g., \citet{youtube}). Second,
$s_i$ is a freely chosen \emph{action} by arm $i$, rather
than a fixed parameter of arm $i$. We believe these modeling
adjustments more realistically capture vendor behaviors in
real applications. They also lead to intriguing mechanism design
questions since the bandit algorithm not only needs to learn the
unknown parameters, but also has to carefully align
incentives to avoid undesired arm behavior. 
In summary, our contributions are:  
 \begin{enumerate}[topsep=0pt]
     \item We introduce the strategic click-bandit problem, which involves strategic arms manipulating click-rates so as to maximize their own utility, and show that \emph{incentive-unaware} algorithms generally fail to achieve low regret in the strategic click-bandit (Section~\ref{section:setting}, Proposition~\ref{prop:oracles}).  
    %  \item We show that incentive-unaware algorithms generally fail to achieve low regret in the strategic click-bandit problem (Proposition~\ref{prop:oracles}). 
     \item We design an \emph{incentive-aware} learning algorithm, \algo{}, that combines mechanism design and online learning techniques and effectively incentivizes desirable arm strategies while minimizing regret by making credible and justified threats to arms under uncertainty (Section~\ref{section:ucbs}). 
     \item We characterize the set of Nash equilibria for the arms under the \algo{} mechanism and show that every arm $i$'s strategy is $\smash{\tilde{\cO}\big(\max\big\{ \Delta_i, \sqrt{{K}/{T}}\big\}\big)}$ close to the desired strategy in equilibrium (Theorem~\ref{theorem:NE_characterization}). We then show that \algo{} achieves $\smash{\tilde \cO\big(\sqrt{KT}\big)}$ strong strategic regret (Theorem~\ref{theorem:regret_bound}) and complement this with an almost matching lower bound of $\smash{\Omega\big(\sqrt{KT} \big)}$ for weak strategic regret (Theorem~\ref{theorem:regret_lower_bound}). 
     \item We simulate strategic arm behavior through repeated interaction and gradient ascent and empirically demonstrate the effectiveness of the proposed \algo{} mechanism (Section~\ref{section:experiments}). 
 \end{enumerate}
 
% \hf{
% Is there any pictures from Apartment dotcom to put on intro? Should stick with one but convincing applications. Seems apartment.com or Amazon might be easier to motivate. Or Youtube? It should be the case that changing arm's true value is much more difficult than changing its CTR. 
% } \tkb{see e.g.\ \citet{wang2021clicks}}

\section{Related Work} 

% \paragraph{Strategic Variants of MABs.} 
The MAB problem is a well-studied online learning framework, which can
be used to model decision-making under uncertainty
\citep{lai1985asymptotically, auer2002using}. Since it inherently
involves sequential actions and the exploration-exploitation
trade-off, the MAB framework has been applied to online
recommendations \citep{li2010contextual, zong2016cascading,
  wang2017factorization} as well as a myriad of other
domains~\citep{bouneffouf2020survey}. While there is much work studying strategic machine learning
\citep[e.g.,][]{hardt2016strategic, freeman2020no,
  zhang2021incentive}, we here wish to highlight related work that
connects online learning (and specifically the MAB formalism) to
mechanism design \citep{nisan1999algorithmic}. Additional related work is discussed in Appendix~\ref{appendix:more_related_work}.

To the best of our knowledge, \citet{braverman2019multi} are the first to study a strategic variant of the MAB problem. In their model, when an arm is pulled, it receives a privately observed reward $\nu$ and chooses to pass on a portion $x$ of it to the principal, keeping $\nu - x$ for itself. The goal of the principal is then to incentivize arms to share as much reward with the principal as possible. In contrast to our work, the principal must not learn the underlying reward distribution or the arm strategies, but instead design an auction among arms based on the shared rewards. 
\citet{feng2020intrinsic} and \citet{dong2022combinatorial} study the robustness of bandit algorithms to strategic reward manipulations. However, neither work attempts to align incentives by designing mechanisms, but instead assume a limited manipulation budget. 
\citet{shin2022multi} study MABs with strategic replication in which agents can submit several arms with replicas to the platform. They design an algorithm, which separately explores the arms submitted by each agent and in doing so discourages agents from creating additional arms and replicas. 
Another line of work studies auction-design in MAB formalisms, often motivated by applications in ad auctions \citep{babaioff2009characterizing, devanur2009price, babaioff2015truthful}. In these models, in every round the auctioneer selects one advertiser's item, which is subsequently clicked or not, and the goal of the auctioneer is to incentivize advertiser's to truthfully bid their value-per-click by constructing selection and payment rules.

To the best of our knowledge, our work is the first to study the situation where the arms' strategies (as well as other parameters) are initially unobserved, and must be learned from interaction while simultaneously incentivizing arms under uncertainty without payments. As a result, while other work is usually able to \emph{precisely} incentivize certain arm strategies (e.g., truthfulness), our mechanism design and characterization of the Nash equilibria are \emph{approximate}.

% \bigskip 
 
%  \paragraph{MAB Mechanisms and Dynamic Mechanism Design.} 

% Another related line work studies MAB mechnaisms and dynamic mechanism design: \citet{babaioff2009characterizing}, \citet{devanur2009price}, \citet{babaioff2015truthful} 

% Online Mechanism Design: \citet{parkes2007online}, \citet{parkes2003mdp}

% MAB (general):   

% Principal-agent problem: 

% Mechanism Design (general)

% \citet{kandasamy2023vcg}

% Strategic Bandits: \citet{braverman2019multi}, \citet{feng2020intrinsic}, \citet{dong2022combinatorial}, \citet{gao2021auction}, \citet{shin2022multi}

% MAB Mechanisms / Dynamic Mechanism Design: \\ 
% Truthful pay-per-click auctions: \citet{babaioff2009characterizing}, \citet{gonen2007incentive}, \citet{devanur2009price}, \citet{babaioff2015truthful}

% Mechanism Design under Uncertainty: 

% Online Mechanism Design: Mechanism design with subsequently reveiled information \citet{parkes2007online}, \citet{parkes2003mdp}

% Recommendation Systems / Click-baits: \citet{wang2021clicks}

\section{The Strategic Click-Bandit Problem}%{Strategic Click-Bandits: Setup, Equilibria, and Strategic Regret} 
% Motivations, Setups, and its Mechanism Design}
\label{section:setting}

We consider a natural strategic variant of the classical MAB,
motivated by applications in online recommendation.  Unlike classical
MABs, strategic click-bandits feature decentralized interactions with
the learner and multiple self-interested arms.

% We here assume that the \learner{} does \emph{not} observe the arms' strategies $s_1, \dots, s_K$ and post-click rewards $\mu_1, \dots, \mu_K$ in advance, but only observes whether a recommended arm is clicked and if so, what its realized reward is.  
Let $[K] := \{1, \dots, K\}$ denote the set of arms, each being viewed as a strategic \emph{agent}. 
The strategic click-bandit proceeds in two phases. In the first phase, the learner  commits to an online learning policy $M$, upon which each arm $i$ chooses a description, which results in a corresponding click-rate $s_i \in [0,1]$. The second phase proceeds in rounds. At each round $t$: (1) the algorithm $M$ pulls/recommends an arm $i_t$ based on
observed past data; (2) arm $i_t$ is clicked with probability~$s_{i_t}$; (3) if $i_t$ is clicked, arm $i_t$ receives utility $1$ (whereas all other arms $i$ receive utility $0$) and the learner observes a post-click reward $r_{t, i_t} \in [0,1]$ drawn from $i_t$'s reward distribution with mean $\mu_{i_t}\in [0,1]$.  If $i_t$ is \emph{not} clicked, all arms receive $0$ utility and the learner does not observe any post-click rewards.  
The post-click mean $\mu_i$  is fixed for each arm $i$ and captures the \emph{true value} of the arm. 
From the learner's perspective, \emph{both} $s_i$ and $\mu_i$ of each arm are unknown but can be learned from online bandit feedback, that is, whether the recommended arm is clicked and, if so, what its realized reward is.
In the following, we will also refer to the online learning policy $M$ as a \emph{mechanism} to emphasize its dual role in learning and incentive design. We summarize the interaction in Model~\ref{model:1}.

% At the beginning, the \learner{} commits to a history-dependent arm selection policy $M$, which is made public, upon which
% each arm chooses a strategy $s_i \in [0,1]$, which determines the probability of the arm being clicked (i.e., the CTR) \emph{after} it is pulled (i.e., after it is recommended by the \learner{}). Then, at each round $t$, the learner selects an arm $i_t$ and the selected arm is clicked with probability $s_{i_t}$. If the arm is clicked, then $c_{t,i_t} = 1$ and a stochastic \emph{reward} $r_{t, i_t} \in [0,1]$ with mean $\mu_{i_t}$ is generated.  This mean $\mu_i \in [0,1]$   is fixed for each arm $i$ and captures the \emph{true value} of this arm. 
% % Because we only observe the reward when an arm gets clicked, we call the setting ``click-bandits''. 
% Both $\s_i$ and $\mu_i$ of each arm are unknown to the \learner{} in advance, and must be learned from observing whether a recommended arm is clicked and, if so, what its realized reward is. 

\SetAlgorithmName{Model}{}{} \RestyleAlgo{boxruled} \SetAlgoLined
\LinesNumbered
\begin{algorithm}[t] 
  \caption{ The Strategic Click-Bandit Problem  } \label{model:1}
  \Learner{} commits to algorithm $M$, which is shared with all arms \\
  Arms choose strategies $(\s_1, \dots, \s_K) \in [0,1]^K$ (unknown to $M$) \\
  \For{$t = 1, \dots, T$}{
  Algorithm $M$ selects arm $i_t \in [K]$ \\
  Arm $i_t$ is clicked with probability $\s_{i_t}$, i.e., $c_{t, i_t} \sim \mathrm{Bern}(\s_{i_t})$ \\ 
  \If{$i_t$ \textnormal{was clicked ($c_{t, i_t} = 1$)}}{ 
  {Arm $i_t$ receives utility $1$ from the click} \\
  $M$ observes post-click reward $r_{t, i_t}$ drawn from a distribution with mean $\mu_{i_t}$ }

  } 
\end{algorithm}

\subsection{Learner's Utility }

The \learner{}'s utility of selecting an arm $i$ with CTR $\s_i$ and post-click value $\mu_i$ is denoted $u(\s_i, \mu_i)$.  One example of this utility function is    $u(\s, \mu) = \s\mu$. In this case, the learner monotonically prefers large $\s$ and does not care about how much the click-rate $\s$ differs from the post-click value $\mu$. %, thereby freely allowing over-promotion.  
However, we believe that the \learner{} (e.g., a platform like Youtube or Airbnb) usually values consistency between the click-rates and the post-click values of arms. 
This could be captured by a penalty term for how much $s_i$ differs from $\mu_i$; for instance, a natural choice is $u(\s, \mu) = \s\mu - \lambda(\s-\mu)^2$ 
for some weight $\lambda > 0$. 
Such \emph{non-monotonicity} of the learner's utility $u(s_i, \mu_i)$ in $s_i$ \emph{versus} arm $i$'s monotonic preference of larger click-rates forms the fundamental tension in the strategic click-bandit model and is also the reason that  mechanism design is needed. 
We keep the above utility functions in mind as running examples, but derive our results for  a much more general class of functions satisfying the following mild regularity assumptions:
\begin{enumerate}[topsep=0pt]
    \item[\mylabel{item:a1}{(A1)}] $u \colon [0,1] \times [0,1] \to \mathbb{R}$ is $L$-Lipschitz w.r.t.\ the $\ell_1$-norm.
    \item[\mylabel{item:a2}{(A2)}] $u^*(\mu) := \max_{\s\in [0,1]} u(\s, \mu)$ is monotonically increasing. 
    \item[\mylabel{item:a3}{(A3)}] $\s^*(\mu) := \argmax_{\s\in [0,1]} u(\s, \mu)$ is $H$-Lipschitz and is bounded away from zero. 
\end{enumerate} 

% \as{Instead stating bounded away from 0, say its a natural assumption because otherwise we can always drop those arms + impractical to assume best arm has zero click rate} \tkb{An issue is that $s^*(\mu)$ is not bounded away from zero for, e.g., $u(s, \mu) = s\mu - \lambda (s-\mu)^2$, as long as $\mu$ is not bounded away from zero.}

Assumption~\ref{item:a1} bounds the loss of selecting a suboptimal arm. 
\ref{item:a2} states that, in the (idealized) situation when the arms choose click-rates so as to maximize the \learner{}'s utility $u$,
then arms with larger post-click rewards $\mu$ are always preferred. 
\ref{item:a3} then ensures that from the perspective of the learner most desired strategy $s^*(\mu)$ does not change abruptly w.r.t.\ $\mu$ and the learner wishes to incentivize non-zero click-rates. 
% Note that the running example of $u(\s,\mu) = \s\mu - \lambda(\s-\mu)^2$ is Lipschitz as well as $u^*(\mu)$ monotonically increasing and $\s^*(\mu) = (1+\frac{1}{2\lambda})\mu$ Lipschitz and bounded away from zero for $\mu > 0$.
In what follows, the function $\s^*(\mu)$ will play a central role as it describes the arm strategy that maximizes the \learner{}'s utility.  For instance, in the case of $u(\s, \mu) = \s \mu - \lambda (\s - \mu)^2$ it is given by $s^*(\mu) = (1+ \frac{1}{2\lambda}) \mu$. As such, the \learner{} will typically try to incentivize an arm with post-click reward $\mu_i$ to choose strategy $\s^*(\mu_i)$. % From hereon-out, we work under a general learner's utility $u$ satisfying  \ref{item:a1}-\ref{item:a3}. 

% \edited{drop this paragraph?} Finally, we remark that our analysis straightforwardly generalizes to the case in which  different arms correspond to different utility functions $u_i$. Nevertheless, for notational convenience, we decide to use the same $u$ among arms for the rest of the paper. Moreover, while we assume that the utility function $u$  does not depend on the arms' stochastic reward realization $r_{t, i}$, we can equivalently think of $u$ depending on $r_{t,i}$ \emph{linearly}  so that the dependency boils down to the dependence on $\mu_i$ after taking expectation. Assuming such quasi-linear utilities is standard  in most literature, and is widely known as the von Neumann--Morgenstern utility assumption in economics.

\subsection{Arms' Utility and Nash Equilibria Among Arms}

The mean post-click reward $\mu_i$ of each arm $i$ is fixed, whereas arm $i$ can freely choose the CTR~$s_i$. 
%This is because changing an arm's intrinsic value (e.g., a restaurant's serving quality or an apartment's living quality)  is usually much more difficult than changing its promotion strategy.  
In the strategic click-bandit, the objective of each arm $i$ is to maximize the number of times it gets clicked $\smash{\sum_{t=1}^T \ind_{\{i_t = i\}} \, c_{t, i}}$, which captures the objectives of vendors on internet platforms for whom user traffic typically proportionally converts to revenue.\footnote{More generally, different arms $i$ may have a different value-per-click $\nu_i$ that could as well depend on $\mu_i$ so that $\U_i(M, \s_i, \s_{-i}) = \E_M[\sum_t \ind_{\{i_t = i\}} \, c_{t, i} \,  \nu_{i}$]. This can easily be accommodated for by our model and our results readily extend to this case since each arm's goal still boils down to maximizing the number of clicks. }
We now introduce the solution concept for the game among arms defined by a mechanism $M$ and post-click rewards $\mu_1, \dots, \mu_K$, often referred to as an \emph{equilibrium}. 
Let $\s_{-i}$ denote the $K-1$ strategies of all arms except $i$.  Each arm $i$ chooses $s_i$ to maximize their \emph{expected} number of clicks $\U_i(M, \s_i, \s_{-i})$, which is a function of the mechanism~$M$, their own action $s_i$ as well as all other arms' actions $s_{-i}$. %\footnote{It is worth mentioning that all results readily extend to the  case when each arm $i$'s utility includes a value-per-click $\nu_i$ that could as well depend on $\mu_i$ so that $\U_i(M, \s_i, \s_{-i}) = \E_M[\sum_t \ind_{\{i_t = i\}} \, c_{t, i} \,  \nu_{i}$]. } 
Concretely,  
\begin{equation}
    \U_i(M, \s_i, \s_{-i}) := 
    \E_M \left[ \sum_{t=1}^T \ind_{\{i_t = i\}} \, c_{t, i}   \right] % = \E_M [n_T(i)]  \s_{i} \nu_{i} ,
\end{equation}
where the expectation is taken over the mechanism's decisions and the environment's randomness. We generally write $\bs := (\s_1, \dots, \s_K)$ to summarize a strategy profile of the arms. 
Let $\Sigma$ denote the set of probability measures over $[0,1]$. Given a \emph{mixed} strategy profile $\bsigma = (\sigma_i, \sigma_{-i}) \in \Sigma^K$, i.e., a distribution over $[0,1]^K$, arm $i$'s utility is then defined as $\U_i (M, \sigma_i, \sigma_{-i}) := \E_{\bs \sim \bsigma} [\U_i(M, \s_i, \s_{-i})]$. 

% When analyzing the strategic click-bandit we will assume that arms play according to a Nash equilibrium of the game induced by the learner's algorithm.

\begin{definition}[Nash Equilibrium]
We say that  $\bsigma = (\sigma_1, \dots, \sigma_K) \in \Sigma^K$ is a Nash equilibrium (NE) under mechanism $M$ if $\U_i (M, \sigma_i, \sigma_{-i}) \geq \U_i (M, \sigma^\prime_i, \sigma_{-i})$ for all $i \in [K]$ and strategies $\sigma^\prime_i \in \Sigma$. 
\end{definition} 
In other words, $\bsigma$ is in NE if no arm can increase its utility by \emph{unilaterally} deviating to some other strategy. If some NE $\bsigma \in \Sigma^K$ has weight one on a pure strategy profile $\bs \in [0,1]^K$, this equilibrium is said to be in pure-strategies.  
Let $\mathrm{NE}(M):=\{\bsigma \in \Sigma^K \colon \bsigma \text{ is a NE under } M\}$ denote the set of all (possibly mixed) NE under mechanism $M$. Following conventions in standard economic analysis, we assume that the  arms will form a NE in $\mathrm{NE}(M)$ in response to an algorithm $M$.\footnote{For instance, a sufficient condition for the arms to find a NE is their knowledge about how far away they are from the best arm, i.e., their optimality gap in post-click rewards $\Delta_i:= \max_{j \in [K]} \mu_j - \mu_i$.}

% \begin{definition}[$\varepsilon$-Nash Equilibrium]
% For $\varepsilon >0$, we say that strategies $\bsigma = (\sigma_1, \dots, \sigma_K)$ are an $\varepsilon$-Nash equilibrium under mechanism $M$ if $\U_i(M, \sigma_i , \sigma_{-i}) \geq \U_i(M, \sigma^\prime_i, \sigma_{-i}) - \varepsilon$ for all $i \in [K]$ and $\sigma^\prime_i \in \Sigma$. We denote the set of all $\varepsilon$-Nash equilibria under $M$ as $\NE_\varepsilon := \{ \bsigma \in \Sigma^K \colon \bsigma \text{ is } \varepsilon\text{-NE under } M\}$. 
% \end{definition}

\begin{remark}[Existence of Nash Equilibrium]
{In general, the arms' utility functions $\U_i(M, \s_i, \s_{-i})$ may be discontinuous in the arms' strategies due to their intricate dependence on the learning algorithm~$M$. It is well-known that in games with discontinuous utilities, a NE may not exist \citep{reny1999existence}.   However, for all subsequently considered algorithms we will prove the existence of a NE by either  explicitly describing the equilibrium or implicitly proving its existence.} 
% In general, a game with continuous strategy space, e.g.\ $[0,1]$, may not posses a Nash equilibrium . In our situation, this is also the case and not every mechanism necessarily induces a game with a Nash equilibrium for the arms. However, we will prove the existence of a Nash equilibrium for all considered algorithms by either  explicitly describing the Nash equilibrium    or  proving the existence in a more abstract way. 
%To this end, we will prove the existence of a Nash equilibrium for the arms under all mechanisms considered hereafter. 
% Until then, let us not be further bothered by the question of existence.  
\end{remark}

\subsection{Strategic Regret}
The \learner{}'s goal is to maximize $\sum_{t=1}^T u(\s_{i_t}, \mu_{i_t})$ which naturally depends on the arm strategies $\s_1, \dots, \s_K$. For given post-click values $\mu_1, \dots, \mu_K$, the maximal utility $u(\s^*, \mu^*)$ is then achieved for 
% \begin{align*}
%     \mu^* := \max_{i \in [K]} \mu_i  \quad \text{and} \quad \s^* := \s^*(\mu^*). 
% \end{align*}
$\mu^* := \max_{i \in [K]} \mu_i$ and $\s^* := \s^*(\mu^*)$, that is, $u(\s^*, \mu^*) = \max_{i \in [K]} \max_{\s\in [0,1]} u(\s, \mu_i)$.   
With $u(\s^*, \mu^*)$ as a benchmark, we can define the \emph{strategic regret} of a mechanism $M$ under a pure-strategy equilibrium  $\smash{\bs  \in \NE(M)}$ as 
\begin{equation}
    R_T(M, \bs) :=  \E \left[ \sum_{t=1}^T  u(\s^*, \mu^{*}) - u(\s_{i_t}, \mu_{i_t}) \right]. % \,  \text{with } \bs \in \mathrm{NE}(M).
\end{equation}
For some mixed-strategy equilibrium $\bsigma \in \NE(M)$, we then accordingly define strategic regret as $R_T(M, \bsigma) := \E_{\bs \sim \bsigma} [R_T(M, \bs)]$. 
In general, there may exist several Nash equilibria for the arms under a given mechanism $M$. We can then consider the \emph{strong strategic regret} of $M$ given by the regret under the worst-case equilibrium: 
$$R^{+}_T(M) := \max_{\bsigma \in \mathrm{NE}(M)} R_T(M, \bsigma),$$ 
or the \emph{weak strategic regret} given by the regret under the most favorable equilibrium: 
$$R^{-}_T(M) := \min_{\bsigma \in \mathrm{NE}(M)} R_T(M, \bsigma),$$ 
where $R^{-}_T(M) \leq R^{+}_T(M)$. 
The regret upper bound of our proposed algorithm, \algo{}, holds under any equilibrium in $\NE(\ucbs)$, thereby bounding \emph{strong strategic regret} (Theorem~\ref{theorem:regret_bound}). On the other hand, the proven lower bounds (Proposition~\ref{prop:oracles} and Theorem~\ref{theorem:regret_lower_bound}) hold for \emph{weak strategic regret} and thus also apply to its strong counterpart.

% \begin{equation}
%     R^{+}_T(M) := \max_{\bsigma \in \mathrm{NE}(M)} R_T(M, \bsigma),
% \end{equation}
% and its weak counterpart as the \emph{weak strategic regret} given by the performance under the most favorable Nash equilibrium under $M$:
% \begin{equation}
%     R^{-}_T(M) := \min_{\bsigma \in \mathrm{NE}(M)} R_T(M, \bsigma).
% \end{equation}

% \section{The Necessity of Incentive Design: Limitations of Incentive-Unaware Oracles \edited{Shorten this section title?}}
% \section{The Failure of Incentive-Unaware Algorithms}
\section{Limitations of Incentive-Unaware Algorithms}
We start our analysis of the strategic click-bandit problem by showing that simply finding the arm with the largest post-click reward, $\argmax_i \mu_i$, or largest utility, $\argmax_i u(\s_i, \mu_i)$, is insufficient to achieve $o(T)$ \emph{weak} strategic regret. In fact, we find that even with oracle knowledge of $\mu_1, \dots, \mu_K$ and $\s_1, \dots, \s_K$, an algorithm may suffer linear weak strategic regret if it fails to account for the arms' strategic nature. 
For such incentive-\emph{unaware} oracle algorithms, we show a $\Omega(T)$ lower bound for weak strategic regret on any non-trivial problem instance. 

Recall that $\mu^* := \max_{i \in [K]} \mu_i$  and $s^*:= \s^*(\mu^*)$ and suppose that the arm $i^* = \argmax_{i \in [K]} \mu_i$ with maximal post-click rewards is unique. 
Our negative results rely on the following problem-dependent gaps in terms of utility: $$\beta := u(\s^*, \mu^*) - u(1, \mu^*) \quad \text{and} \quad \eta := u(\s^*, \mu^*) - \max_{i \in [K] \setminus \{i^*\}} u^*(\mu_i).$$ 
Here, $\beta$ denotes the cost of the optimal arm $i^*$ deviating from the desired strategy $s^* = s^*(\mu^*)$ by playing $s_{i^*}=1$. The quantity $\eta$ denotes the gap between the maximally achievable utility $u(\s^*, \mu^*)$ and the utility of the second best arm.

\begin{proposition}\label{prop:oracles} 
% \edited{Let $\mu_1, \dots, \mu_K \in [0,1]$ such that $i^* \in \argmax_{i \in [K]} \mu_i$ is unique and recall that $\mu^* := \mu_{i^*}$ and $s^* := s^*(\mu^*)$. We define the following problem-dependent gaps in terms of utility: $$\beta := u(\s^*, \mu^*) - u(1, \mu^*) \quad \text{and} \quad \eta := u(\s^*, \mu^*) - \max_{i \in [K] \setminus \{i^*\}} u^*(\mu_i).$$ Here, $\beta$ denotes the cost of the optimal arm $i^*$ deviating from the desired strategy $s^* = s^*(\mu^*)$ by playing $s_{i^*}=1$. The quantity $\eta$ denotes the gap between the maximally achievable utility $u(\s^*, \mu^*)$ and the utility of the second best arm. }
Let $\oraclem$ be the algorithm with oracle knowledge of $\mu_1, \dots, \mu_K$ that plays $i_t = \argmax_{i \in [K]} \mu_i$ in every round $t$, whereas $\oraclesm$ is the algorithm  with oracle knowledge of $\mu_1, \dots, \mu_K$ and $\s_1,\dots, \s_K$  that always plays $i_t = {\argmax_{i \in [K]}} u(\s_i, \mu_i)$ with ties broken in favor of   the larger $\mu$. We then have
\begin{enumerate}
    \item[(i)] Under every equilibrium $\bsigma \in \NE(\oraclem)$, the $\oraclem$ suffers regret $\Omega\big(\beta  T\big)$, i.e.,  
    $$R_T^{-}(\oraclem) =\Omega \big( \beta T \big). $$ 
    
    \item[(ii)]  Under every $\bsigma \in \NE(\oraclesm)$, the $\oraclesm$ suffers regret $\Omega\big(\min \{ \beta, \eta\} T\big)$, i.e.,
    $$R_T^{-}(\oraclesm) = \Omega\big( \min\{\beta, \eta\} T\big).$$
    
\end{enumerate}
\end{proposition}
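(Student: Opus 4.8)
The plan is to exploit the fact that both oracle algorithms are \emph{deterministic} given a realized strategy profile $\bs$: each pulls a single fixed arm $w$ in all $T$ rounds, so the realized regret equals $T\,\big(u(\s^*,\mu^*) - u(\s_w,\mu_w)\big)$. It therefore suffices to lower bound this single-round gap for every profile realization that can occur under some Nash equilibrium, since the weak regret $R_T^-$ is the minimum over all equilibria.

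For part (i), I would note that the $\oraclem$ ignores click-rates and always pulls the unique $i^* = \argmax_i \mu_i$. Hence $i^*$ is pulled in all $T$ rounds and every other arm in none, so $i^*$'s utility is its expected number of clicks $\s_{i^*} T$ (strictly increasing in $\s_{i^*}$) while all other arms receive $0$ regardless of their choice. Thus in \emph{every} NE the unique best response of $i^*$ is $\s_{i^*}=1$ (this stays the unique best response under mixing, the expected utility being $T\,\E_{\s\sim\sigma_{i^*}}[\s]$), giving per-round gap $u(\s^*,\mu^*) - u(1,\mu^*) = \beta$ and hence $R_T^-(\oraclem) = \beta T$.

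Part (ii) is the substantive case, and the key structural observation is that by \ref{item:a2} every competitor $j \neq i^*$ generates oracle-value at most $u(\s_j,\mu_j) \le u^*(\mu_j) \le \max_{j\neq i^*} u^*(\mu_j) = u(\s^*,\mu^*) - \eta$, no matter what it plays. Writing $V_{-i^*} := \max_{j\neq i^*} u(\s_j,\mu_j)$, this gives $V_{-i^*} \le u(\s^*,\mu^*)-\eta$ almost surely. I would then analyze arm $i^*$'s best response against any (possibly mixed) profile $\sigma_{-i^*}$: playing $a$ it is selected (ties favoring $i^*$) exactly when $u(a,\mu^*) \ge V_{-i^*}$ and then earns $a\,T$ clicks, so it maximizes $g(a) := a\,\Pr[u(a,\mu^*) \ge V_{-i^*}]$. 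Since $g(\s^*)=\s^*$ and any $a<\s^*$ is dominated by $\s^*$ (larger click-rate and win-probability $1$), every best response lies in $[\s^*,1]$; moreover on the set where $u(a,\mu^*) \ge u(\s^*,\mu^*)-\eta$ the win-probability is exactly $1$ (because $V_{-i^*} \le u(\s^*,\mu^*)-\eta$), so $g(a)=a$ there and $i^*$ is driven to the largest such $a$.

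This yields the dichotomy that drives the bound. If $u(1,\mu^*) \ge u(\s^*,\mu^*)-\eta$, i.e.\ $\beta \le \eta$, then $a=1$ wins surely and is the unique maximizer of $g$, so in every NE $i^*$ plays $\s_{i^*}=1$ and wins, giving per-round gap $\beta = \min\{\beta,\eta\}$. Otherwise $\beta>\eta$, and every best response $a^*$ satisfies $u(a^*,\mu^*) \le u(\s^*,\mu^*)-\eta$ (either $a^*$ is the boundary point where $u$ equals $u(\s^*,\mu^*)-\eta$, or it lies strictly beyond it with an even smaller value); together with the fact that whenever a competitor wins the gap is already $\ge \eta$, this forces per-round gap $\ge \eta = \min\{\beta,\eta\}$ in every realization of every NE. In both cases $R_T^-(\oraclesm) \ge \min\{\beta,\eta\}\,T$. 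I expect the main obstacle to be precisely this mixed-strategy best-response analysis for $i^*$—arguing that no equilibrium can let $i^*$ win while playing close to $\s^*$ without $i^*$ wanting to deviate to a larger click-rate—rather than the two oracle-value bounds, which follow directly from \ref{item:a2} and the definitions of $\beta$ and $\eta$.
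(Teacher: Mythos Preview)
Your proposal is correct and takes essentially the same approach as the paper: part (i) is identical, and for part (ii) both you and the paper split into the cases $\beta\le\eta$ versus $\beta>\eta$ and analyze $i^*$'s best response against the competitor bound $\max_{j\neq i^*}u(s_j,\mu_j)\le u(s^*,\mu^*)-\eta$, with your boundary point $\bar a$ coinciding with the paper's $s'$. The only minor difference is that in the $\beta>\eta$ case the paper additionally invokes $j^*$'s best response to pin down $i^*$'s NE action as exactly $s'$, whereas you argue directly that any best response of $i^*$ has oracle value at most $u(s^*,\mu^*)-\eta$---a slight simplification that still yields the same regret bound without having to exclude the cases where $i^*$ plays beyond $\bar a$ or a competitor wins.
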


\begin{proof}[Proof Sketch] 
\textit{(i)}: We show that $\s = 1$ is a strictly dominant strategy for arm $i^*$ under the $\oraclem$. This implies that arm $i^*$ plays $\s_{i^*} = 1$ with probability one in every NE under the $\oraclem$. The claimed lower bound then follows from bounding the instantaneous regret per round from below by $\beta$. % using $\s^*(\mu_{i^*}) \leq c$.
\textit{(ii)}: Let $j^* \in \argmax_{i \neq i^*} \mu_i$. It can be seen that in any NE, arm $i^*$ will play the largest $\s \in [0,1]$ such that $u(\s, \mu_{i^*}) \geq u(\s_{j^*}, \mu_{j^*})$. We then show that either $\s_{i^*} = 1$ or $u(\s_{i^*}, \mu_{i^*}) = u(\s^*(\mu_{j^*}), \mu_{j^*})$. Once again this allows us to lower bound the regret per round by $\min\{ \beta, \eta\}$.  
\end{proof} 

As a concrete example of the failure of the $\oraclem$ and the $\oraclesm$, let us consider the running example of $u(\s, \mu) = \s \mu - \lambda ( \s - \mu)^2$. In this case, letting $\lambda = 5$ and $\mu_{i^*} = 0.8$ and $\mu_i \leq 0.7$ for $i \neq i^*$, we get $\beta \geq 0.1$ and $\eta \geq 0.1$ so that both oracles suffer $\Omega(T)$ regret in every equilibrium. 
% the $\oraclem$ and $\oraclesm$ suffer $\Omega(T)$ regret in every equilibrium.

\section{No-Regret Incentive-Aware Learning: \algo{}} \label{section:ucbs} % Learning Mechanism?

The results of Proposition~\ref{prop:oracles} suggest that any incentive-unaware learning algorithm that is oblivious to the strategic nature of the arms will generally fail to achieve low regret. In particular, ``unconditional'' selection of any arm will likely result in undesirable equilibria among arms. % (as in Proposition~\ref{prop:oracles}). 
For these reasons, we deploy a conceptually simple screening idea, which threatens arms with elimination when deviating from the desired strategies. % The challenge is that the arm strategies $\s_1, \dots, \s_K$ are unknown to the mechanism ahead of time and must be learned through repeated interaction. The main question is then to what degree the mechanism's uncertainty can be exploited by strategic arms and whether sufficiently credible threats to the arms can be made by a learning mechanism.  

Let denote $n_t(i)$ be the number of times up to (and including) round $t$ that arm $i$ was selected by the \learner{}, and let $m_t(i)$ denote the number of times post-click rewards were observed for arm $i$ up to (and including) round $t$. Let $\smash{\hat \s^t_i}$ be the average observed click-rate and $\smash{\hat \mu^t_i}$ the average observed post-click reward for arm $i$. We then define the pessimistic and optimistic estimates of $\s_i$ and $\mu_i$ as 
\begin{alignat*}{2}
    \underline \s_i^{t} & = \hat \s_i^t -  \sqrt{2\log(T) / n_{t}(i)}, \qquad && \overline \s_i^{t} = \hat \s_i^t + \sqrt{2\log(T) / n_{t}(i)}, \\ 
    \smash{\underline{\mu}}_{i}^t & = \hat{\mu}_i^t - \sqrt{2\log(T) / m_t( i)},  \qquad && \overline \mu_i^t = \hat{\mu}_i^t + \sqrt{2\log(T) / m_t(i)}. 
\end{alignat*}
where $\underline \s_i^t = -\infty $ and $\overline \s_i^t = +\infty$ for $n_t(i) = 0$ as well as $\underline \mu_i^t = -\infty$ and $\overline \mu_i^t = +\infty$ for $m_t(i) = 0$.

\setcounter{algocf}{0}
\SetAlgorithmName{Mechanism}{}{} 
\RestyleAlgo{ruled}% Change from the 'boxed' style to 'ruled'
\begin{algorithm}[t]
\setstretch{1.2}
\caption{UCB with Screening (\algo{})}
\label{alg:ucb_screening}
\textbf{initialize:} $A_0 = [K]$ \\ 
\For{$t=1,\dots, T$}{
\If{$A_{t-1} \neq \emptyset$}{
Select $i_t \in \argmax_{i \in A_{t-1}} \overline \mu_i^{t-1}$ \label{line:ucb_selection}
% or \overline u_i = max_d, \mu u(\s, \mu) 
}
\Else{
Select $i_t$ uniformly at random from $[K]$  
}
Arm $i_t$ is clicked with probability $\s_{i_t}$, i.e., $c_{t, i_t} \sim \mathrm{Bern}(\s_{i_t})$ \\ 
\If{$i_t$ \textnormal{was clicked ($c_{t, i_t} = 1$)}}{ 
  Observe post-click reward $r_{t, i_t}$}
\If{ $\overline \s_{i_t}^t  < \min_{\mu \in [\smash{\underline{\mu}}_{i_t}^t, \overline \mu_{i_t}^t]} \s^*(\mu)$ \textnormal{ or } $\underline \s_{i_t}^t >  \max_{\mu \in [\smash{\underline{\mu}}_{i_t}^t, \overline \mu_{i_t}^t]} \s^*(\mu)$ \label{line:screening} } 
{
\vspace{0.05cm}
Ignore arm $i_t$ in future rounds: $A_t \leftarrow A_{t-1} \setminus \{i_t\}$
% $A_t \leftarrow A_{t-1} \setminus \{i \in [K] \colon \overline \s_i^t  < \min_{\mu \in [\underline{\mu}^t_i, \overline \mu_i^t]} \s^*(\mu) \text{ or } \underline \s_i^t >  \max_{\mu \in [\underline{\mu}^t_i, \overline \mu_i^t]} \s^*(\mu)\}$
}
}
\end{algorithm} 

In every round, \algo{} (Mechanism \ref{alg:ucb_screening}) selects arms optimistically according to their post-click rewards and subsequently observes if the arm is clicked, i.e., $c_{t, i_t}$, and, if so, a post-click reward $r_{t, i_t}$.   
However, if an arm's click-rate $\s_i$ is detected to be different from the learner's desired arm strategy $\s^*(\mu_i)$, the arm is eliminated forever, expressed by the screening rule in line~\ref{line:screening}:
\begin{align*}
    \overline \s_{i_t}^t  < \min_{\mu \in [\smash{\underline{\mu}}_{i_t}^t, \overline \mu_{i_t}^t]} \s^*(\mu) \quad \text{ or } \quad \underline \s_{i_t}^t >  \max_{\mu \in [\smash{\underline{\mu}}_{i_t}^t, \overline \mu_{i_t}^t]} \s^*(\mu) .
\end{align*}
The only exception is when all arms have been eliminated. Then, \algo{} plays them all uniformly for the remaining rounds. 
To ensure that the elimination of an arm is credible and justified with high probability, we leverage confidence bounds on $\s_i$ and $\mu_i$. 
More precisely, if an arm is truthful and chooses $\s_i = \s^*(\mu_i)$, then with probability $1-\nicefrac{1}{T^2}$ it will not be eliminated by the screening rule. % in Line~\ref{line:screening}. 

As a prelude to the analysis of the \algo{} mechanism, we begin by showing that there always exists a NE among the arms under \algo{}. As mentioned briefly in Section~\ref{section:setting}, the existence of a NE among the arms is not guaranteed under an arbitrary mechanism due to the arms' continuous strategy space and possibly discontinuous utility function. 
\begin{lemma}\label{lemma:existence_NE_UCBS}
    % For all utility functions satisfying (A1)-(A3) and 
    For any post-click rewards $\mu_1, \dots, \mu_K$, there always exists a (possibly mixed) Nash equilibrium for the arms under the \algo{} mechanism. \end{lemma}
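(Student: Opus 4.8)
The plan is to show that, although the remark warns that arm utilities can in general be discontinuous, under $\ucbs$ each utility $\U_i$ is in fact \emph{continuous} on the profile space, so that existence of a (mixed) equilibrium follows from the classical theory of continuous games rather than from the more delicate machinery of \citet{reny1999existence}. Since each arm's action set $[0,1]$ is compact and $\Sigma$ (the probability measures on $[0,1]$) is weak-$*$ compact, it suffices to prove that the pure-profile payoff $\bs \mapsto \U_i(\ucbs, \bs)$ is continuous on $[0,1]^K$; continuity of the mixed extension and hence existence then follow by Glicksberg's theorem.

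To prove continuity I would first set up a coupling that turns the entire run of $\ucbs$ into a deterministic function of $\bs$ and an exogenous source of randomness. For each arm $i$, draw an \iid sequence of pairs $(U_{i,k}, R_{i,k})_{k \ge 1}$, where $U_{i,k} \sim \mathrm{Unif}[0,1]$ and $R_{i,k}$ is an independent draw from arm $i$'s reward distribution, with $k$ indexing the $k$-th time arm $i$ is \emph{selected}. Running the mechanism against $\bs$, at the $k$-th selection of arm $i$ we declare a click iff $U_{i,k} \le \s_i$ and, if clicked, record the (fixed) value $R_{i,k}$. Because every decision of $\ucbs$—the optimistic index in line~\ref{line:ucb_selection}, the confidence intervals, and the screening test in line~\ref{line:screening}—is a deterministic function of the observed clicks and rewards, the realized trajectory, and in particular the click count $N_i(\bs, \omega) := \sum_{t} \ind_{\{i_t = i\}} c_{t,i}$, is a deterministic function of $\bs$ and the coupling variables $\omega = (U_{i,k}, R_{i,k})$, with $0 \le N_i \le T$.

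The crucial point is that $\bs$ enters the trajectory \emph{only} through the click indicators $\ind_{\{U_{i,k} \le \s_i\}}$: the reward values themselves do not depend on $\bs$, and whether a reward is observed is itself governed by a click. Fix a profile $\bs^0$ and condition on $\omega$. Over $T$ rounds only the first $T$ coins of each arm can ever be consulted, so the trajectory depends on $\bs$ through finitely many indicators, whose values change only when some coordinate $\s_i$ crosses one of the finitely many thresholds $\{U_{i,k} : k \le T\}$. Since the $U_{i,k}$ have a continuous law, with probability one none of them equals $\s^0_i$ for any $i$; off this null event there is a neighborhood of $\bs^0$ on which every relevant indicator, and therefore the whole trajectory and $N_i$, is constant. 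Hence $\bs \mapsto N_i(\bs, \omega)$ is continuous (indeed locally constant) at $\bs^0$ for almost every $\omega$. As $0 \le N_i \le T$, bounded convergence gives continuity of $\U_i(\ucbs, \bs) = \E_\omega[N_i(\bs, \omega)]$ at $\bs^0$, and since $\bs^0$ was arbitrary, $\U_i(\ucbs, \cdot)$ is continuous on $[0,1]^K$.

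With continuity established, the mixed extension $(\sigma_1,\dots,\sigma_K) \mapsto \U_i(\ucbs, \sigma_i, \sigma_{-i}) = \int \U_i(\ucbs, \bs)\, \mathrm{d}\bsigma(\bs)$ is continuous in the weak-$*$ topology on the compact space $\Sigma^K$, and Glicksberg's theorem then guarantees a (mixed) Nash equilibrium. I expect the main obstacle to be the third paragraph: one must argue carefully that every downstream decision of the mechanism—selections and eliminations, which cascade through the round-by-round dynamics and through the empirical quantities $\hat\s^t_i, \hat\mu^t_i$—depends on $\bs$ only through finitely many click coins, so that the discontinuity set of the trajectory is contained in a finite union of threshold hyperplanes and is therefore $\omega$-null at any fixed $\bs^0$. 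The smoothing obtained by integrating out the continuous click coins is exactly what rescues continuity despite the threshold (and hence pointwise discontinuous) nature of the screening rule.
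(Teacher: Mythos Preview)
Your approach is correct and follows the same high-level outline as the paper—reduce to Glicksberg's theorem by proving that each $\U_i(\ucbs,\cdot)$ is continuous on $[0,1]^K$—but your continuity argument is genuinely different and, in my view, cleaner. The paper proves continuity ``by hand'': it shows by induction on $t$ that the law $\P_\bs(\hist_t\in\cdot)$ of the entire history is continuous in $\bs$, tracking explicitly how $\bs$ enters through the click probabilities, through whether a post-click reward is revealed, and through the screening rule, and then combining these via the law of total probability. Your coupling argument sidesteps all of this bookkeeping: by externalising the randomness you show that the trajectory is almost surely \emph{locally constant} in $\bs$, and then bounded convergence does the rest. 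This is more conceptual and makes transparent exactly why threshold rules like the screening test do not spoil continuity of the expected payoff—the discontinuity set in $\bs$ is a finite union of hyperplanes $\{\s_i=U_{i,k}\}$, which has measure zero under the click coins.

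One small gap to close: you assert that ``every decision of $\ucbs$ \dots\ is a deterministic function of the observed clicks and rewards'', but the mechanism also carries internal randomness that is not captured by your $(U_{i,k},R_{i,k})$—namely the uniform selection from $[K]$ when $A_{t-1}=\emptyset$, and whatever tie-breaking rule is used in the $\argmax$ of line~\ref{line:ucb_selection}. Simply enlarge $\omega$ to include these auxiliary coins; since they are drawn independently of $\bs$, your locally-constant argument goes through unchanged. With that fix the proof is complete.
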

% \begin{proof}[Proof Sketch]
%     We use Glickberg's theorem~\citep{glicksberg1952further} and show that each arm's utility is continuous in the strategies $\bs \in [0,1]^K$. % Here, the primary challenge is to prove that the expected number of times an arm $i$ is played by \algo{}~is continuous in $\s$. 
    
    % ... The main challenge is to show the continuity of the surplus $S_i(M, \bs)$ in the strategies $\bs \in [0,1]^K$ for all $i \in [K]$ with $M = \text{\algo{}}$. Since we can write $S_i(M, \bs) = \E_\bs [n_T(i)] \s_i \tau_i$, the question is whether $\E_\bs[n_T(i)]$ is continuous in $\bs$. In general, we see that the choice of the arms' strategies not only influences \algo{}'s screening rule in line~\ref{line:screening}, but also the UCB-type selection in line~\ref{line:ucb_selection}, since follow-up rewards are only observed when an item is clicked. We then prove the continuity of $\E_\bs[n_T(i)]$ by induction over $t \in [T]$, where we leverage that $\E_\bs[n_{t+1}(i)] = \E_\bs [n_t(i)] + \P_\bs (i_{t+1} = i)$ so that $\P_\bs (i_{t+1} = i)$ as a function of $\bs$ is the object of interest. A detailed proof can be found in Appendix~\ref{appendix:proofs_NE_char}.   
% \end{proof} 

\subsection{Characterizing the Nash Equilibria under \algo{}}\label{subsection:char_NE} 

We now approximately characterize all NE for the arms under the \algo{} mechanism. In order to prove a regret upper bound for \algo{}, it will be key to ensure that each arm $i$ plays a strategy $\s_i$ which is sufficiently close to the desired strategy $\s^*(\mu_i)$ (i.e., the strategy that maximizes the learner's utility). 
This is particularly important for arms $i^*$ with maximal post-click rewards $\mu_{i^*} = \max_{i \in [K]} \mu_i$. If such arms $i^*$ were to deviate substantially from $\s^*(\mu_{i^*})$, e.g., by a constant amount, the learner would be forced to suffer constant regret even when selecting arms with maximal post-click rewards, making it impossible to achieve sublinear regret. 

In the following, we show that under the \algo{} mechanism every NE is such that the strategies of arms with maximal post-click rewards deviate from the desired strategies by at most $\smash{\tilde \cO(\sqrt{{K}/{T}})}$. We then also show that for suboptimal arms the difference between each arm $i$'s strategy $\s_i$ and the desired strategy $\s^*(\mu_i)$ is governed by their optimality gap in post-click rewards, given by $\Delta_i := \mu^* - \mu_i$. Recall that $H$ denotes the Lipschitz constant of $\s^*(\mu)$.

\begin{theorem}\label{theorem:NE_characterization}
% Let $[K^*] := \{ i \in [K] : \mu_i = \max_{j\in [K]} \mu_j \}$ denote the set of optimal arms in terms of $\mu$. 
%Let $u$ be any utility functions satisfying \ref{item:a1}-\ref{item:a3}. 
For all $\bs \in \supp(\bsigma)$ with $\bsigma \in \NE(\ucbs)$ and all $i \in [K]$: 
\begin{align*}
    \s_i = \s^*(\mu_{i}) + \cO\left( H \cdot  \max\left\{\Delta_i , \sqrt{\frac{K \log(T)}{T}}  \right\} \right) . 
\end{align*}

In particular, for all arms $i^* \in [K]$ with $\Delta_{i^*} = 0$, i.e., maximal post-click rewards: 
\begin{align*}
    \s_{i^*} = \s^*(\mu_{i^*}) + \cO \left(H \sqrt{\frac{K \log(T)}{T}}\right). 
\end{align*} 

% \begin{enumerate}
%     % \item[(i)] For every $i \in [K]$, we have $\s_{i} \geq \s(\mu_i)$. 
%     \item[(i)] For every $i^* \in [K^*]$, we have 
%     $|\s_{i^*} - \s^*(\mu_{i^*})| \leq \cO\Big({H} \sqrt{\frac{K\log(T)}{T \s^*(\mu_{i^*})^2}} \Big)$. 
%     % $| \s_{i^*} - \s^*(\mu_{i^*})| \leq \cO\Big(\frac{H}{\s^*(\mu_{i^*})} \sqrt{\frac{K\log(T)}{T}} \Big)$. 
%     \item[(ii)] For every $i \notin [K^*]$, we have 
%     $|\s_i - \s^*(\mu_i)| \leq c H \Delta_i$, where $c \in [0, c_0]$ and $c_0$ is a universal constant that is independent of all problem-dependent quantities.    
%     % $|\s_i - \s^*(\mu_i)| \leq c H \Delta_{i}$ for some universal constant $c \geq 0$. 
% \end{enumerate}
% \tkb{Maybe instead write: For all $i \in [K]$: 
% \begin{align*}
%     \s_i = \s^*(\mu_{i}) + \cO\left( H \cdot  \max\left\{\Delta_i , \sqrt{\frac{K \log(T)}{T \, \s^*(\mu_i)^2}}  \right\} \right)
% \end{align*}
% In particular, this means that for all optimal arms $i \in [K^*]$: 
% \begin{align*}
%     \s_{i^*} = \s^*(\mu_{i^*}) + \cO \left(H \sqrt{\frac{K \log(T)}{T \, \s^*(\mu_{i^*})}}\right). 
% \end{align*} 
% Moreover, note that the bound $\cO(H\Delta_i)$ is meaningful for non-constant optimality gaps $\Delta_i$. It is possible to refine  In the regret analysis of $\ucbs$ we will however see that arms will constant optimality gaps  }
\end{theorem}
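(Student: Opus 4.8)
The plan is to work throughout on the \emph{clean event} $\mathcal{E}$ on which every confidence interval traps its parameter, i.e.\ $\s_i \in [\underline\s_i^t, \overline\s_i^t]$ and $\mu_i \in [\underline\mu_i^t, \overline\mu_i^t]$ for all arms $i$ and rounds $t$; by Hoeffding's inequality and a union bound over the arm--round pairs, the radii $\sqrt{2\log(T)/\cdot}$ give $\Pr(\mathcal{E}) \ge 1 - \cO(K/T)$. On $\mathcal{E}$ a \emph{compliant} arm playing $\s_i = \s^*(\mu_i)$ is never screened, since $\s^*(\mu_i)$ then lies in the band $[\min_{\mu} \s^*(\mu), \max_\mu \s^*(\mu)]$ taken over its $\mu$-confidence interval. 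Conversely, I would first prove a \emph{screening-resolution lemma}: on $\mathcal{E}$, after arm $i$ has been selected $n$ times (so that $m \approx \s_i n$ rewards have been observed, using that $\s^*$, hence $\s_i$ for any non-screened arm, is bounded away from $0$ by \ref{item:a3}), the test in line~\ref{line:screening} fires whenever $|\s_i - \s^*(\mu_i)| \gtrsim (1+H)\sqrt{\log(T)/n}$ and the arm survives whenever $|\s_i - \s^*(\mu_i)| \lesssim (1+H)\sqrt{\log(T)/n}$. The factor $H$ enters because, by $H$-Lipschitzness of $\s^*$, the target band has half-width $\le H\sqrt{2\log(T)/m}$.

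Next I would pin down the number of selections $n_i$ of each arm in equilibrium, again on $\mathcal{E}$. Using the standard optimistic-selection argument for the UCB index $\overline\mu^{t}$ on the surviving set, any arm $i$ with optimality gap $\Delta_i$ is selected at most $\cO(\log(T)/\Delta_i^2)$ times before its optimistic $\mu$-estimate drops below that of an optimal arm (after which it is no longer chosen), while an optimal arm $i^*$ that is never screened is selected at least $\Omega(T/K)$ times, being among the most-pulled surviving arms. Plugging these counts into the resolution lemma, the deviation that keeps an arm alive is of order $(1+H)\sqrt{\log(T)/n_i}$, namely $\cO(H\Delta_i)$ for a gapped arm and $\cO(H\sqrt{K\log(T)/T})$ for an (almost-)optimal arm; taking the maximum with the floor $\sqrt{K\log(T)/T}$ absorbs near-optimal arms with $\Delta_i \lesssim \sqrt{K\log(T)/T}$, which behave like optimal ones.

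It then remains to turn these structural facts into a statement about best responses, which is where the genuinely game-theoretic work lies. For the \emph{lower} bound $\s_i \ge \s^*(\mu_i) - \cO(\cdot)$ the argument is short: decreasing $\s_i$ below $\s^*(\mu_i)$ strictly lowers the click-per-selection rate and can only raise the screening risk, so it is dominated and no arm underbids by more than the resolution. The \emph{upper} bound is the crux. Fixing the opponents at their equilibrium strategies, I would compare two options for arm $i$: (A) inflate $\s_i$ beyond $\s^*(\mu_i) + C(1+H)\sqrt{\log(T)/n_i}$, which by the resolution lemma forces elimination after only $\cO(\log(T)/\delta^2)$ selections and caps its clicks at roughly that many; versus (B) play just inside the tolerance, surviving and collecting $\Omega(n_i \cdot \s^*(\mu_i))$ clicks with $\s^*(\mu_i)$ bounded away from $0$. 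Since $\log(T)/\delta^2 \ll n_i$ precisely when $\delta$ exceeds the stated order, (B) yields strictly more expected clicks, so (A) cannot occur in equilibrium.

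The main obstacle I anticipate is the post-elimination clicks in option (A): once arm $i$ is screened it earns clicks only during the uniform-exploration phase, which is entered only when \emph{all} arms are eliminated. I would rule out that this rescues a large deviation by noting that on $\mathcal{E}$ any compliant opponent never leaves the active set, so whenever some opponent plays near-compliantly the uniform phase contributes negligibly to $\U_i$; the degenerate case in which every arm is screened can be excluded by exhibiting a profitable compliant deviation, the same reasoning that underlies Lemma~\ref{lemma:existence_NE_UCBS}. Carefully quantifying the $\Omega(T/K)$ pull lower bound for the optimal arm when several opponents may themselves deviate, so that the surviving set and hence the selection counts are endogenous, is the other delicate point and is what ultimately produces the $\sqrt{K}$ factor.
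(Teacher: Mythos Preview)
Your high-level strategy matches the paper's: a screening-resolution statement (their Lemma~\ref{lemma:bound_on_n_nu}), pull-count bounds (their Lemma~\ref{lemma:lower_bound_on_n}), control of the uniform-exploration phase (their Lemma~\ref{lemma:bound_on_nu}), and a best-response comparison. But two of your steps use inequalities in the wrong direction or too casually, and these are not cosmetic.

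\textbf{Selection counts for suboptimal arms.} You invoke only the \emph{upper} bound $n_i = \cO(\log(T)/\Delta_i^2)$ from the standard UCB argument. Plugging an upper bound on $n_i$ into $\sqrt{\log(T)/n_i}$ gives a \emph{lower} bound on the survivable deviation, not the upper bound you claim. Your later comparison of (A) versus (B) needs the opposite: to show that a compliant (or near-compliant) suboptimal arm collects $\Omega(n_i\cdot\s^*(\mu_i))$ clicks, you need $n_i = \Omega(\log(T)/\Delta_i^2)$. That is an instance-dependent \emph{lower} bound on UCB's pull count and does not follow from optimism; the paper imports it from the MAB lower-bound literature (their Lemma~\ref{lemma:lower_bound_on_n}, via Lemma~16.3 of \citet{lattimore2020bandit}). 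Without it, the inequality ``$\log(T)/\delta^2 \ll n_i$ precisely when $\delta$ exceeds the stated order'' has no force for gapped arms.

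\textbf{The lower side $\s_i \ge \s^*(\mu_i)$.} Your one-liner (``lowers click-per-selection and raises screening risk, hence dominated'') skips the real difficulty. In this model, when arm $i$ is not clicked the $\mu$-confidences do not move and UCB selects it again; thus decreasing $\s_i$ \emph{increases} $n_T(i)$, and it is not obvious that the product $\E[n_T(i)]\,\s_i = \E[m_T(i)]$ goes down. The paper handles this by introducing click-phases $\eta_1,\eta_2,\dots$ and showing that, conditional on survival, the identity $i_{\eta_k}$ is independent of $\s_i$ while the number of such phases is strictly increasing in $\s_i$ (their Lemma~\ref{lemma:s_bigger_smu}). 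That is the actual content of the domination claim; your sketch does not supply it.

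Your treatment of the uniform phase is on the right track and is essentially what the paper proves in Lemma~\ref{lemma:bound_on_nu}, but note that ``exhibit a profitable compliant deviation'' must be made quantitative: the paper constructs a specific $\s'_i$ with $\s'_i-\s^*(\mu_i)$ calibrated via a sample-complexity \emph{lower} bound for the screening test (their Lemma~\ref{lemma:sample_lower_bound}) and compares utilities term by term to get $T-\E[\tau]=\cO(1)$. Simply saying a compliant deviation exists does not yet rule out, e.g., $\E[\tau]=T-\sqrt{T}$.
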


% \begin{remark}
% If the optimal arm is unique, we have $\s_{i^*} = \s(\mu_{i^*}) + \cO(\sqrt{\log(T) / T \s(\mu_{i^*})^2})$. 
% \end{remark} 

% \begin{proof}[Proof Sketch]
% To characterize the strategy profiles in $\NE(\ucbs)$ we are going to rely on the best response property of the Nash equilibrium. More precisely, for $\bsigma \in \NE(\ucbs)$ any $s_i \in \supp(\sigma_i)$ must be a best response to $\sigma_{-i}$, i.e., for all $\s_i' \in [0,1]$:
% $\U_i(\ucbs, \s_i, \sigma_{-i}) \geq \U_i (\ucbs, \s_i', \sigma_{-i}).$ 

% Recall that arm $i$'s utility can be written as $\U_i(\ucbs, \s_i, \sigma_{-i}) = \E_{(\s_i, \sigma_{-i})}[n_T(i)] \s_i$. First, we show that $\s_i \geq \s^*(\mu_i)$ based on fundamental properties of \algo{} and the strategic click-bandit model. We then proceed by establishing a general upper bound of $\E_{(\s_i, \sigma_{-i})} [n_T(i)] \leq \cO \big( \frac{H^2 \log(T)}{(s_i - s^*(\mu_i))^2} \big)$ based on the construction of the elimination rule. Conversely, using the best response property of the NE we argue by comparison to $s^*(\mu_i)$ that $\E_{(\s_i, \sigma_{-i})} [n_T(i)] \geq \Omega \big( \min \big\{\frac{\log(T)} {\Delta_i^2}, \frac{T}{K} \big\} \big)$.  
% Contrasting the upper and lower bound, and rearranging terms, then yields the claimed result. 
% \end{proof}

The derivation of Theorem~\ref{theorem:NE_characterization} can be best understood by noting that the estimates of each arm's strategy roughly concentrate at a rate of $\smash{1/\sqrt{t}}$. Then, depending on how often an arm expects to be selected by \algo{}, it can exploit our uncertainty about its strategy and safely increase its click-rates to match our confidence. Generally, optimal arms expect at least $T/K$ allocations while preventing elimination, which can be seen to imply NE strategies that deviate by at most $\smash{\sqrt{K/T}}$. On the other hand, suboptimal arms can expect roughly $\log(T)/\Delta_i^2$ allocations as long as they can prevent elimination and all other arms act rationally, which results in the linear dependence on~$\Delta_i$. 
Hence, interestingly \algo{}' selection policy directly impacts the truthfulness of the arms, as arms that are selected more frequently are forced to choose strategies closer to $\s^*(\mu_i)$. 
We thus observe a trade-off between incentivizing \emph{all} arms to be truthful and recommending only the best arms. The proof of Theorem~\ref{theorem:NE_characterization} (Appendix~\ref{appendix:proof_NE_char}) then relies on the above observation and careful and repeated application of the best response property of the Nash equilibrium.

\subsection{Upper Bound of the Strong Strategic Regret of \algo{}}

With the approximate NE characterization from Theorem~\ref{theorem:NE_characterization} at our disposal, we are ready to prove a regret upper bound for \algo{}. 
We show that the \emph{strong strategic regret} of the \algo{} mechanism is upper bounded by $\smash{\tilde \cO \big(\sqrt{KT}\big)}$, that is, for any $\bsigma \in \NE(\ucbs)$ the regret guarantee holds. 

\begin{theorem}\label{theorem:regret_bound} 
Let $\Delta_i := \mu^* - \mu_i$ and let $L$ and $H$ denote the Lipschitz constants of $u(\s, \mu)$ and $\s^*(\mu)$, respectively. The strong strategic regret of \algo{} is bounded as 
%For any Nash equilibrium $\bsigma \in \NE(\text{\algo{}})$, the strategic regret of \algo{} is bounded as  
\begin{align}\label{eq:regret_upper_bound}
    R_T^{+}(\ucbs) = LH \cdot \cO \left(  \sqrt{{KT\log(T)}} +  \sum_{i:\Delta_i > 0} \frac{\log(T)}{\Delta_{i}} \right) .
    %R^+_T({\textnormal{UCBS}}) = \cO \Big(\underbrace{\frac{L H}{\s(\mu_*)} \sqrt{{KT\log(T)}}}_{\text{statistical exploitability}} + \overbrace{\sum_{i \not\in [K^*]} \frac{L H \log(T)}{\s(\mu_i) \, \Delta_{i}}}^{\text{stochastic bandit regret}} \Big). 
\end{align}
In other words, the above regret bound is achieved under any equilibrium $\bsigma \in \NE(\ucbs)$. 
\end{theorem}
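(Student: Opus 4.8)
The plan is to fix an arbitrary (possibly mixed) equilibrium $\bsigma \in \NE(\ucbs)$; since $R_T(\ucbs, \bsigma) = \E_{\bs\sim\bsigma}[R_T(\ucbs,\bs)]$ and the deviation bound of Theorem~\ref{theorem:NE_characterization} holds uniformly over all $\bs\in\supp(\bsigma)$, it suffices to establish \eqref{eq:regret_upper_bound} for an arbitrary pure profile $\bs$ in the support. For such a profile I decompose the instantaneous regret of selecting arm $i_t$ into a value gap and a strategic gap,
\begin{align*}
u(\s^*,\mu^*) - u(\s_{i_t},\mu_{i_t}) = \big(u^*(\mu^*) - u^*(\mu_{i_t})\big) + \big(u^*(\mu_{i_t}) - u(\s_{i_t},\mu_{i_t})\big).
\end{align*}
The value gap is nonnegative by \ref{item:a2}, and since $u^*$ inherits $L$-Lipschitzness from \ref{item:a1} it is at most $L\Delta_{i_t}$. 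The strategic gap equals $u(\s^*(\mu_{i_t}),\mu_{i_t}) - u(\s_{i_t},\mu_{i_t}) \le L\,|\s_{i_t}-\s^*(\mu_{i_t})|$ by \ref{item:a1}, which by Theorem~\ref{theorem:NE_characterization} is $LH\cdot\cO(\max\{\Delta_{i_t},\sqrt{K\log(T)/T}\})$. Summing over $t$, the $\sqrt{K\log(T)/T}$ contribution sums to $\sqrt{KT\log(T)}$, so (folding the stray $L\Delta_{i_t}$ into the $LH$ prefactor) the remaining task is to bound $\sum_{i:\Delta_i>0}\Delta_i\,\E[N_i(T)]$, where $N_i(T):=\sum_{t}\ind_{\{i_t=i\}}$ counts the selections of arm $i$.

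Next I would introduce a good event $\G$ on which (i) every confidence interval is valid, i.e.\ $\s_i\in[\underline\s_i^t,\overline\s_i^t]$ and $\mu_i\in[\underline\mu_i^t,\overline\mu_i^t]$ for all $i,t$, and (ii) observed rewards concentrate, $m_t(i)\gtrsim \s_i\, n_t(i)$ once $n_t(i)$ is large. A Hoeffding/Chernoff plus union-bound argument gives $\P(\G^c)=\cO(K/T)$, so the regret accrued on $\G^c$ is $\cO(LK)$ and negligible against $\sqrt{KT\log(T)}$. The crucial structural fact I need on $\G$ is that an optimal arm $i^*$ (with $\Delta_{i^*}=0$) is never removed by the screening rule in line~\ref{line:screening}: since $\mu^*\in[\underline\mu_{i^*}^t,\overline\mu_{i^*}^t]$ and $\s^*$ is $H$-Lipschitz, both screening thresholds lie within $2H\sqrt{2\log(T)/m_t(i^*)}$ of $\s^*(\mu^*)$, and the equilibrium deviation $|\s_{i^*}-\s^*(\mu^*)|$ furnished by Theorem~\ref{theorem:NE_characterization} is calibrated precisely to this confidence width at the relevant number of pulls. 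Consequently $A_t\ni i^*$ for all $t$, so the uniform-at-random fallback—which would otherwise ruin the regret bound—is never triggered.

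Finally, on $\G$ and with $i^*\in A_{t-1}$ always available, the selection step is exactly optimistic UCB on the post-click means. The standard argument then applies: whenever a suboptimal arm $i$ is selected we must have $\overline\mu_i^{t-1}\ge\overline\mu_{i^*}^{t-1}\ge\mu^*$, which forces $\Delta_i\le 2\sqrt{2\log(T)/m_{t-1}(i)}$ and hence at most $\cO(\log(T)/\Delta_i^2)$ reward observations for arm $i$. Because \ref{item:a3} keeps $\s^*(\mu)$—and therefore the equilibrium click-rate $\s_i$—bounded away from zero, event (ii) converts observations into selections, $N_i(T)=\cO(m_T(i)/\s_i)=\cO(\log(T)/\Delta_i^2)$, so $\sum_{i:\Delta_i>0}\Delta_i\,\E[N_i(T)]=\cO\big(\sum_{i:\Delta_i>0}\log(T)/\Delta_i\big)$. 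Adding the $\sqrt{KT\log(T)}$ term and restoring the $LH$ prefactor yields \eqref{eq:regret_upper_bound}.

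I expect the main obstacle to be the interplay between the screening/elimination mechanism and the UCB selection, concentrated in the second paragraph. Unlike vanilla UCB, here the standard pull count is only meaningful if the optimal arm survives all screening rounds, and establishing this requires matching the equilibrium deviation to the per-pull confidence width rather than a crude constant bound; this is also where I must lean most heavily on Theorem~\ref{theorem:NE_characterization}. A secondary difficulty is that rewards are observed only upon a click, so the estimator $\hat\mu_i^t$ concentrates at rate $1/\sqrt{m_t(i)}$ rather than $1/\sqrt{n_t(i)}$; closing the gap between observations $m_t(i)$ and selections $N_i(T)$ via \ref{item:a3} is essential to avoid an extra $1/\s_i$ blow-up in the gap-dependent term.
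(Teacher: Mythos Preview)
Your instantaneous-regret decomposition and the treatment of the $\sum_i \Delta_i\,\E[N_i(T)]$ term via the standard UCB count on $m_t(i)$ are fine and match the paper. The gap is in your second paragraph: the claim that on the good event $\G$ the optimal arm $i^*$ is \emph{never} screened out is not justified by Theorem~\ref{theorem:NE_characterization}. That theorem only tells you $s_{i^*}-s^*(\mu^*)=\cO\!\big(H\sqrt{K\log(T)/T}\big)$, which is the confidence width after roughly $T/K$ pulls. But under UCB the optimal arm is played far more than $T/K$ times, so the confidence radii $\sqrt{2\log(T)/n_t(i^*)}$ and $H\sqrt{2\log(T)/m_t(i^*)}$ eventually shrink below this deviation, and on $\G$ the screening rule \emph{can} trigger for $i^*$. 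In other words, the equilibrium deviation is calibrated to the number of pulls that \emph{guarantees} the arm stays in (order $T/K$), not to the number of pulls it actually receives; your ``calibrated precisely at the relevant number of pulls'' assertion conflates the two.

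The paper does not try to show $i^*$ survives forever. Instead it splits the horizon at $\tau_{i^*}$, the (random) round at which $i^*$ is eliminated, and proves by a direct best-response argument (Lemma~\ref{lemma:lower_bound_optimal_nu}) that $T-\E[\tau_{i^*}]\le\sqrt{KT}$: if $i^*$ were eliminated with more than $\sqrt{KT}$ rounds remaining, it could strictly improve its utility by moving $s_{i^*}$ closer to $s^*(\mu^*)$ and surviving longer, contradicting equilibrium. The contribution of rounds $t>\tau_{i^*}$ is then trivially at most $\sqrt{KT}$, and your UCB argument applies verbatim to rounds $t\le\tau_{i^*}$ since $i^*\in A_{t-1}$ there. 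So the missing ingredient is not a good-event/confidence calculation but an additional equilibrium argument controlling $\tau_{i^*}$; Theorem~\ref{theorem:NE_characterization} by itself does not deliver it.
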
 

\begin{proof}[Proof Sketch] 
    As suggested by the regret bound there are two sources of regret. Broadly speaking, the first term on the right hand side of \eqref{eq:regret_upper_bound} corresponds to the regret \algo{} suffers due to arms with maximal post-click rewards (i.e., $\Delta_i = 0$) deviating from the utility-maximizing strategy $\s^*(\mu^*)$. For such arms Theorem~\ref{theorem:NE_characterization} bounded the deviation by a term of order $\smash{\sqrt{K/T}}$, thereby leading to at most order $\smash{\sqrt{KT}}$ regret. The second term in \eqref{eq:regret_upper_bound} corresponds to the regret suffered from playing arms with suboptimal post-click rewards, i.e., $\Delta_i > 0$. Using a typical UCB argument, the Lipschitzness of $u(\s, \mu)$ and $\s^*(\mu)$, and again Theorem~\ref{theorem:NE_characterization} applied to $| \s^*(\mu^*) - \s_i| \leq |\s^*(\mu^*) - \s^*(\mu_i)| + \cO(H \Delta_i) \leq H \Delta_i + \cO(H\Delta_i)$ we obtain the claimed upper bound.  
\end{proof}

Similarly to classical MABs we can state a regret bound independent of the instance-dependent quantities $\Delta_i$ and translate Theorem~\ref{theorem:regret_bound} into a minimax-type guarantee.  % We can actually construct a problem-instance so that the factor $\s^*(\mu_i)$ is necessary / part of the lower bound. 
\begin{corollary}\label{corollary:minimax_bound}
    The strong strategic regret of \algo{} is bounded as 
    \begin{align*}
    R_T^{+}(\ucbs) = \cO\left( LH  \sqrt{KT \log(T)} \right). 
\end{align*}
In other words, the above regret bound is achieved under any equilibrium $\bsigma \in \NE(\ucbs)$. 
\end{corollary}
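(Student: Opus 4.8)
The plan is to convert the gap-dependent bound of Theorem~\ref{theorem:regret_bound} into a gap-independent one by the standard threshold (or ``peeling'') argument, choosing the threshold to equal the baseline deviation $\sqrt{K\log(T)/T}$ that already governs the equilibrium characterization. Concretely, I would fix $\Delta_0 := \sqrt{K \log(T)/T}$ and split the suboptimal arms into a \emph{large-gap} group $\{i : \Delta_i \ge \Delta_0\}$ and a \emph{small-gap} group $\{i : 0 < \Delta_i < \Delta_0\}$, bounding the regret contribution of each group by a different one of the two available estimates.

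For the large-gap arms I would retain the instance-dependent term of Theorem~\ref{theorem:regret_bound}. Since $1/\Delta_i \le 1/\Delta_0$ for every such arm and there are at most $K$ of them,
$$\sum_{i : \Delta_i \ge \Delta_0} \frac{\log(T)}{\Delta_i} \;\le\; \frac{K \log(T)}{\Delta_0} \;=\; \sqrt{KT \log(T)}.$$
For the small-gap arms the instance-dependent term is useless, as it diverges when $\Delta_i \to 0$, so I would replace it by a trivial per-round bound. First I would note that the per-round regret of playing any suboptimal arm $i$ is $\cO(LH\Delta_i)$: decomposing $u(\s^*,\mu^*) - u(\s_i,\mu_i) = [u^*(\mu^*) - u^*(\mu_i)] + [u^*(\mu_i) - u(\s_i,\mu_i)]$, the first bracket is at most $L\Delta_i$ because $u^*$ inherits the $L$-Lipschitzness of $u$ from~\ref{item:a1}, and the second is at most $L\,|\s^*(\mu_i) - \s_i| = \cO(LH\Delta_i)$ by the $L$-Lipschitzness of $u$ and the equilibrium bound of Theorem~\ref{theorem:NE_characterization} (with $\Delta_i < \Delta_0$, the maximum appearing in that bound is at most $\Delta_0$). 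Since the small-gap arms are \emph{collectively} selected at most $T$ times, their total contribution is at most $\cO(LH\Delta_0)\cdot T = \cO(LH\sqrt{KT\log(T)})$.

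Combining the two groups with the $\sqrt{KT\log(T)}$ term already present in Theorem~\ref{theorem:regret_bound} gives $R_T^+(\ucbs) = \cO(LH\sqrt{KT\log(T)})$; the chosen $\Delta_0$ equalizes the large-gap and small-gap contributions, which is why it is the right threshold. The guarantee holds under every $\bsigma \in \NE(\ucbs)$ because both theorems it relies on do. I do not anticipate a real obstacle: the one point needing care is that the small-gap arms must be bounded \emph{collectively}—they share a single budget of at most $T$ selections, so their total cost is $\cO(LH\Delta_0 T)$—whereas applying the trivial bound arm-by-arm would sum to $\cO(LH K \Delta_0 T)$ and lose a spurious factor of $K$.
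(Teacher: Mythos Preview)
Your proposal is correct and follows essentially the same threshold-splitting argument as the paper. The only notable difference is that the paper splits on the arm-specific threshold $\Delta'_i := \sqrt{K\log(T)/(T\, s^*(\mu_i)^2)}$ (because it works with the intermediate inequalities from the proof of Theorem~\ref{theorem:regret_bound}, which carry explicit $1/s^*(\mu_i)$ factors), whereas you split on the fixed $\Delta_0 = \sqrt{K\log(T)/T}$; your simpler choice works because the stated versions of Theorems~\ref{theorem:NE_characterization} and~\ref{theorem:regret_bound} have already absorbed those factors via Assumption~\ref{item:a3}. One small imprecision: Theorem~\ref{theorem:regret_bound} as stated bounds the \emph{total} regret, not per-arm contributions, so for the large-gap group you are implicitly invoking the per-arm selection bound $\E[n_T(i)] = \cO(\log(T)/\Delta_i^2)$ from its proof rather than the theorem statement itself—this is standard and fine, but worth making explicit. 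Also, the per-round loss for a small-gap arm is $\cO(LH\Delta_0)$ rather than $\cO(LH\Delta_i)$ (since $\max\{\Delta_i,\Delta_0\}=\Delta_0$ there); you use the right quantity in the final computation, so this is only a notational slip.
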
 

Theorem~\ref{theorem:regret_bound} nicely shows that the additional cost of the incentive design and the strategic behavior of the arms is of order $\smash{\sqrt{KT}}$ which primarily stems from arms with maximal post-click rewards deviating by roughly $\smash{\sqrt{K/T}}$ from the desired strategy (see Theorem~\ref{theorem:NE_characterization}). The dishonesty of suboptimal arms does not notably contribute to the regret and is contained in the $\log(T)/\Delta_i$ expressions as we can bound the number of times suboptimal arms are played sufficiently well. 
As a result, the total cost of incentive design and strategic behavior matches the minimax learning complexity of MABs so that we obtain an overall $\smash{\tilde \cO (\sqrt{KT})}$ strategic regret bound under every equilibrium.

% The interpretation of the is that since our estimates of the arms' strategies and post-click rewards concentrate at a rate of ${1}/{\sqrt{t}}$, the strategic arms can exploit our uncertainty to safely increase their click-rates which results in order $\sqrt{KT}$ regret due to the arms deviating from the desired strategies. This matches the learning complexity of classical MABs so that we recover the same order minimax regret of $\tilde \cO (\sqrt{KT})$. \edited{cost of the mechanism design / strategic behavior of the arms}

% In Corollary~\ref{corollary:minimax_bound} we see that there remains a dependence on the \learner{}'s utility function $u$ and post-click rewards $\mu_i$ in the form of $\s^*(\mu_i)$.  The intuition for this can be explained as follows: Suppose that $\s^*(\mu)$ is very small. 
%, e.g., $\s^*(\mu) < 1/T$ for all $\mu$. Now, \algo{} incentivizes arms to choose strategies close to $\s^*(\mu_i)$, which in turn makes learning about the post-click rewards more difficult. Hence, the incentivized strategies, while maximizing $u$, make learning about the arms' rewards harder. However, we want to stress that the utility function $u$ is chosen at the discretion of the \learner{} and a reasonable choice of $u$ would, for instance, ensure that the incentivized click-rate is always at least $\s^*(\mu) \geq 0.1$ for all $\mu$.  

% The bound can be improved for suboptimal arms by switching 1/\s^*(\mu_i) by 1/\s_i, where we know that $\s_i \geq \s^*(\mu_i)$. 

\subsection{Lower Bound for Weak Strategic Regret}

% Complementing our regret analysis we now show a lower bound for weak strategic regret for all $\smash{\tilde L}$-reverse Lipschitz utility functions satisfying \ref{item:a1}-\ref{item:a3}. This means that any algorithm induces at least one Nash equilibrium for the arms such that it suffers $\Omega(\smash{\tilde L} \sqrt{KT})$ regret.  

Complementing our regret analysis, we prove a lower bound on \emph{weak strategic regret} in the strategic click-bandit. By definition, weak strategic regret lower bounds its strong counterpart, i.e., $R^{-}_T(M) \leq \smash{R^+_T(M)}$, so that the shown lower bound directly applies to strong strategic regret as well, which implies that \algo{} is near-optimal.  

\begin{theorem}\label{theorem:regret_lower_bound} 
Let $M$ be any mechanism with $\NE(M) \neq \emptyset$. There exists a utility function $u$ satisfying \ref{item:a1}-\ref{item:a3} and post-click rewards $\mu_1, \dots, \mu_K$ such that for all Nash equilibria $\bsigma \in \NE(M)$: 
% Let $u$ be $\smash{\tilde L}$-reverse Lipschitz. For any mechanism $M$ that induces a Nash equilibrium among arms, there exist $\mu_1, \dots, \mu_K$ and a Nash equilibrium $\bsigma \in \NE(M)$ such that 
$$R_T(M, \bsigma) = \Omega\big(\sqrt{KT}\big). $$
In other words, $R_T^{-}(M) = \Omega\big(\sqrt{KT}\big)$.  
\end{theorem}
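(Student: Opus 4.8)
The plan is to reduce the strategic click-bandit to a standard stochastic $K$-armed bandit and invoke the classical $\Omega(\sqrt{KT})$ minimax lower bound, exploiting the fact that the theorem lets the hard instance depend on the given mechanism $M$. The central design choice is to pick a utility function whose optimal strategy $\s^*(\mu)$ is \emph{constant} in $\mu$, so that the strategic channel (arms choosing click-rates after learning their own $\mu_i$) reveals nothing about which arm has the largest post-click reward.

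Concretely, I would first fix a utility such as $u(\s,\mu) = \mu\,(1-(\s-c)^2)$ for a constant $c \in (0,1)$. One checks directly that this satisfies \ref{item:a1}-\ref{item:a3}: it is Lipschitz, $u^*(\mu)=\mu$ is increasing, and $\s^*(\mu)\equiv c$ is (trivially) Lipschitz and bounded away from zero. The key structural consequence is $u(\s^*,\mu^*)-u(\s_{i_t},\mu_{i_t}) \ge \mu^* - \mu_{i_t}$, since $\mu_{i_t}\ge 0$ and $1-(\s-c)^2\le 1$. Hence the strategic regret dominates the standard-bandit pseudo-regret $\E[\sum_t (\mu^*-\mu_{i_t})]$, and it suffices to lower bound the latter.

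Second, following the standard minimax construction, I would consider the family of post-click profiles $\mu^{(0)}$ with all means equal to $1/2$ and, for each $j\in[K]$, $\mu^{(j)}$ with arm $j$ bumped to $1/2+\eps$ for $\eps \asymp \sqrt{K/T}$. Fixing $M$ and, for each profile, a NE $\bsigma$, each pair induces a law over the observable history (clicks and post-click rewards). The crucial point is that, because $\s^*$ is constant, the click process of each arm carries information only about $\s_i$ and never about $\mu_i$; all information about the $\mu$-profile reaches the learner exclusively through post-click reward observations, of which there are at most $T$ total. A divergence-decomposition / change-of-measure argument (Pinsker plus the chain rule for $\KL$, with per-observation term $\kl(\Bern(1/2),\Bern(1/2+\eps))\asymp \eps^2$) then shows that, letting $N_j$ be the expected number of reward observations of arm $j$ under $\mu^{(0)}$, the learner cannot separate $\mu^{(j)}$ from $\mu^{(0)}$ unless $N_j \gtrsim 1/\eps^2$. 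Since $\sum_j N_j \le T$, there is an arm $j^*$ with $N_{j^*}\le T/K$; under $\mu^{(j^*)}$ the learner then plays the unique optimal arm only $o(T)$ of the time, forcing $\Omega(\eps T)=\Omega(\sqrt{KT})$ regret. As this argument never uses which equilibrium is selected, it bounds the regret under every $\bsigma\in\NE(M)$, i.e.\ the weak strategic regret.

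The main obstacle, and the step requiring the most care, is controlling the strategic response of the arms across the family $\{\mu^{(j)}\}$: since each arm knows its own $\mu_i$, a priori a clever mechanism might elicit this private information through the choice of $\s_i$ and identify the best arm for free, circumventing the reward-observation bottleneck. The constant-$\s^*$ construction is precisely what neutralizes this channel—the learner's desired strategy is identical for every type, so no equilibrium can make the chosen click-rates informative about the $\mu$-ordering without a verification step that itself costs the $\Omega(\sqrt{KT})$ reward observations we are lower bounding. Making this rigorous requires coupling the equilibria under $\mu^{(0)}$ and $\mu^{(j^*)}$ (or bounding the extra mutual information contributed by the click rates) so that the $\KL$ between the two observable laws stays dominated by the reward-observation term; this is the delicate part, whereas the remaining accounting is the familiar $\sqrt{KT}$ bandit argument.
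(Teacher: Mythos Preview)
Your proposal contains a genuine gap, precisely the one you flag as ``the delicate part'' but do not resolve. The standard change-of-measure argument compares the law of the observation sequence under $\mu^{(0)}$ and under $\mu^{(j^*)}$; here, however, those laws depend not only on the reward means but also on the arms' equilibrium click-rates $\bs^{(0)}$ versus $\bs^{(j^*)}$. Since each arm knows its own optimality gap $\Delta_i$, the equilibrium profile can (and generally will) differ across instances, and the click stream may contribute an uncontrolled $\KL$ term. Your claim that a constant $s^*(\mu)$ ``neutralizes this channel'' conflates the learner's \emph{desired} strategy with the arms' \emph{equilibrium} strategies: nothing prevents a mechanism from inducing an equilibrium in which, say, the unique optimal arm plays a detectably different $s$ than the suboptimal ones. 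Ruling this out amounts to a revelation-principle-type statement about all mechanisms, which your sketch does not supply.

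The paper avoids this obstacle entirely by \emph{not} comparing equilibria across instances. It fixes a single instance $\bmu$ (one arm bumped by $\Delta\asymp\sqrt{K/T}$) and the utility $u(s,\mu)=s\mu$, supposes for contradiction that some NE $\bs$ yields $o(\sqrt{KT})$ regret, and derives a contradiction from the best-response property itself: any suboptimal arm $i$ can unilaterally deviate to the click-rate $s'_i=s_j$ of the arm $j$ that currently gets the most plays, after which arms $i$ and $j$ are statistically $O(\Delta)$-close. A within-instance indistinguishability argument (Theorem~3 of Garivier--Kaufmann) then shows this deviation earns arm $i$ at least $\Omega(T/K)$ plays, contradicting the assumed NE. Thus the KL comparison is between an arm and its own deviation under the \emph{same} $\bmu$, so equilibrium instability across instances never arises. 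If you want to push your approach through, you would need a lemma that under any low-regret mechanism the equilibrium click-rates vary by at most $O(\eps)$ between $\mu^{(0)}$ and $\mu^{(j)}$; the paper's argument shows this is unnecessary.
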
 
\vspace{-0.1cm}
\begin{proof}[Proof Sketch]
    Consider the utility function $u(\s, \mu) = \s\mu$.
    Intuitively, for any low regret mechanism $M$ the NE for the arms will be in $(\s_1, \dots, \s_K) = (1, \dots, 1)$ as these strategies maximize the learner's utility $u$ and are to the advantage of the arms. In this case, the learning problem reduces to a classical MAB and we inherit the well-known minimax $\smash{\sqrt{KT}}$ lower bound. 
    However, it is not directly clear that there exists no better mechanism that would, e.g., incentivize arm strategies $(\s_1, \dots, \s_{i^*}, \dots, \s_K) = (0, \dots, 1, \dots, 0)$ under which $i^* = \argmax_i \mu_i$ becomes easier to distinguish from $i \neq i^*$. For this reason, we argue via the arms' utilities and lower bound the minimal utility a suboptimal arm must receive in any NE. This directly implies a lower bound on the number of times we must play any suboptimal arm in equilibrium, which yields the claimed result.

    % To show this, we bound the minimal utility suboptimal arms receive under any Nash equilibrium for a specifically constructed problem instance. The idea is that, because arm $i$'s strategy has to be a best response, it must be at least as good as the strategy $\s'_i = \s^*(\mu^*)$, i.e., the desired strategy of $i^* \in \argmax_{i} \mu_i$. Since $\s_{i^*}$ must be close to $\s^*(\mu^*)$ for the principal to achieve low regret, we obtain a lower bound on $\S_i(M, \s^*(\mu^*), \sigma_{-i})$. Then, since 
    % $$\S_i(M, \s^*(\mu^*), \sigma_{-i}) \leq \S_i(M, \s_i, \sigma_{-i}) \leq \E_{(\s_i, \sigma_{-i})}[n_T(i)],$$
    % we get a lower bound on the number of times suboptimal arms are played. This then results in the $\smash{\sqrt{KT}}$ minimax lower bound known from classical multi-armed bandits. 
    
    %Then, since arm $i^*$'s strategy must be close to $\s^*(\mu^*)$ for the principal to suffer $o(\sqrt{KT})$ regret, every suboptimal arm can guarantee at least 
    \vspace{-0.2cm}
\end{proof}

\vspace{-0.2cm}
\section{Simulating Strategic Behavior via Repeated Interaction}\label{section:experiments}

Goal of the experiments is to analyze the effect of the proposed incentive-aware learning algorithm \algo{} on strategically responding arms. Strategic arm behavior is here modeled through decentralized gradient ascent and repeated interaction with the mechanism. 
%%%
Contrary to the assumption of arms playing in NE, arms follow a simple gradient ascent strategy to adapt to the mechanism, which serves as a realistic and natural model of strategic behavior. This requires no prior knowledge from the point of view of the arms and all learning is performed through sequential interaction with the mechanism.   
For this reason, the final strategies in our experiments may not necessarily be in NE. 
Despite this, we want to see whether the mechanism is still able to incentivize arms to behave in the desired manner which will also provide insight into the robustness of the proposed incentive design.

\begin{figure}[t]
     \centering
         \begin{subfigure}[t]{0.23\textwidth}
         \centering
         \includegraphics[width=\textwidth]{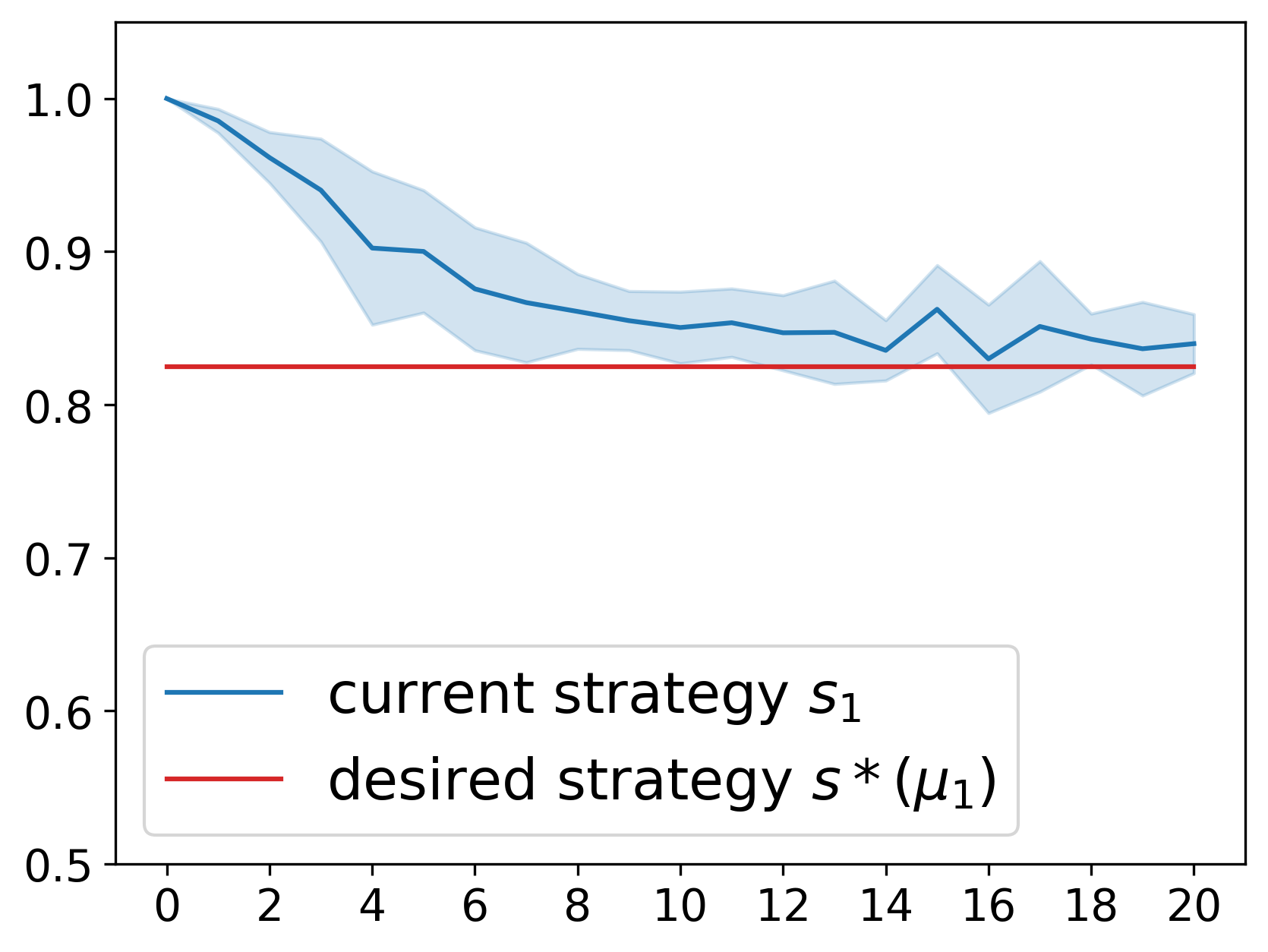}
        \caption{Optimal arm with mean $\mu_1 = 0.75$.}
     \end{subfigure}
      \hfill
    \begin{subfigure}[t]{0.23\textwidth}
         \centering
         \includegraphics[width=\textwidth]{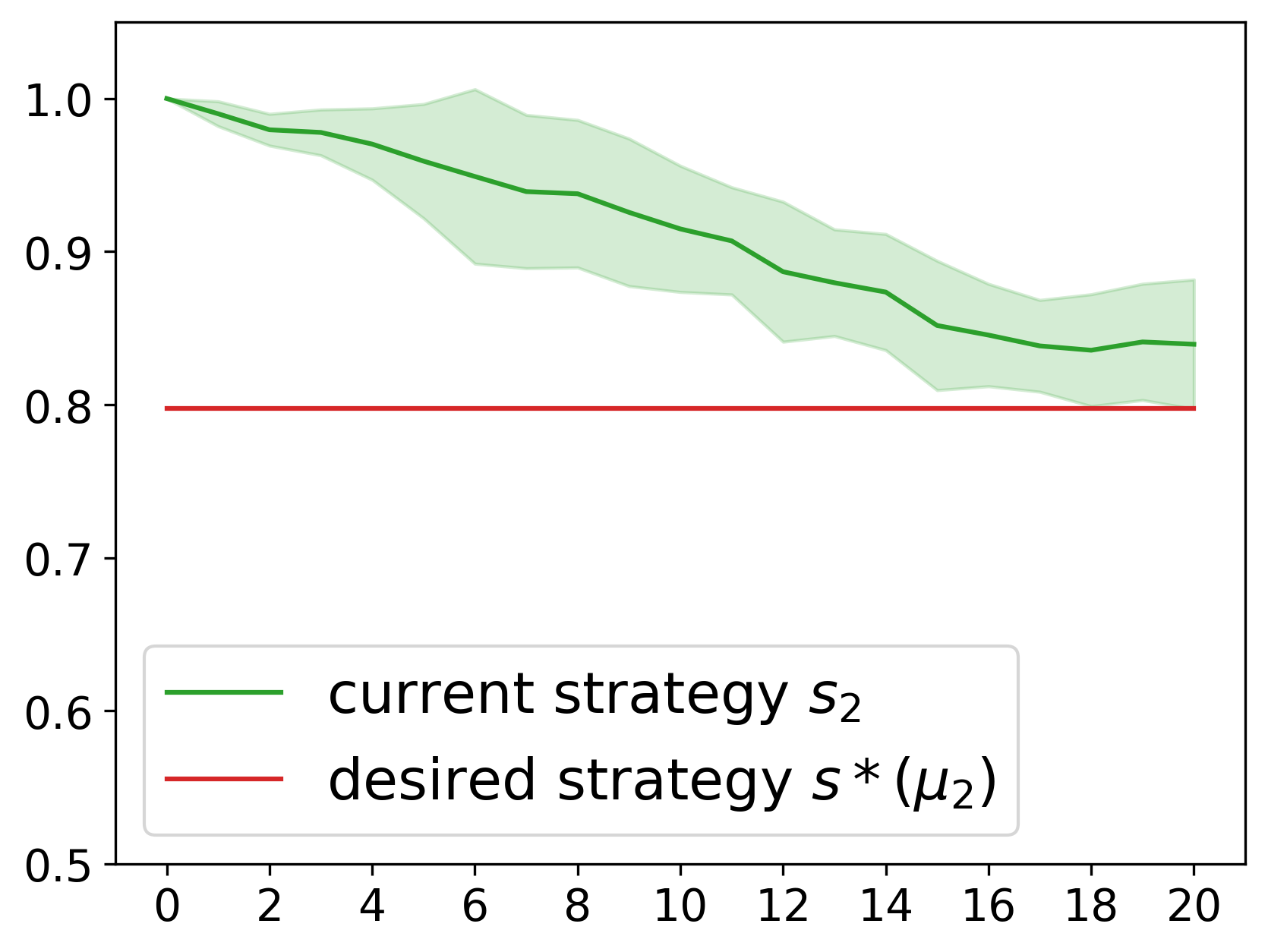}
        \caption{Suboptimal arm with mean $\mu_2 = 0.725$.} %$s^*(\mu_i) \approx 0.8$. }
     \end{subfigure}
     \hfill
    \begin{subfigure}[t]{0.23\textwidth}
         \centering
         \includegraphics[width=\textwidth]{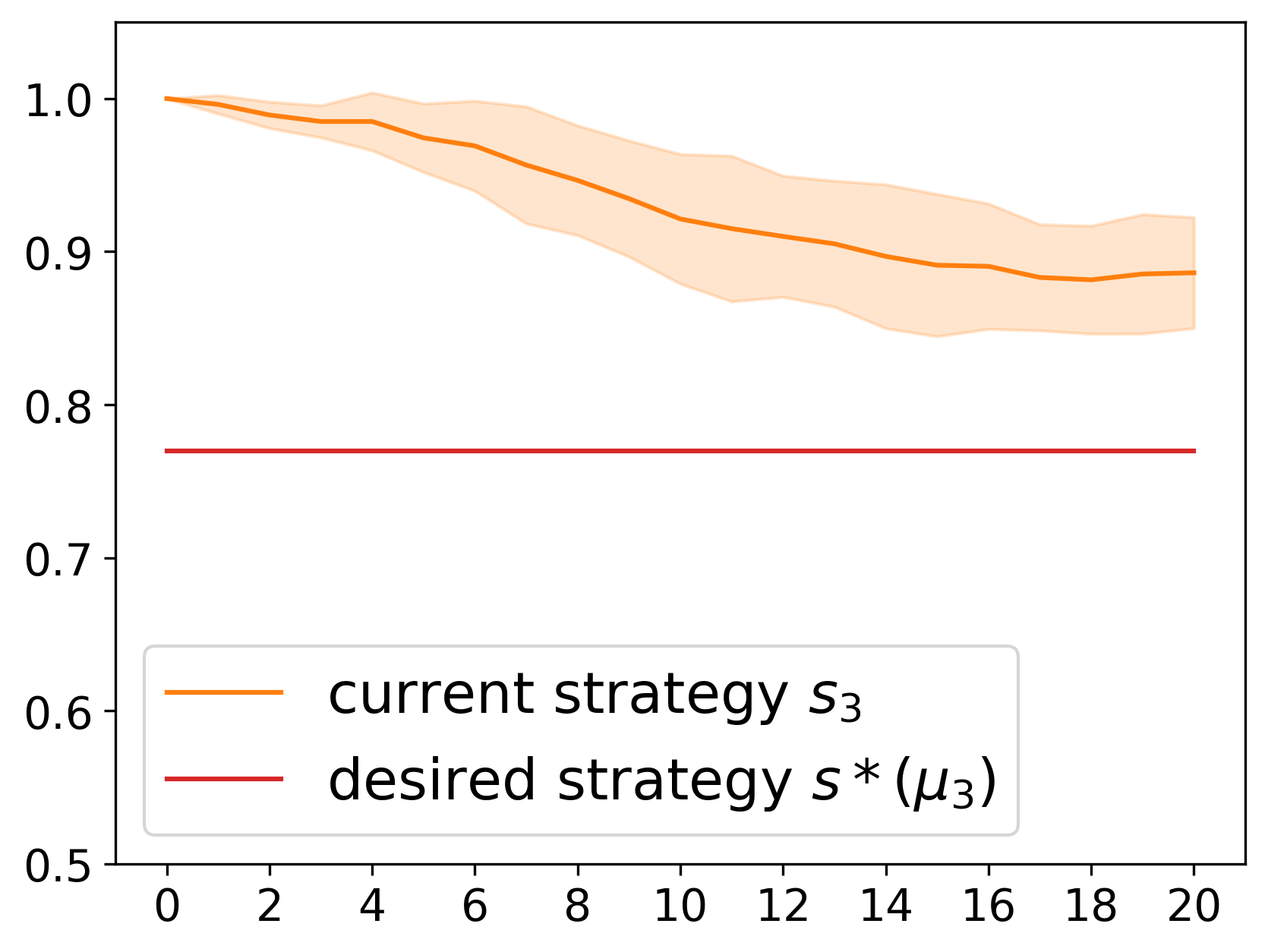}
            \caption{Suboptimal arm with mean $\mu_3 = 0.7$. }% $s^*(\mu_i) \approx 0.77$. }
     \end{subfigure}
     \hfill
     \begin{subfigure}[t]{0.23\textwidth}
         \centering
         \includegraphics[width=\textwidth]{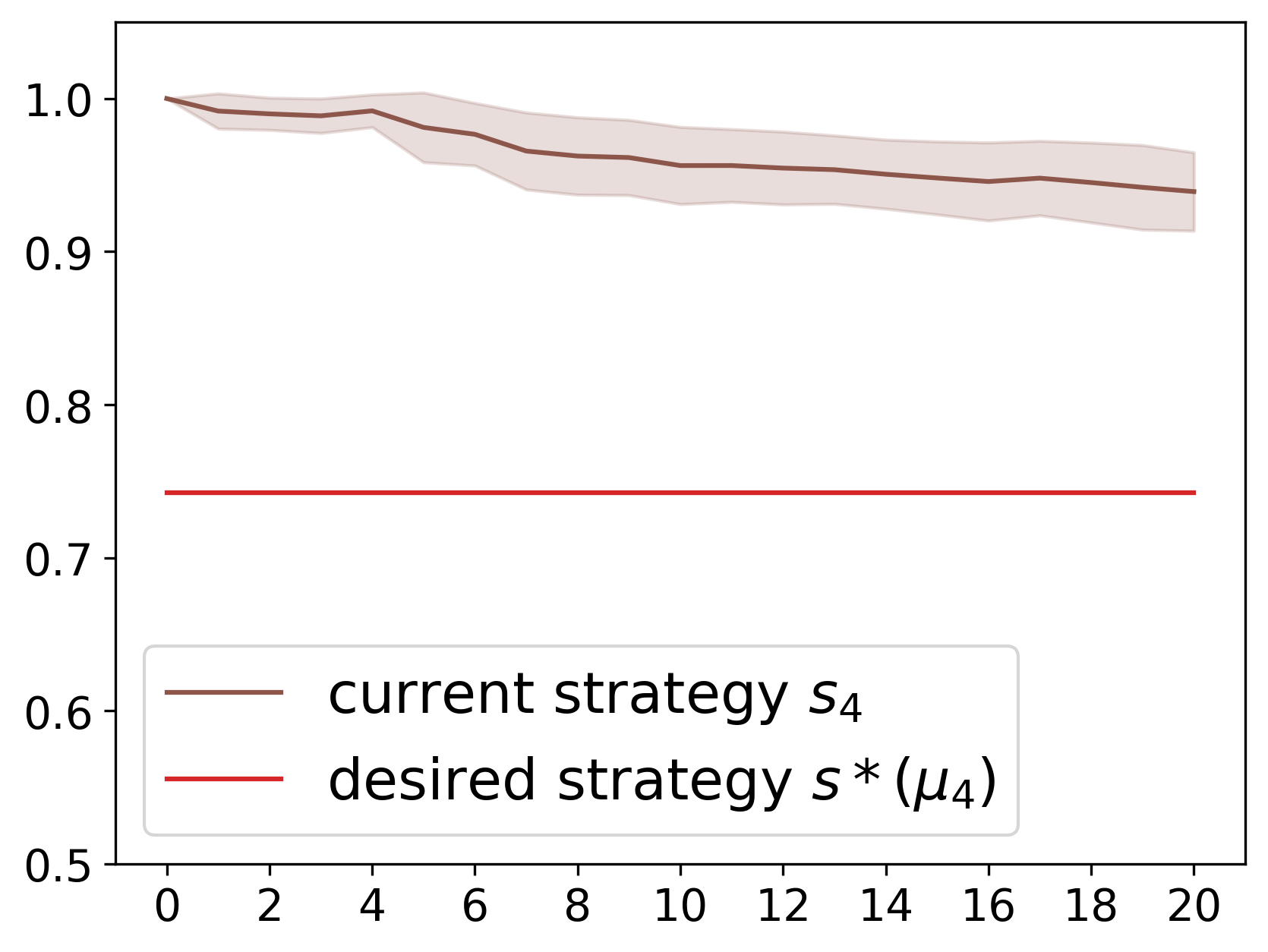}
        \caption{Suboptimal arm with mean $\mu_4 = 0.675$. }% and $s^*(\mu_4) \approx 0.74$. }
     \end{subfigure}
     \caption{The strategic behavior of $K=4$ arms when each arm uses gradient ascent to maximize their utility $v_i$ in response to the \algo{} mechanism. In red, the desired strategy $s^*(\mu_i)$ for each arm $i$, respectively. 
    As suggested by Theorem~\ref{theorem:NE_characterization}, the truthfulness, i.e., distance to $s^*(\mu_i)$, of a suboptimal arm $i$ is governed by the arm's optimality gap $\Delta_i$. We see this confirmed as the distance $s_i - s^*(\mu_i)$ increases as $\Delta_i$ increases. In accordance with our theoretical results, the optimal arm $1$ has the largest incentive to play close to the desired strategy (as it loses the most when eliminated).} 
     \label{fig:rebuttal}
     \vspace{-0.1cm}
\end{figure}

\paragraph{Experimental Setup.} 
We consider the earlier introduced utility function defined as  
$u(s, \mu) = s \mu - \lambda (s - \mu)^2$ such that the desired (learner's utility-maximizing) strategy given $\mu$ is $s^*(\mu) = (1+ \frac{1}{2\lambda}) \mu$. We let $\lambda = 5$.
To model the strategic behavior of arms in response to \algo{}, we let the strategic arms interact with the mechanism over the course of $20$ epochs (x-axis) and model each arm's strategic behavior via gradient ascent w.r.t.\ its utility $\U_i$. More precisely, after every epoch (i.e., interaction with \algo{} over $T=50$k rounds), each arm performs an approximated gradient step with respect to its utility $\U_i(s_i, s_{-i})$. We initialized the arm strategies to $s_i = 1$, however, our experiments show that other initialization, such as $s_i = 0$ or $s_i = 0.5$, yield similar results. %The arms have means $(0.75, 0.725, 0.70, 0.675)$ so that the desired strategies for each arm are (rounded) $(0.82, 0.80, 0.77, 0.74)$. 
All results are averaged over 10 complete runs and the standard deviation shown in shaded color. 

\vspace{-0.1cm}

\begin{wrapfigure}{l}{.5\textwidth}
\vspace{-0.2cm}
\begin{minipage}{0.47\linewidth}
  \includegraphics[width=\textwidth]{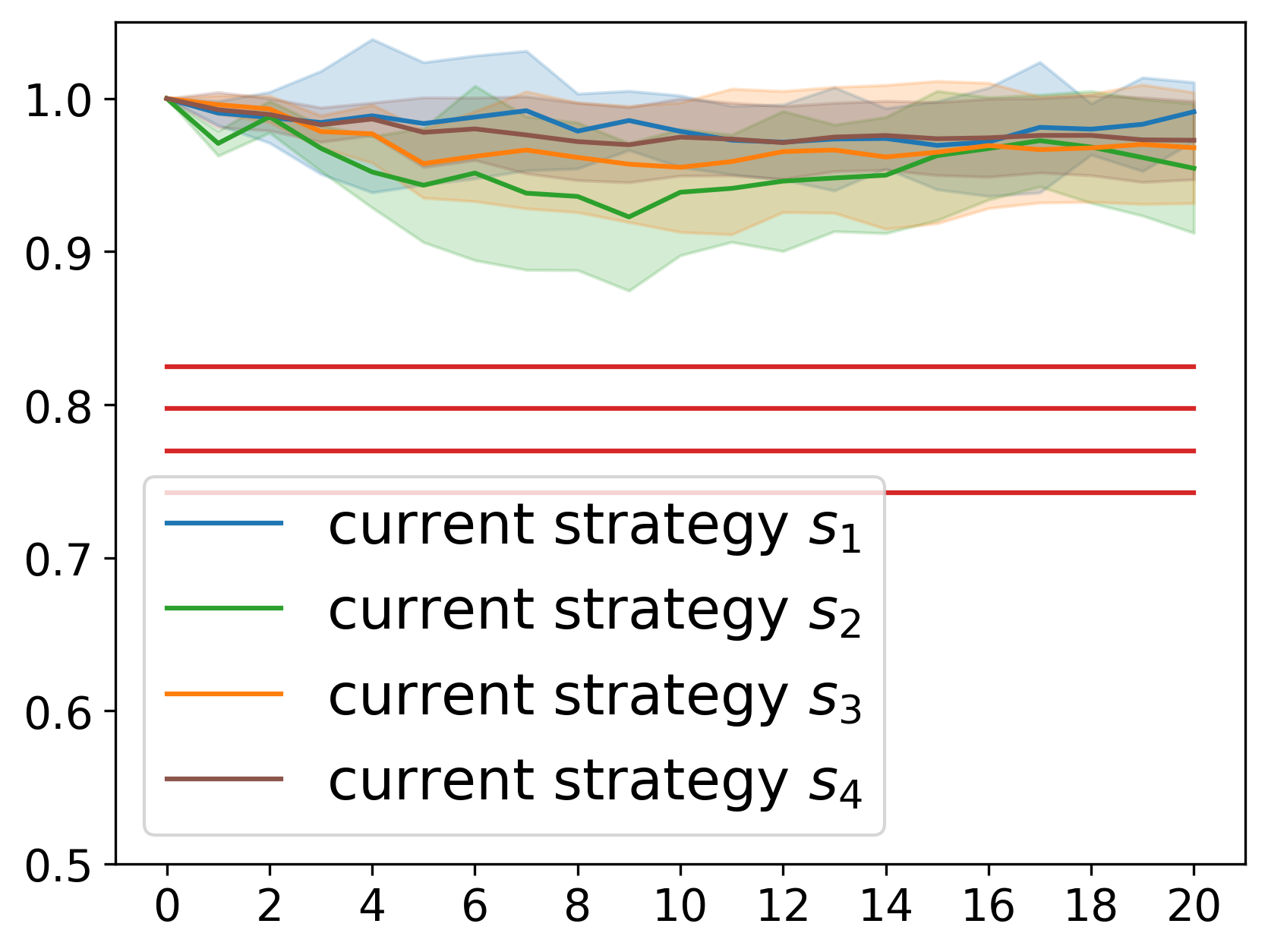} 
  \caption{Strategic arm behavior when repeatedly interacting with the incentive-\emph{unaware} standard UCB algorithm.}
  \label{fig:ucb_strategies}

\end{minipage}
\ \
\begin{minipage}{0.47\linewidth}
  \includegraphics[width=\textwidth]{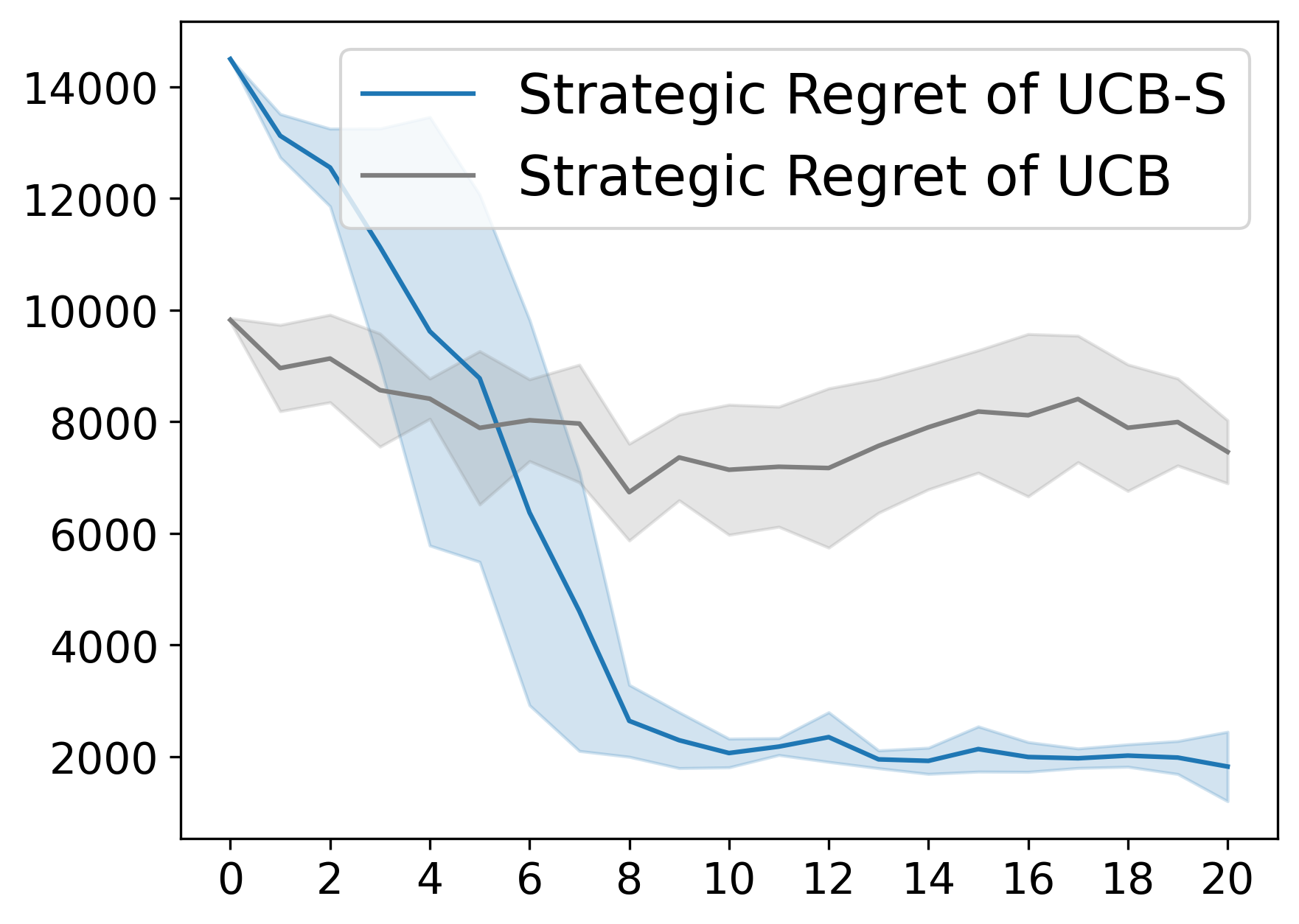}
  \caption{Strategic regret of \algo{} and standard UCB as arms adapt their strategies to the respective algorithm.} 
  \label{fig:regret}
\end{minipage}
\vspace{-0.2cm}
\end{wrapfigure}

\paragraph{Results.} 
In Section~\ref{section:ucbs}, we proved that our mechanism incentivizes desirable equilibria among arms. The conducted simulations now additionally show that under natural greedy behavior as modeled by gradient ascent, the incentive design of \algo{} is still effective and desirable arm strategies incentivized (Figure~\ref{fig:rebuttal}). Most notably, the optimal arm (having the largest incentive to be truthful) converges to a strategy close to the desired strategy $\s^*(\mu_1)$. The suboptimal arms do not converge to a strategy close to the desired strategy and we observe that the distance to $s^*(\mu_i)$ depends on the optimality gap $\Delta_i$, which mirrors our theoretical results (Theorem~\ref{theorem:NE_characterization}). In addition, Figure~\ref{fig:regret} shows that as the arms interact with \algo{} and adapt their strategies, the regret of \algo{} improves substantially. In contrast, incentive-unaware algorithms like UCB fail to incentivize desirable strategies (all arm strategies remain close to $1$, see Figure~\ref{fig:ucb_strategies}) and UCB accordingly suffers large regret (Figure~\ref{fig:regret}) throughout all epochs. 
The observation that UCB-S initially suffers larger regret than UCB can be explained by the elimination rule causing UCB-S to select arms uniformly at random when arms are significantly untruthful. This threat of elimination, however, incentivizes the arms to adapt their strategies in the next epoch and eventually allows UCB-S to achieve substantially less regret. 

% Note that the utility a given arm receives is noisy due to the noise in the environment which also impacts the selections of \algo{}. 

% \paragraph{Limitations of the Experimental Evaluation.}
% We want to emphasize that the final strategies in our experiments are not necessarily in NE. The exact computation of the NE with continuous strategy space under a learning algorithm boils down to the problem of finding NE in continuous general-sum games with full bandit feedback. We instead use decentralized gradient ascent as a natural model for strategic arm behavior. 

% \vspace{-.5cm}

\section{Discussion}
We study the strategic click-bandit problem in which each arm is associated with a click-rate, chosen strategically by the arms, and 
an immutable post-click reward. We show the necessity of incentive design in this model and design an incentive-aware online learning algorithm that incentivizes desirable arm strategies under uncertainty. As the learner has no prior knowledge of the arm strategies and the post-click rewards, the mechanism design is approximate and leaves room for arms to exploit the learner's uncertainty. 
This leads to an interesting regret bound which makes the intuition precise that arms can exploit the learner's uncertainty about their strategies. In our simulations we then observe that our incentive design is robust and still effective under natural greedy arm behavior and that the design of incentive-aware learning algorithms is necessary to achieve low regret under strategic arm behavior. 
Some interesting open questions which we leave for future work include whether the proposed incentive design remains effective under adaptive arm strategies and whether we can construct a mechanism under which there exists a desirable NE in dominant strategies.

\bibliographystyle{plainnat}
\bibliography{ref}

\newpage 

\appendix 

\section*{Appendix}
The appendix is arranged as follows: 

\begin{itemize} 
    \item Section~\ref{appendix:proof_prop} contains the proof of Proposition~\ref{prop:oracles}.
    \item Section~\ref{appendix:proof_existence} proves the existence of a Nash equilibrium under \algo{} (Lemma~\ref{lemma:existence_NE_UCBS}).
    \item Section~\ref{appendix:proof_NE_char} contains the proof of the Nash equilibrium characterization (Theorem~\ref{theorem:NE_characterization}).
    \item Section~\ref{appendix:proof_regret_bound} contains the proof of the regret upper bound of \algo{} (Theorem~\ref{theorem:regret_bound}). 
    \item Section~\ref{appendix:proof_cor_minimax} contains the proof of Corollary~\ref{corollary:minimax_bound}. 
    \item Section~\ref{appendix:proof_lower_bound} contains the proof of the lower bound (Theorem~\ref{theorem:regret_lower_bound}). 
    \item Section~\ref{appendix:technical_lemmas} contains basic technical lemmas that are used in the proofs. 
    \item Section~\ref{appendix:more_related_work} discusses additional related work.
    % \item \edited{Section~\ref{appendix:experimental_details}} contains more experimental details. 
\end{itemize}

\bigskip

\section{Proof of Proposition~\ref{prop:oracles}}\label{appendix:proof_prop}

\begin{proof}[\bfseries Proof of Proposition~\ref{prop:oracles}]
    {\bfseries \textit{(i)}}: Under any strategy profile $\bs = (\s_1, \dots, \s_K)$, arm $i \neq i^*$ has utility $\U_i(\oraclem, \s_i, \s_{-i}) = 0$, while arm $i^*$ has utility $$\U_{i^*} (\oraclem, \s_{i^*}, \s_{-i^*}) = T \s_{i^*}.$$ Hence, the pure strategy $\s = 1$ is a strictly dominant strategy for arm $i^*$, which implies that $i^*$ plays $s_{i^*} = 1$ with probability one in every Nash equilibrium. Now,  
    $$ u(s^*, \mu_{i^*}) - u(s_{i^*}, \mu_{i^*}) = u(s^*, \mu_{i^*}) - u(1, \mu_{i^*}) = \beta$$ 
    and the $\oraclem$ thus suffers regret $\beta$ every round, which implies the claimed $\Omega( \beta T)$ lower bound in every equilibrium. 

    {\bfseries \textit{(ii)}}: Let $j^* \in \argmax_{i \neq i^*} \mu_i$ be the arm with second largest post-click value and define $u_{j^*}^*:= u(\s^*(\mu_{j^*}), \mu_{j^*})$. Let $\s'$ be the largest $\s \in [0,1]$ such that $u(\s', \mu_{i^*}) \geq u_{j^*}^*$. We distinguish between two cases:
    
    \paragraph{Case 1.} Suppose that $u(\s' , \mu_{i^*}) > u_{j^*}^*$. From the continuity of $u$ it then follows that $\s' = 1$. To see this, suppose the contrary is true. Then, for all $\varepsilon > 0$ with $\s' + \varepsilon \leq 1$ it must hold that $u(\s' + \varepsilon, \mu_{i^*}) < u_{j^*}^*$, which contradicts the continuity of $u(\s, \mu)$ in $\s$. 
    
    Then, if arm $i^*$ chooses strategy $\s_{i^*} = 1$, arm $i^*$ is pulled every round by $\oraclesm$ for all $\s_{-i^*} \in [0,1]^{K-1}$ so that $\U_{i^*}(\oraclesm, 1, \s_{-i^*}) = T$. This immediately implies that $\s_{i^*} =1$ is a strictly dominant strategy for $i^*$, since $\U_{i^*}(\oraclesm, \s, \s_{-i^*}) \leq  T  \s < T$ for all $s \in [0,1)$.  Thus, arm $i^*$ plays $\s_{i^*} = 1$ in every Nash equilibrium of the arms. Analogous to the proof of (i), this yields $|u(\s^*, \mu^*) - u(\s_{i^*}, \mu_{i^*})| = \beta$,    which implies that the $\oraclesm$ suffers $\Omega(\beta T)$ under any Nash equilibrium of the arms. 
    
    \paragraph{Case 2.} Suppose that $u(\s', \mu_{i^*}) = u_{j^*}^*$. In a first step, we show that arm $i^*$ plays $\s'$ with probability one in every Nash equilibrium. We begin by noting that if arm $i^*$ plays $\s_{i^*} = \s'$, then for any opponent strategies $\s_{-i^*} \in [0,1]^{K-1}$ arm $i^*$ is played all $T$ rounds so that $\U_{i^*}(\oraclesm, \s', \s_{-i^*}) = T \s'$. Naturally, $\s'$ thus strictly dominates any other strategy $\s'' < \s'$, since $\U_i(\oraclesm, \s'', \s_{i^*}) \leq T \s''$. 
    
    Next, suppose that arm $i^*$ plays some strategy $\s'' > \s'$ with probability one.\footnote{For simplicity, we assume that arm $i^*$ plays the strategy with probability one. The case where $i^*$ plays $\s'' > \s'$ with some positive probability can be treated analogously.} Then, by definition of $\s'$, we have $u(\s'', \mu_{i^*}) < u(\s^*(\mu_{j^*}), \mu_{j^*})$. As a result, arm $j^*$'s best response $\s_{j^*}$ to $\s''$ will be such that $u(\s'', \mu_{i^*}) < u(\s_{j^*} , \mu_{j^*})$, thereby obtaining utility $\U_{j^*} (\oraclesm, \s_{j^*}, \s_{-j^*}) \geq T \s^*(\mu_{j^*})$. 
    As a result, if $j^*$ plays a best response, arm $i^*$ receives utility $0$ when playing $\s''$, whereas arm $i^*$ receives utility $T \s'$ when playing $\s'$. Hence, any $\s'' > \s'$ cannot be part of an equilibrium for arm $i^*$ and we have shown that arm $i^*$ plays $\s'$ with probability one in every equilibrium. 
    Finally, by definition of $\s'$, we have 
    $$u(\s^*, \mu^*) - u(\s', \mu_{i^*}) \geq u(\s^*, \mu^*) - u(\s^*(\mu_{j^*}), \mu_{j^*}) = u(\s^*, \mu^*) -  u^*(\mu_{j^*}) = \eta $$ 
    which implies that $\oraclesm$ suffers $\Omega(\eta  T)$ regret under any Nash equilibrium of the arms. Hence, we obtain the claimed lower bound of $\Omega\big( \min\{ \beta, \eta\} T \big)$. 
    
\end{proof}

\begin{remark}\label{remark:epsilon_NE_oracle}
    Interestingly, when the $\oraclesm$ from Proposition~\ref{prop:oracles}~\textit{(ii)} does not break ties in favor of the larger $\mu$ but instead uniformly at random, it can be shown that in all but a few problem instances no Nash equilibrium for the arms exists. However, for any $\varepsilon > 0$ we can explicitly construct an $\varepsilon$-Nash equilibrium for the arms under which the algorithm suffers $\Omega (\min\{ \beta, \eta\ T)$ strategic regret.
\end{remark}

Before proving the statement of Remark~\ref{remark:epsilon_NE_oracle}, we 
formally introduce the concept of an $\varepsilon$-Nash equilibrium among the arms here. 
\begin{definition}[$\varepsilon$-Nash Equilibrium]
For $\varepsilon >0$, we say that strategies $\bsigma = (\sigma_1, \dots, \sigma_K)$ form an $\varepsilon$-Nash equilibrium under $M$ if $\U_i(M, \sigma_i , \sigma_{-i}) \geq \U_i(M, \sigma^\prime_i, \sigma_{-i}) - \varepsilon$ for all $i \in [K]$ and $\sigma^\prime_i \in \Sigma$. 
%We denote the set of all $\varepsilon$-Nash equilibria under $M$ as $\NE_\varepsilon := \{ \bsigma \in \Sigma^K \colon \bsigma \text{ is } \varepsilon\text{-NE under } M\}$. 
\end{definition}
For Remark~\ref{remark:epsilon_NE_oracle}, we will show that there exists an $\varepsilon$-Nash equilibrium in pure-strategies $\bs \in [0,1]^K$ such that the oracle algorithm that breaks ties uniformly suffers linear strategic regret. 

\begin{proof}[\bfseries Proof of Remark~\ref{remark:epsilon_NE_oracle}]
    As in the proof of Proposition~\ref{prop:oracles}~(ii), let $j^* \in \argmax_{i \neq i^*} \mu_i$ be the arm with second largest post-click value and define $u_{j^*}^*:= u(\s^*(\mu_{j^*}), \mu_{j^*})$. Now, let $\s'$ be the largest $\s \in [0,1]$ such that 
    \begin{align*}
        u(\s', \mu_{i^*}) \geq u^*_{j^*} \text{ and } u(\s' - \varepsilon', \mu_{i^*}) > u^*_{j^*} \text{ for all } \varepsilon' > 0. 
    \end{align*}
    % $u(\s', \mu_{i^*}) \geq u^*_{j^*}$ and $u(\s' - \varepsilon', \mu_{i^*}) > u^*_{j^*}$ for all $\varepsilon' > 0$. 
    Note that such $\s'$ exists since $u$ is continuous and $u(\s^*(\mu_{i^*}), \mu_{i^*}) > u^*_{j^*}$. We again distinguish between two cases, similarly to the proof of Proposition~\ref{prop:oracles}.  
    
    \paragraph{Case 1.} If $u(\s', \mu_{i^*}) > u^*_{j^*}$, it follows that $\s' = 1$. This means that $\bs = (\s_{i^*}, \s_{-i^*})$ with $\s_{i^*} = 1$ and arbitrary ${\s_{- i^*} \in [0,1]^{K-1}}$ form a pure strategy Nash equilibrium for the arms. As in the proof of (i), we then obtain 
    $$u(\s^*, \mu^*) - u(\s_{i^*}, \mu_{i^*}) = \beta$$
    which implies order $\Omega(\beta T)$ regret under $(\s_{i^*}, \s_{-i^*})$.   
    
    \paragraph{Case 2.} Now, suppose that $u(\s', \mu_{i^*}) = u^*_{j^*}$. 
    Let $\s_{i^*} = \s' - \varepsilon'$ and $\s_{i} = \s^*(\mu_{i})$ for all $i \neq i^*$. We see that $(\s_{i^*}, \s_{-i^*})$ is a $(T\varepsilon')$-Nash equilibrium under the oracle algorithm. Hence, for any $\varepsilon > 0$, the strategy profile $\bs_{\varepsilon'} := (\s' - \varepsilon', \s_{-i^*})$ is a $\varepsilon$-Nash equilibrium for all $\varepsilon' < \frac{\varepsilon}{T}$. 
    Using that $u$ is $L$-Lipschitz, we have
    \begin{align*}
        |u(\s' - \varepsilon', \mu_{i^*}) - u(\s_{j^*}, \mu_{j^*})| =         |u(\s' - \varepsilon', \mu_{i^*}) - u(\s', \mu_{i^*})| \leq L \varepsilon',
    \end{align*}
    and it follows that 
    \begin{align*}
        |u(\s^*, \mu^*) - u(\s' - \varepsilon', \mu_{i^*})| \geq | u(\s^*, \mu^*) - u(\s_{j^*}, \mu_{j^*})| - L \varepsilon' \geq \eta - L\varepsilon', 
    \end{align*}
    We can choose $\varepsilon' < \frac{\varepsilon}{T}$ sufficiently small so that $L \varepsilon' < 1/T$. Hence, over $T$ rounds the oracle algorithm suffers $\Omega(\eta T)$ regret under the $\varepsilon$-Nash equilibrium given by $\bs_{\varepsilon'}$. This yields the claimed lower bound. 
    
    % \tkb{optional: discuss why no NE for the arms exists in general, e.g., give an example.}
\end{proof}

\section{Proof of Lemma~\ref{lemma:existence_NE_UCBS}}\label{appendix:proof_existence}
% CD: Last Checked on 28/4
\begin{proof}[\bfseries {Proof of Lemma~\ref{lemma:existence_NE_UCBS}}]
    We use Glicksberg's theorem~\cite{glicksberg1952further}, which guarantees the existence of a Nash equilibrium in continuous games with compact strategy space and continuous utility functions $\U_i$. The strategy space $[0,1]$ is compact and we are left with proving the continuity of $\U_i(\ucbs, \bs)$ in $\bs \in [0,1]^K$. Since $\U_i(\ucbs, \bs) = \E_\bs [n_T(i)] \s_i$, the question is whether $\E_\bs [n_T(i)]$ is continuous in $\bs$ under \algo{}. The choice of $\bs$ influences the actions of \algo{} when through the screening rule in line~\ref{line:screening}, but also the UCB-type selection in line~\ref{line:ucb_selection}, since post-click rewards are only observed when the arm is clicked. % i.e., the choice of $\s_i$ influences $\underline \mu_i^t$ and $\overline \mu_i^t$ through $m_t(i)$.  
    
    Let $\mathcal{H}_t$ denote the history of the mechanism's selections and observations up to round $t$, consisting of tuples $(i_t, c_{t, i_t}, r_{t, i_t})$. Even though $r_{t, i_t}$ is sometimes not observed, we include it here and note that it will not matter as the realizations of $r_{t, i_t}$ are independent of $\bs$. We let $\mathscr{H}_t$ up round $t$ denote the set of all possible histories. 
    
    While we are interested in $\E_\bs [n_T(i)]$, for technical reasons, it will be more convenient to prove the continuity of $\P_\bs (\hist_t \in \cdot)$ as a function of $\bs$. We will do so by induction over $t \in [T]$. 
    Naturally, $\P_\bs (\hist_1 \in \cdot)$ is continuous in $\bs$, since $\P_\bs (c_{1, i_1} = 1) = \s_{i_1}$ and we break ties in line~\ref{line:ucb_selection} independent of~$\bs$. For the proof by induction, let us now assume that $\P_\bs(\hist_t \in \cdot)$ is continuous in $\bs$. Then, for $t+1$ we find that again $\P_\bs (c_{t+1, i_{t+1}} = 1) = 1-\P_\bs(c_{{t+1}, i_{t+1}} = 0) = \s_{i_{t+1}}$ is continuous in $\bs$.\footnote{Note that $r_{t+1, i_{t+1}}$ is independent of $\bs$.} The interesting part is then whether $\P_\bs(i_{t+1} = i)$ is continuous in $\bs$. 
    
    \begin{lemma}\label{lemma:total_prob}
    For any event $A$, if $\P_\bs (A \mid \hist_t)$ and $\P_\bs(\hist_t)$ are continuous in $\bs$ for all $\hist_t \in \mathscr{H}_t$, then $\P_\bs (A)$ is also continuous in $\bs$.
    \end{lemma}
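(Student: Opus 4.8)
The plan is to derive the claim from the law of total probability together with a dominated-convergence argument. Since the strategy space $[0,1]^K$ is a compact metric space, it suffices to prove \emph{sequential} continuity: fix an arbitrary sequence $\bs^{(n)} \to \bs$ and show $\P_{\bs^{(n)}}(A) \to \P_\bs(A)$. Writing the total probability over histories,
$$\P_\bs(A) \;=\; \sum_{\hist_t \in \mathscr{H}_t} \P_\bs(A \mid \hist_t)\, \P_\bs(\hist_t),$$
each summand is a product of two functions that are continuous in $\bs$ by hypothesis, so every finite truncation of this sum is continuous. The content of the lemma is thus to justify exchanging the limit $n\to\infty$ with the (possibly infinite) sum.

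To do so I would add and subtract $\P_{\bs^{(n)}}(A\mid\hist_t)\,\P_\bs(\hist_t)$ and bound
$$\big| \P_{\bs^{(n)}}(A) - \P_\bs(A)\big| \;\le\; \sum_{\hist_t} \P_{\bs^{(n)}}(A\mid\hist_t)\,\big|\P_{\bs^{(n)}}(\hist_t) - \P_\bs(\hist_t)\big| \;+\; \sum_{\hist_t} \big|\P_{\bs^{(n)}}(A\mid\hist_t) - \P_\bs(A\mid\hist_t)\big|\,\P_\bs(\hist_t).$$
The first sum is at most $\sum_{\hist_t}|\P_{\bs^{(n)}}(\hist_t) - \P_\bs(\hist_t)|$, which vanishes by Scheff\'e's lemma: the maps $\hist_t \mapsto \P_{\bs^{(n)}}(\hist_t)$ converge pointwise to $\hist_t \mapsto \P_\bs(\hist_t)$ and are probability mass functions of constant total mass one, so pointwise convergence upgrades to convergence in total variation. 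The second sum vanishes by the dominated convergence theorem for series, since each term tends to $0$ by the assumed continuity of $\bs \mapsto \P_\bs(A\mid\hist_t)$ and is dominated by the summable envelope $\P_\bs(\hist_t)$.

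The step I expect to be the genuine obstacle is that $\mathscr{H}_t$ is \emph{not} countable: a history records the realized post-click rewards $r_{t,i_t} \in [0,1]$, so the ``sum'' above is really an integral against the $\bs$-dependent law of $\hist_t$, and Scheff\'e's lemma in the stated form no longer applies verbatim. The clean way around this is to exploit that the reward realizations are independent of $\bs$ (as already noted when $\mathscr{H}_t$ was introduced): conditioning on the entire $\bs$-independent table of potential rewards $R$, with fixed law $\nu$, gives $\P_\bs(A) = \int \P_\bs(A\mid R)\,\nu(\mathrm{d}R)$, where the integrating measure no longer depends on $\bs$. Given $R$, the only residual randomness is the finitely many click configurations over the $T$ rounds, each occurring with probability a product of factors $\s_{i_t}$ and $1-\s_{i_t}$ --- hence a polynomial, and in particular continuous, in $\bs$; since the mechanism's selections are deterministic functions of the observed history, $\P_\bs(A\mid R)$ is a finite sum of such polynomials, continuous and bounded by $1$. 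Dominated convergence against the fixed probability measure $\nu$ then yields continuity of $\P_\bs(A)$, and this same conditioning reconciles the abstract statement with the uncountable history space.
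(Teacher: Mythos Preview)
The paper's proof is the single sentence ``This follows from the law of total probability.'' Your countable-case argument is a correct and careful elaboration of exactly this idea: you make explicit the limit-interchange that the paper leaves implicit, and your use of Scheff\'e plus dominated convergence handles it cleanly. In the paper's intended reading, $\mathscr{H}_t$ is treated as a discrete set one can sum over, so the total-probability identity is a (finite or countable) sum of products of continuous functions and continuity is immediate; your Scheff\'e/DCT machinery is more than is needed there but certainly not wrong.

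Your observation that $\mathscr{H}_t$ is uncountable once continuous post-click rewards are included is a genuine point the paper does not address; the expression $\P_\bs(\hist_t)$ for an individual history is then ill-defined and the lemma, read literally, does not apply. Your fix---condition first on the $\bs$-independent reward table so that the residual randomness is a finite click configuration with polynomial probability in $\bs$, then integrate against the fixed reward law by dominated convergence---is the right way to make the argument rigorous in this setting. Note that this route bypasses the abstract lemma rather than proving it: you directly establish continuity of $\P_\bs(A)$ for events in the UCB-S model, which is exactly what the surrounding proof needs. One small caveat: the mechanism is not fully deterministic given the history (tiebreaking in the $\argmax$ and the uniform selection when $A_t=\emptyset$ introduce extra coins), but these are also $\bs$-independent and can be folded into the same conditioning without difficulty.
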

    \begin{proof}
    This follows from the law of total probability.
    \end{proof}

    We begin by analyzing the dependence of the screening rule in line~\ref{line:screening} on $\bs$. First of all, note that for all $i\in A_{t-1} \setminus \{i_t\}$, we always have $i \in A_t$, i.e., no other arm than $i_t$ will ever be eliminated at the end of round~$t$. Moreover, since $\mathbb{P}_\bs(c_{t, i_t} = 1) = \s_{i_t}$ is continuous in $\bs$, it follows that $\P_\bs(\underline \s_{i_t}^t > a)$ and $\P_\bs (\underline \s_{i_t}^t > a \mid \hist_t)$ are also continuous in $\bs$ for all $a \in \mathbb{R}$. Consequently, the probability that arm $i_t$ is eliminated in line~\ref{line:screening} at the end of round $t$, i.e., $\P_\bs(i_t \not \in A_{t})$, must be continuous in $\bs$.

    Let us assume that $A_{t} \neq \emptyset$, since $\P_\bs(i_{t+1}=i)$ is always continuous in $\bs$ if $A_{t} = \emptyset$. If $i \not \in A_{t}$, we have $\P_\bs(i_{t+1} = i) = 0$. Note that for all $i \neq i_t$, we have $\overline \mu_i^{t} = \overline \mu_i^{t-1}$. We will now first consider any $i \neq i_t$. If $i \in A_t$, we then have 
    \begin{align}
    \label{eq:appendix_i_t}
        & \P_\bs (i_{t+1}  = i \mid \hist_t) \nonumber \\[0.5em] \nonumber 
        & = \P_\bs \big(\overline \mu_i^{t} > \max_{j \in A_{t} \setminus \{i, i_t\}} \overline \mu_j^{t} \mid i_t \not \in A_t, \hist_t\big) \cdot \P_\bs (i_t \not \in A_t \mid \hist_t) \\ 
        & \quad + \P_\bs \big(\overline \mu_i^{t} > \max_{j \in A_t \setminus \{i\}} \overline \mu_j^{t} \mid i_t \in A_t, \hist_t\big) \cdot \P_\bs (i_t \in A_t\mid \hist_t) \\ \nonumber
        & = \P \big(\overline \mu_i^{t-1} > \max_{j \in A_t \setminus \{i, i_t\}} \overline \mu_j^{{t-1}} \mid i_t \not \in A_t, \hist_t\big) \cdot \P_\bs (i_t \not \in A_t \mid \hist_t) \\
        & \quad + \P \big(\overline \mu_i^{t-1} > \max_{j \in A_t \setminus \{i, i_t\}} \overline \mu_j^{t-1} \mid i_{t+1} \neq i_t,  \hist_t\big) \cdot \P_\bs (i_{t+1} \neq i_t \mid i_t \in A_t, \hist_t) \cdot \P_\bs (i_t \in A_t \mid \hist_t). \nonumber 
    \end{align}
%     \cdcomment{It seems to me you also need
%     \[
%     + \P \big(\overline \mu_i^{t} > \max_{j \in A_t \setminus \{i, i_t\}} \overline \mu_j^{t} \mid i_{t+1} = i_t,  \hist_t\big) \cdot \P_\bs (i_{t+1} = i_t \mid i_t \in A_t, \hist_t) \cdot \P_\bs (i_t \in A_t \mid \hist_t)
%     \]
%     there, at least the way this is written. But the continuity argument is unchanged. \tkb{That expression is zero. We consider $i \neq i_t$ here. Whether $i_{t+1} = i_t$ is considered below.}
% }

    The leading factors are independent of $\bs$ and we have already shown that $\P_\bs (i_t \not \in A_{t+1})$ is continuous in $\bs$. We are thus left with proving the continuity of $\P_\bs (i_{t+1} \neq i_t \mid i_t \in A_{t+1}, \hist_t)$. 
    
    It holds that $\P_\bs (\overline \mu_{i_t}^{t} \in \cdot \mid \mathcal{H}_t) = \s_{i_t} \P (\overline \mu_{i_t}^{t} \in \cdot  \mid c_{t, i_t} = 1, \hist_t) + (1-\s_{i_t}) \P (\overline \mu_{i_t}^{t} \in \cdot \mid c_{t, i_t} = 0, \hist_t)$, where we used that $\P_\bs(\overline \mu_{i_t}^{t} \in \cdot \mid c_{t, i_t}, \hist_t) = \P(\overline \mu_{i_t}^{t} \in \cdot \mid c_{t, i_t}, \hist_t)$ is independent of $\bs$ (conditional on the click-event $c_{t, i_t}$). Hence, as a sum and product of continuous functions $\P_\bs (\overline \mu_{i_t}^{t} \in \cdot \mid \mathcal{H}_t)$ is continuous in $\bs$ and we get that
    \begin{align*}
        \P_\bs (i_{t+1}=i_t \mid \mathcal{H}_t) = \P_\bs (\overline \mu_{i_t}^{t} > \max_{j \neq i_t} \overline \mu_{j}^{t-1} \mid \mathcal{H}_t)\,  \P_\bs(i_t \in A_{t+1} \mid \hist_t)
    \end{align*}
    is continuous in $\bs$, where we used that $\overline \mu_j^{t} = \overline \mu_j^{t-1}$ for all $j \neq i_t$ independent of $\bs$.\footnote{Note that $\overline \mu_{i}^{t-1}$ is $\hist_t$-measurable for all $i$.} Then, since $\P_\bs (i_{t+1} = i_t \mid i_t \not \in A_{t}) = 0$, we have
    \begin{align*}
        \P_\bs (i_{t+1} = i_t \mid \hist_t) = \P_\bs (i_{t+1} = i_t \mid i_t \in A_{t}, \hist_t)\,  \P_\bs (i_t \in A_{t} \mid \hist_t), 
    \end{align*}
    which shows the continuity of $\P_\bs (i_{t+1} = i_t \mid i_t \in A_t, \hist_t)$. Hence, in view of equation~\eqref{eq:appendix_i_t}, we obtain that $\P_\bs (i_{t+1} = i \mid \hist_t)$ is continuous in $\bs$. Finally, Lemma~\ref{lemma:total_prob} tells us that, since $\P_\bs(\hist_t)$ is assumed to be continuous, $\P_\bs(i_{t+1} = i)$ is continuous as well. Hence, $\P_\bs(\hist_{t+1})$ is continuous and by induction we get that $\P_\bs(\hist_T)$ is continuous, which implies the continuity of $\E_\bs [n_T(i)]$ in $\bs$ for all $i$.

\end{proof}

\section{Proof of Theorem~\ref{theorem:NE_characterization}}\label{appendix:proof_NE_char}

\begin{proof}[\bfseries Proof of Theorem~\ref{theorem:NE_characterization}]
In the following let $\bsigma = (\sigma_1, \dots, \sigma_K) \in \NE(\ucbs)$ and $\bs \in \supp (\bsigma)$. 
We start of with some preliminaries. Recall that the arm $i$'s utility function given algorithm \ucbs~and strategies $\bs = (\s_i, \s_{-i})$ can be expressed as 
\begin{align*}
    \U_i(\ucbs, \s_i, \s_{-i}) = \E_{(\ucbs,\s_i, \s_{-i})} [n_T(i)] \s_i  ,
\end{align*}
and $\U_i (\ucbs, \s_i, \sigma_{-i}) = \E_{\s_{-i} \sim \sigma_{-i}}[\U_i(\ucbs, \s_i, \s_{-i})] = \E_{(\s_i, \sigma_{-i})}[n_T(i)] \s_i$. For convenience, we omit the argument \ucbs~in the following, as every probability and expectation will be w.r.t.\ \ucbs. The following variables will prove useful. Let $\tau_i$ be the first round that arm $i$ is not in the active set $A_t$ anymore,
\begin{align*}
    \tau_i := \min\{ t\in [T]\colon i \not \in A_t\},
\end{align*}
and let $\tau$ be the first rounds in which $A_t$ is empty, 
\begin{align*}
    \tau := \min \{ t \in [T] \colon A_t = \emptyset \}. 
\end{align*}
Here, we introduce the convention that $\tau_i= T$ if $i \in A_T$ and $\tau = T$ if $A_T \neq \emptyset$. 
% By definition, $\tau = \max_i \tau_i$.  

To characterize the strategy profiles in the support of any Nash equilibrium under \ucbs, we are going to rely on the best response property of the Nash equilibrium. More precisely, for any $\bs \in \supp(\bsigma)$ with $\bsigma \in \NE(\ucbs)$ arm $i$'s strategy, $\s_i$, must be a best response to $\sigma_{-i}$, i.e., for all $\s'_i \in [0,1]$: 
\begin{align*}
    \U_i(\s_i, \sigma_{-i}) \geq \U_i(\s'_i, \sigma_{-i}). 
\end{align*}

In a first step, we show that $\ucbs$ incentivizes arms to choose strategies $\s_i$ at least as large as the desired strategy $\s^*(\mu_{i})$. While this seems obvious at first since each arm $i$'s utility includes a linear factor of $\s_i$, we notice that in the click-bandit model arms can prevent the principal from learning about their true post-click value $\mu$ by choosing low click-rates $\s$. This could in theory be a viable strategy for suboptimal arms, i.e., $\mu_i < \mu^*$, since it would delay the principal from detecting that the arm is suboptimal. However, we quickly notice that delaying \algo{} from learning about $\mu_1, \dots, \mu_K$ is to each arm's disadvantage as any delay simply delays the round in which it receives utility. Moreover, while an arm may delay the learning of $\mu_i$, \algo{} still improves its estimate of $\s_i$ and the threat of elimination becomes more imminent. 
% However, we observe that this is not the case under the $\ucbs$ mechanism, because of its combination of UCB-type selection rule and threat of elimination.  
\begin{lemma}\label{lemma:s_bigger_smu}
    For all $\bs \in \supp (\bsigma)$ with $\bsigma \in \NE(\ucbs)$ and all $i \in [K]$: 
    \begin{align*}
        \s_i \geq \s^*(\mu_i). 
    \end{align*}
\end{lemma}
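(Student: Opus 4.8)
The plan is to argue by contradiction using the best-response property of Nash equilibria. Fix $\bsigma \in \NE(\ucbs)$ and an arm $i$, and suppose toward a contradiction that some $\s_i \in \supp(\sigma_i)$ satisfies $\s_i < \s^*(\mu_i)$. Since every pure strategy in the support of a (mixed) best response must itself be a best response to $\sigma_{-i}$, it suffices to exhibit a deviation with strictly larger utility; the natural candidate is the truthful strategy $\s_i' = \s^*(\mu_i) > \s_i$. Writing $\U_i(\s_i, \sigma_{-i}) = \s_i \, \E_{(\s_i, \sigma_{-i})}[n_T(i)]$, the subtlety---already flagged in the discussion preceding the lemma---is that raising the click-rate does not merely scale the linear factor $\s_i$ but also changes $\E[n_T(i)]$: a larger click-rate reveals $\mu_i$ faster (rewards are observed only upon a click) and tightens the confidence interval $[\underline\mu_i^t,\overline\mu_i^t]$ feeding the screening rule, so a priori $\E[n_T(i)]$ could \emph{decrease}. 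The whole content of the lemma is that this indirect effect can never outweigh the direct gain in clicks.

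To compare the two strategies I would fix the realized opponent profile $\s_{-i}\sim\sigma_{-i}$ and couple two executions of $\ucbs$ on a common probability space: all arms share the same reward tapes (the realized rewards are independent of $\bs$, as already used in the proof of Lemma~\ref{lemma:existence_NE_UCBS}), all other arms share the same click tapes, and arm $i$'s clicks are drawn from a single sequence of uniforms $U^{(1)},U^{(2)},\dots$ indexed by its successive selections, declaring a click in run A (rate $\s_i$) iff $U^{(k)}\le \s_i$ and in run B (rate $\s_i'=\s^*(\mu_i)$) iff $U^{(k)}\le \s_i'$. Under this monotone coupling every click in A is also a click in B. As long as the two runs have selected arm $i$ the same number of times with identical click outcomes, their full histories coincide and hence so do $\ucbs$'s selections; the runs can therefore first diverge only at a selection $k$ with $\s_i<U^{(k)}\le \s_i'$, where B registers an additional click (and an additional reward observation) that A does not.

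The core step is then to show that, despite the extra information B accumulates, its expected cumulative number of clicks of arm $i$ by round $T$ is at least---and strictly exceeds---that of A. I would lean on two structural facts. First, because B plays the truthful strategy $\s^*(\mu_i)$, with probability $1-1/T^2$ it is never removed by the screening rule (its confidence intervals contain $\s^*(\mu_i)$ and $\mu_i$, so neither $\overline\s_i^t < \min_{\mu}\s^*(\mu)$ nor $\underline\s_i^t > \max_{\mu}\s^*(\mu)$ triggers), whereas A, playing $\s_i<\s^*(\mu_i)$, is eventually eliminated once its click-rate interval separates below $[\min_\mu \s^*(\mu),\max_\mu \s^*(\mu)]$; thus the additional reward observations in B cannot cause an elimination that A itself avoids. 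Second, the extra clicks in B only advance the \emph{fixed} reward tape of arm $i$, so B reaches any given reward-observation count using no more selections than A; since a suboptimal arm's UCB drop-out is governed by its reward-observation count while the clicks earned per selection are higher in B, the clicks lost through fewer selections are recouped by the higher per-selection click probability, and under the hard deadline $T$ the extra, ``delayed'' selections that A gains are strictly less valuable. I expect this final comparison---rigorously tracking how the divergent, information-asymmetric histories evolve after the first extra click and converting the heuristic ``delay only postpones utility'' into a coupled inequality on cumulative clicks up to round $T$---to be the main obstacle. I would resolve it by an induction (or interchange argument) over rounds that propagates the invariant ``B's clicks of $i$ so far $\ge$ A's'' while using the high-probability no-elimination of the truthful run B to absorb the $1/T^2$ failure event of the confidence bounds, yielding $\U_i(\s^*(\mu_i),\sigma_{-i}) > \U_i(\s_i,\sigma_{-i})$ and the desired contradiction.
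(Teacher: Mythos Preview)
Your coupling idea is natural, but the proposal has a real gap exactly where you flag the ``main obstacle.'' After the first divergence the two runs carry different selection counts for arm $i$, and your invariant ``B's clicks of $i$ so far $\ge$ A's'' is not obviously maintainable round by round: once B has revealed a reward and A has not, the two algorithms may be playing different arms, the selection-index coupling $U^{(k)}$ no longer aligns with calendar time, and there is no clean way to propagate the inequality through the mismatched histories. Concretely, in run A the no-click leaves $\overline\mu_i^{t}$ unchanged so A re-selects $i$, while B may have moved on; A can then accumulate many extra selections of $i$, and nothing in the coupling alone rules out $m_A > m_B$ at some intermediate round. Your ``clicks lost through fewer selections are recouped'' claim is precisely the statement to be proved, and the proposed round-by-round induction does not supply it.

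The paper sidesteps this difficulty by exploiting a structural fact you do not use: if $c_{t,i_t}=0$ and $i_t \in A_t$, then $i_{t+1}=i_t$, because a no-click changes no $\mu$-estimate and hence UCB re-selects the same arm. This lets one decompose the horizon into \emph{click-phases} $\eta_k := \min\{t>\eta_{k-1}: c_{t,i_t}=1\}$, each contributing exactly one click; crucially, conditional on the active set, the identity $i_{\eta_k}$ depends only on the reward tape and past eliminations, not on $s_i$. Writing $\U_i = \E[m_T(i)] = \E\big[\sum_{k=1}^N \ind_{\{i_{\eta_k}=i\}}\big]$ with $N$ the number of phases fitting in $T$ rounds, lowering $s_i$ below $s^*(\mu_i)$ strictly decreases $\E[N]$ (geometric phase lengths are longer) and weakly decreases each $\P(i_{\eta_k}=i)$ (through earlier elimination via the lower screening bound), after which a short lemma on expectations of random sums gives $\E_{(s_i,\sigma_{-i})}[m_T(i)] < \E_{(s^*(\mu_i),\sigma_{-i})}[m_T(i)]$. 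In effect, the right level at which to couple is the phase level, not the round level; once you see this, no induction through divergent histories is needed.
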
 
\begin{proof}
    Let $\bsigma \in \NE(\ucbs)$.
    We begin by making some fundamental observations about $\ucbs$ in the click-bandit model. Let $t < T$. If $c_{t, i_t} = 0$ and $i_t \in A_{t}$, then $i_{t+1} = i_t$.\footnote{W.l.o.g.\ we assume that there are no ties (ignoring the rounds where no post-click rewards have yet been observed). In fact, when there is a possibility of a tie, it can be seen that the arms have an even larger incentive to choose $\s_{i} \geq \s^*(\mu_i)$, since they are not guaranteed to be pulled again in the ensuing round when not clicked. } 
    To see this, note that the estimates of $\mu_1, \dots, \mu_K$ and their confidence bounds do not change from $t$ to $t+1$ if $c_{t, i_t} = 0$, since no post-click reward was observed for any of the arms. Hence, given that $i_t \in A_{t}$, we have 
    \begin{align*}
        i_{t+1} = \argmax_{i \in A_{t}} \overline \mu_{i}^t = \argmax_{i \in A_{t-1}} \overline \mu_i^{t-1} = i_t. 
    \end{align*} 
    Thus, given that $i_t$ is not eliminated in the mean time, $\ucbs$ plays arm $i_t$ until arm $i_t$ is clicked, i.e., until the arm receives utility $1$, or we've reached round $T$. Hence, whenever $c_{t, i_t} = 0$, it simply delays the UCB selection rule by one round as the estimates and confidences of $\mu_1, \dots, \mu_K$ do not change. At the same time, arm $i$ with $i = i_t$ still only receives utility $1$ for this sequence of selections by \ucbs, since the UCB selection rule ``progresses'' once $c_{t, i_t} = 1$. 
    
    More formally, we can define the phases of the UCB selection rule recursively by $\eta_k := \min \{ t > \eta_{k-1} \colon c_{t, i_t} = 1\}$ with $\eta_0 := 0$ and $\eta_k = \infty$ if round $T$ is exceeded without a click. We define the number of such rounds as $N:= \max\{k \colon \eta_k < \infty\}$ and remark that $N \leq T$ always. 
    
    We first note that conditional on $A_{\eta_{k-1}}$ the identity of $i_{\eta_k}$ is independent of $\s_i$ (and $\sigma_{-i}$), but only depends on $\mu_1, \dots, \mu_K$ and their realization at rounds $\eta_1, \dots, \eta_{k-1}$, i.e., $\P_{(\s_i, \sigma_{-i})} (i_{\eta_k} = i \mid A_{\eta_{k-1}})= \P (i_{\eta_k} = i \mid A_{\eta_{k-1}})$.
    Moreover, we also see that $\P_{(\s_i, \sigma_{-i})}( A_{\eta_{k}} = \cdot \mid i \in A_{\eta_{k}})$ is independent of $\s_i$.\footnote{However, note that the value of $A_t$ for general $t$ is not independent of $\s_i$ conditional on $i \in A_t$, since, e.g., for small $\s_i$ other arms will be played fewer times before round $t$, thereby reducing the probability of them being eliminated by round $t$. } 
    Then, since 
    \begin{align*}
            & \P_{(\s_i, \sigma_{-i})}( i_{\eta_k} = i \mid i \in A_{\eta_{k-1}}) \\ 
             & \qquad \qquad \qquad = \sum_{A} \P_{(\s_i, \sigma_{-i})}( i_{\eta_k} = i \mid A_{\eta_{k-1}} = A \cup \{i\}) \, \P( A_{\eta_{k-1}} = A \mid i \in A_{\eta_{k-1}}),
        \end{align*}
    this implies that $\P_{(\s_i, \sigma_{-i})}( i_{\eta_k} = i \mid i \in A_{\eta_{k-1}})$ is independent of $\s_i$. Using the shown independence, let us then write 
    \begin{align}
    \begin{split}\label{eq:s_i_geq_i_eta}
            \P_{(\s_i, \sigma_{-i})} (i_{\eta_k} = i) & = \P(i_{\eta_k} = i \mid i \in A_{\eta_{k-1}}) \, \P_{(\s_i, \sigma_{-i})} (i \in A_{\eta_{k-1}}) \\ & \quad + \P_{(\s_i, \sigma_{-i})} (i_{\eta_k} = i \mid i \not\in A_{\eta_{k-1}}) \, \P_{(\s_i, \sigma_{-i})} (i \not\in A_{\eta_{k-1}}).
    \end{split}
    \end{align}
    Now, it holds that $\P(i_{\eta_k} = i \mid i \in A_{\eta_{k-1}}) \geq \P_{(\s_i, \sigma_{-i})} (i_{\eta_k} = i \mid i \not\in A_{\eta_{k-1}})$ always. Naturally, for $\s_i < \s^*(\mu_i)$ we have 
            $\P_{(\s_i, \sigma_{-i})} (i \in A_{\eta_{k-1}}) \leq \P_{(\s^*(\mu_i), \sigma_{-i})} (i \in A_{\eta_{k-1}})$
    so that from equation~\eqref{eq:s_i_geq_i_eta} it follows that 
    \begin{align}\label{eq:s_mu_better}
        \P_{(\s_i, \sigma_{-i})} (i_{\eta_k} = i) \leq \P_{(\s^*(\mu_i), \sigma_{-i})} (i_{\eta_k} = i).
    \end{align}  
    
    We also see that as $\s_i$ decreases the number of utility-yielding rounds decreases in expectation, i.e., for $\s_i < \s^*(\mu_i)$:
    \begin{align}\label{eq:N_larger}
        \E_{(\s_i, \sigma_{-i})}[N] < \E_{(\s^*(\mu_i), \sigma_{-i})}[N]
    \end{align}
    since $\eta_k - \eta_{k-1} \sim \mathrm{Geom}(\s_{i_{\eta_k}})$.
    Finally, it follows from equations \eqref{eq:s_mu_better} and \eqref{eq:N_larger} and a technical lemma about the comparison of expectation under two measures (Lemma~\ref{lemma:technical_expectation} in Appendix~\ref{appendix:technical_lemmas}) that 
        \begin{align*}
            \E_{(\s_i, \sigma_{-i})} [m_T(i)] & = \E_{(\s_i, \sigma_{-i})}\left[ \sum_{k=1}^N \ind_{\{i_{\eta_k} = i \}} \right] \\
            & < \E_{(\s^*(\mu_i), \sigma_{-i})}\left[ \sum_{k=1}^N \ind_{\{i_{\eta_k} = i \}} \right] = \E_{(\s^*(\mu_i), \sigma_{-i})}[m_T(i)] .
    \end{align*}
    Since a post-click reward is observed with probability $\s_i$ every time an arm is pulled by the learner, we have $\E_{(\s_i, \sigma_{-i})}[m_t(i)] = \E_{(\s_i, \sigma_{-i})} [n_t(i)] \s_i$ so that $\U_i(\s_i, \sigma_{-i}) = \E_{(\s_i, \sigma_{-i})}[m_t(i)]$. 
    Now, from the above we see that for any $\s_i < \s^*(\mu_i)$, the strategy $\s^*(\mu_i)$ is a strictly better response to $\sigma_{-i}$ than $\s_i$, i.e., $\U_i(\s^*(\mu_i), \sigma_{-i}) > \U_i(\s_i , \sigma_{-i})$. This shows that $\s_i \geq \s^*(\mu_i)$ for any $\s_i \in \supp(\sigma_i)$ with $\bsigma \in \NE(\ucbs)$.

    % Let there be two probability measures $\P$ and $\tilde P$ (with $\E$ and $\tilde E$ denote the respective expectation). Suppose that for (possibly dependent) random variables $N$, $X_1, X_2, \dots$, we have for some value $a \in \mathbb{R}$:  
    % \begin{align}
    %     \E[N] < \tilde E[N] \quad \text{and} \quad \P(X_k = a) \leq \tilde P(X_k = a).
    % \end{align}
    % This implies 
    % \begin{align}
    %     \E \left[ \sum_{k=1}^N \ind_{X_k = a} \right] < \tilde E \left[ \sum_{k=1}^N \ind_{X_k = a} \right]  
    % \end{align}

    % As a result, an arm does not gain anything by choosing small $\s$, but can only delay, as it does not impact the UCB selection rule. The delay is not in favor of any arm, since they still only receive utility, however, there are fewer rounds until $T$ left. 
    
    % Now, the number of stages that the UBC-type selection rule, i.e., the estimates of $\mu_1, \dots, \mu_K$, go through increases as $\s_i$ increases, since $c_{t, i} =1$ with higher probability.  

    % We also show that $\s^*(\mu_i) - \varepsilon$ is worse than $\s^*(\mu_i) + \varepsilon$ in terms of the elimination rule: 1. estimate on $\s$ improves, 2. $\s^*(\mu)$ is convex \tkb{Not necessary probably.} 
    % However, every time $c_{t, i_t} = 0$, the probability of $i_t$ being eliminated in the next round increases as $\ucbs$ improves its estimate of $\s_{i_t}$ and the confidence radius shrinks. 
    
\end{proof}

We continue the proof of Theorem~\ref{theorem:NE_characterization} by decomposing the number of times each arm is selected by \algo{}.  
Given $(\s_i, \sigma_{-i})$ we can split $\E_{(\s_i, \sigma_{-i})} [n_T (i)]$ into the time steps before $\tau_i$ and after $\tau$, since arm $i$ is never played in the rounds between $\tau_i$ and $\tau$. Recall that \ucbs~plays arms uniformly at random after round $\tau$ so that 
\begin{align}
    \E_{(\s_i, \sigma_{-i})} [n_{T} (i)] & = \E_{(\s_i, \sigma_{-i})} \left[ \sum_{t=1}^{\tau_i} \ind_{\{i_t = i\}} + \sum_{t=\tau+1}^T \ind_{\{i_t = i \}} \right] \nonumber \\[1em] 
    & = \E_{(\s_i, \sigma_{-i})} [n_{\tau_i} (i)] +  \E_{(\s_i, \sigma_{-i})} \Big[\frac{T-\tau}{K} \Big]. \label{eq:rewriting_n}
\end{align}

The proof of Theorem~\ref{theorem:NE_characterization} proceeds by upper and lower bounding the quantities in \eqref{eq:rewriting_n}, which will eventually lead to an approximate characterization of the best response $\s_i$. More precisely, we establish the following bounds for $\bsigma \in \NE(\ucbs)$ and $\s_i \in \supp(\sigma_i)$: 
\begin{enumerate}[align=left, leftmargin=20pt, topsep=0pt]
    \item[Lemma~\ref{lemma:bound_on_n_nu}:] $\E_{(\s_i, \sigma_{-i})} [n_{\tau_i} (i)] \leq \cO\left( \frac{H^2 \log(T)}{\s_i (\s_i - \s^*(\mu_i))^2}\right)$.
    \item[Lemma~\ref{lemma:bound_on_nu}:] $T- \E_{(\s_i, \sigma_{-i})} [\tau] \leq \cO(1)$.
    \item[Lemma~\ref{lemma:lower_bound_on_n}:] $\E_{(\s_{i}, \sigma_{-i})} [n_{T}(i)] = \Omega\Big(\min \{ \frac{\log(T)}{\s_i \Delta_i^2}, \s^*(\mu_i) \frac{T}{K} \}\Big)$. 
    %$ and $\E_{(\s_i, \sigma_{-i})} [n_T(i)] \geq ...$ for $i \not\in [K^*]$. 
\end{enumerate}

\subsection{Bounds on $n_T(i)$, $n_{\tau_i} (i)$, $\tau_i$, and $\tau$ under UCB-S}

We begin by bounding the number of allocations arm $i$ receives before elimination. As one expects, \ucbs~is able to detect that $\s_i \neq \s^*(\mu_i)$ with high probability after at most $\cO(1/(\s_i-\s^*(\mu_i))^2)$ selections.  
\begin{lemma}\label{lemma:bound_on_n_nu}
    Let $\bsigma \in \NE(\ucbs)$ and $\s_i \in \supp (\sigma_i)$ with $\s_i \neq \s^*(\mu_i)$. Then, the number of times that $i$ is being selected before elimination, $n_{\tau_i)}(i)$, satisfies the following. For some constant $c_1 > 0$, it holds that
    \begin{align*}
        \P_{(\s_i, \sigma_{-i})} \left( n_{\tau_i}(i) \leq c_1 \frac{H^2 \log(T)}{\s_i(\s_i- \s^*(\mu_i))^2}\right) \geq 1-\frac{3}{T^2}, 
    \end{align*}
    and as an immediate consequence for some $c_2 > 0$:
    \begin{align*}
        \E_{(\s_i, \sigma_{-i})}[n_{\tau_i}(i)] \leq c_2 \frac{H^2 \log(T)}{\s_i (\s_i - \s^*(\mu_i))^2}.  
    \end{align*}
\end{lemma}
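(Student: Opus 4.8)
The plan is to show that the screening rule in line~\ref{line:screening} is guaranteed to fire after arm $i$ has been selected on the order of $\frac{H^2\log(T)}{\s_i(\s_i-\s^*(\mu_i))^2}$ times, on a high-probability ``good'' event, and then convert this into the stated high-probability and expectation bounds. First I would define the good event $G$ as the intersection of three sub-events, each failing with probability at most $1/T^2$: (i) the click-rate confidence intervals are valid at every round, i.e.\ $|\hat \s_i^t - \s_i| \leq \sqrt{2\log(T)/n_t(i)}$; (ii) the post-click confidence intervals are valid, i.e.\ $|\hat \mu_i^t - \mu_i| \leq \sqrt{2\log(T)/m_t(i)}$; and (iii) the observed click count concentrates, i.e.\ $m_t(i) \geq \s_i n_t(i)/2$ once $n_t(i)$ exceeds the threshold. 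Sub-events (i) and (ii) follow from Hoeffding's inequality with a union bound over the at most $T$ possible values of $n_t(i)$ and $m_t(i)$ (each value contributing $2T^{-4}$, summing to at most $T^{-2}$), and (iii) follows from a multiplicative Chernoff bound on the binomial click count. The total failure probability is then at most $3/T^2$, matching the $1-3/T^2$ in the statement. Note the equilibrium hypothesis plays no role here: the argument concerns only the mechanism's screening behavior and so holds for any fixed $\s_i \neq \s^*(\mu_i)$ and any $\sigma_{-i}$.

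On $G$, the second step is a Lipschitz argument converting sample counts into interval widths. Writing $w_s := \sqrt{2\log(T)/n_t(i)}$ and $w_\mu := \sqrt{2\log(T)/m_t(i)}$, validity of the confidence intervals gives $[\underline \s_{i}^t, \overline \s_i^t] \subseteq [\s_i - 2w_s, \s_i + 2w_s]$ and $\mu_i \in [\underline \mu_i^t, \overline \mu_i^t]$. Since $\s^*$ is $H$-Lipschitz by Assumption~\ref{item:a3}, the screening target $[\min_{\mu \in [\underline \mu_i^t, \overline \mu_i^t]} \s^*(\mu),\, \max_{\mu \in [\underline \mu_i^t, \overline \mu_i^t]} \s^*(\mu)]$ contains $\s^*(\mu_i)$ and has width at most $2Hw_\mu$, hence is contained in $[\s^*(\mu_i) - 2Hw_\mu, \s^*(\mu_i) + 2Hw_\mu]$. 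Assuming without loss of generality $\s_i > \s^*(\mu_i)$ (the case $\s_i < \s^*(\mu_i)$ is symmetric via the first screening condition), elimination triggers as soon as $\underline \s_i^t > \max_\mu \s^*(\mu)$, which on $G$ is implied by $\s_i - 2w_s > \s^*(\mu_i) + 2Hw_\mu$, i.e.\ by $|\s_i - \s^*(\mu_i)| > 2w_s + 2Hw_\mu$.

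The third step turns this condition into a bound on $n_{\tau_i}(i)$. Since $m_t(i) \leq n_t(i)$ we have $w_s \leq w_\mu$, and sub-event (iii) gives $w_\mu \leq \sqrt{4\log(T)/(\s_i n_t(i))}$; hence $2w_s + 2Hw_\mu \leq 2(1+H)w_\mu \leq 4(1+H)\sqrt{\log(T)/(\s_i n_t(i))}$. Therefore there is a constant $c_1$ with $2w_s + 2Hw_\mu < |\s_i - \s^*(\mu_i)|$ whenever $n_t(i) \geq c_1 \frac{H^2\log(T)}{\s_i(\s_i-\s^*(\mu_i))^2}$ (using the usual convention $H \geq 1$ to fold $(1+H)^2$ into $H^2$). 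As the arm is removed the first time the screening rule fires, this forces $n_{\tau_i}(i) \leq c_1 \frac{H^2\log(T)}{\s_i(\s_i-\s^*(\mu_i))^2}$ on $G$, giving the high-probability claim. The expectation bound then follows from the deterministic inequality $n_{\tau_i}(i) \leq T$: on the complement of $G$ (probability $\leq 3/T^2$) we pay at most $T$, contributing $3/T$, which is absorbed into the constant since the main term is $\Omega(H^2\log T)$, yielding $\E_{(\s_i,\sigma_{-i})}[n_{\tau_i}(i)] \leq c_2 \frac{H^2\log(T)}{\s_i(\s_i-\s^*(\mu_i))^2}$.

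The main obstacle I anticipate is the control of the post-click width $w_\mu$, because the number of clicks $m_t(i)$ is random and coupled to $\s_i$ and $n_t(i)$: it is precisely this step where the factor $1/\s_i$ enters (fewer clicks per selection for small $\s_i$ widen the $\mu$-interval and slow detection). Making sub-event (iii) hold uniformly over the relevant rounds while keeping its failure probability at $1/T^2$, so that the overall good-event budget stays at $3/T^2$, requires a careful anytime/Chernoff argument rather than a single-round bound, and this is the most delicate part of the proof.
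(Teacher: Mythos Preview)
Your proposal is correct and follows essentially the same approach as the paper's proof: both define a high-probability good event from Hoeffding-type concentration on $\hat \s_i^t$, $\hat \mu_i^t$, and the relation between $m_t(i)$ and $n_t(i)$, then use the $H$-Lipschitzness of $\s^*$ to show the screening rule fires once $n_t(i)$ exceeds the stated threshold, and finally convert to an expectation bound via the trivial cap $n_{\tau_i}(i)\leq T$. The only cosmetic differences are that the paper handles the $m_t(i)$--$n_t(i)$ link with an additive Hoeffding bound rather than your multiplicative Chernoff, and routes the Lipschitz step through $\s^*(\hat\mu_i^t)$ rather than directly bounding the width of the screening target; neither changes the argument, and your observation that the equilibrium hypothesis is not actually used is accurate.
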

\begin{proof}
    For simplicity, we consider w.l.o.g.\ the one-sided elimination rule checking whether the arm $i$'s strategy $\s_i$ exceeds the desired strategy $\s^*(\mu_i)$: 
    \begin{align}\label{eq:elimination_rule}
        \underline \s_i^t > \max_{\mu \in [\underline \mu_i^t, \overline \mu_i^t] } \s^*(\mu). 
    \end{align}
    Let $\alpha_t(i) = \sqrt{\frac{2\log(T)}{n_t(i)}}$ and $\beta_t(i) = \sqrt{\frac{2\log(T)}{m_t(i)}}$. Recall that $\s^*(\mu)$ is $H$-Lipschitz. Then, 
    \begin{align*}
        \max_{\mu \in [\underline \mu_i^t, \overline \mu_i^t] } \s^*(\mu) \leq \s^*(\hat \mu_i^t) + H \beta_t(i).
    \end{align*}
    As a consequence, we see that a sufficient condition for the elimination rule \eqref{eq:elimination_rule} to trigger is given by 
    \begin{align}\label{eq:sufficient_cond_elimination}
        \hat \s_i^t - \alpha_t(i) > \s^*(\hat \mu_i^t) + H \beta_t(i), 
    \end{align}
    where by definition $\underline \s_i^t = \hat \s_i^t - \alpha_t(i)$. The following statements are always w.r.t.\ $(\s_i, \sigma_{-i})$, i.e., w.r.t.\ the probability measure $\P_{(\s_i, \sigma_{-i})}$. From Hoeffding's inequality, we know that with probability at least $1-1/T^2$: 
    \begin{align*}
        |\hat \s_i^t - \s_i| \leq \alpha_t(i). 
    \end{align*}
    Similarly, using the Lipschitzness of $\s^*(\mu)$, Hoeffding's inequality implies that with probability at least $1-1/T^2$: 
    \begin{align*}
        |\s^*(\hat \mu_i^t) - \s^*(\mu_i)| \leq H \,  |\hat \mu_i^t - \mu_i | \leq H \beta_t(i). 
    \end{align*}
    
    It then follows that with probability at least $1-2/T^2$ 
    \begin{align*}
        \hat \s_i^t - \s^*(\hat \mu_i^t) \geq \big(\s_i - \s^*(\mu_i)\big) - \big( \alpha_t(i) + \beta_t(i) \big) \geq \big(\s_i - \s^*(\mu_i)\big) - (H+1) \beta_t(i), 
    \end{align*}
    where we used that $\alpha_t(i) = \sqrt{\frac{2\log(T)}{n_t(i)}} \leq \sqrt{\frac{2\log(T)}{m_t(i)}} = \beta_t(i)$, since $n_t(i) > m_t(i)$ by definition. Therefore, the sufficient condition in equation~\eqref{eq:sufficient_cond_elimination} is satisfied with probability $1-2/T^2$ for 
    \begin{align*}
        \s_i - \s^*(\mu_i) > 2(H+1) \beta_t(i) = 2(H+1) \sqrt{\frac{2\log(T)}{m_t(i)}}. 
    \end{align*}
    In other words, arm $i$ has been eliminated by round $t$ with probability at least $1-2/T^2$ if \begin{align}\label{eq:m_t_elimination}
        m_t(i) > \frac{16 H^2 \log(T)}{(\s_i - \s^*(\mu_i))^2} . 
    \end{align} 
    Lastly, we translate this to a statement about $n_t(i)$. Recall that conditional on $n_t(i)$, we have $\E[m_t(i) \mid n_t(i)] = n_t(i) \s_i$, since arm $i$ is clicked with probability $\s_i$. From Hoeffding's inequality, we then again have with probability $1-1/T^2$ 
    \begin{align*}
        | m_t(i) - n_t(i) \s_i | \leq \sqrt{2 n_t(i) \log(T)}    
    \end{align*}
    and thus $m_t(i) \geq n_t(i) \s_i - \sqrt{2 n_t(i) \log(T)}$. Then, in view of equation~\eqref{eq:m_t_elimination}, if 
    \begin{align*}
        n_t(i) > c_1 \frac{H^2\log(T)}{\s_i(\s_i - \s^*(\mu_i))^2}
    \end{align*}
    for some sufficiently large $c_2 > 0$, then
    with probability at least $1-3/T^2$ arm $i$ has been eliminated before round $t$. Since $\tau_i$ denotes the round in which $i$ is eliminated from $A_t$, this means that with probability $1-3/T^2$: 
    \begin{align*}
        n_{\tau_i}(i) \leq c_1 \frac{H^2 \log(T)}{\s_i (\s_i - \s^*(\mu_i))^2}. 
    \end{align*}
    Since by definition $\tau_i \leq T$, this implies that for some $c_2 > 0$:
    \begin{align*}
        \E_{(\s_i, \sigma_{-i})}[n_{\tau_i}(i)] \leq c_2 \frac{ H^2 \log(T)}{\s_i(\s_i - \s^*(\mu_i))^2}. 
    \end{align*}
\end{proof}

We briefly recall a standard result often used in the context of MABs, which states that any probably correct decision rule needs $\Omega(\frac{1}{\varepsilon^2})$ samples to distinguish between two hypotheses for which the Bernoulli means lie $\varepsilon$ apart.  
We only give a short outline of the proof and refer to the many expositions of such bounds for more detail (see, e.g., Theorem~1 in \citep{mannor2004sample}, Section~2 in \citep{slivkins2019introduction}, Section~14 in \citep{lattimore2020bandit}). 
\begin{lemma}\label{lemma:sample_lower_bound}
    In order for us to reuse our current notation, suppose that $K=1$. In this case, $n_{\tau_1}(1)$ simply denotes the number of samples from arm $1$ before it gets eliminated, i.e., \algo{} asserts that $\s_1 \neq \s^*(\mu_1)$. For $\s_1 \neq \s^*(\mu_1)$, it holds that 
    \begin{align*}
        \E_{\s_1} [ n_{\tau_1}(1)] \geq \Omega \left( \frac{\log(T)}{(\s_1 - \s^*(\mu_1))^2} \right). 
    \end{align*}
\end{lemma}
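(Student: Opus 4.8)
The plan is to reduce the elimination of arm $1$ to a sequential binary hypothesis test between the truthful click-rate $s^*(\mu_1)$ and the actual click-rate $s_1$, and then invoke the standard change-of-measure lower bound for such tests. Write $\varepsilon := |s_1 - s^*(\mu_1)|$ and $s^* := s^*(\mu_1)$. In the $K=1$ case arm $1$ is pulled every round, so the only observations under \algo{} are the click indicators $c_t \sim \mathrm{Bern}(s_1)$ together with the post-click rewards, which are drawn from the same mean-$\mu_1$ distribution in both environments. Hence the two laws $\P_{s_1}$ and $\P_{s^*}$ differ only through the Bernoulli clicks. The elimination time $\tau_1 = n_{\tau_1}(1)$ is a stopping time for the observation filtration, and the event $A := \{\text{arm } 1 \text{ is eliminated within } T \text{ rounds}\}$ is $\mathcal{F}_{\tau_1}$-measurable.

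First I would record the two facts that make $A$ a useful statistic. By the design guarantee already noted for \algo{} (a truthful arm survives with probability at least $1-1/T^2$), we have $\P_{s^*}(A) \le 1/T^2$. I then split on $\P_{s_1}(A)$: if $\P_{s_1}(A) < 1/2$, then with probability exceeding $1/2$ the arm is never eliminated and runs the full horizon, so $\E_{s_1}[\tau_1] \ge T/2$, which already dominates $\Omega(\log(T)/\varepsilon^2)$ in the relevant regime $\varepsilon \gtrsim \sqrt{\log(T)/T}$ (outside this regime the claimed bound is understood as capped at $T$). The interesting case is therefore $\P_{s_1}(A) \ge 1/2$.

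The core step is the divergence decomposition for stopping times (Wald's identity): since the rewards carry identical laws in both environments and contribute nothing to the relative entropy,
\[
  \mathrm{KL}\big(\P_{s_1}^{\mathcal{F}_{\tau_1}} \,\|\, \P_{s^*}^{\mathcal{F}_{\tau_1}}\big) = \E_{s_1}[\tau_1]\cdot \mathrm{kl}(s_1, s^*),
\]
where $\mathrm{kl}(\cdot,\cdot)$ denotes binary relative entropy. Applying the data-processing inequality to the indicator of $A$ gives $\E_{s_1}[\tau_1]\cdot\mathrm{kl}(s_1,s^*) \ge \mathrm{kl}\big(\P_{s_1}(A), \P_{s^*}(A)\big)$. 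Using $\P_{s_1}(A)\ge 1/2$ and $\P_{s^*}(A)\le 1/T^2$ together with the monotonicity of $\mathrm{kl}(p,q)$ in each argument on the region $p>q$, the right-hand side is at least $\mathrm{kl}(1/2, 1/T^2) = \Omega(\log T)$. Finally, since $s^*(\mu_1)$ is bounded away from $0$ by Assumption~\ref{item:a3} (and away from $1$), the single-sample divergence satisfies $\mathrm{kl}(s_1, s^*) = O(\varepsilon^2)$, and rearranging yields $\E_{s_1}[n_{\tau_1}(1)] = \E_{s_1}[\tau_1] = \Omega(\log(T)/\varepsilon^2)$.

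The main obstacle I anticipate is justifying the divergence decomposition cleanly: one must verify that $\tau_1$ is a genuine stopping time, that the post-click rewards contribute zero to the KL (being identically distributed across the two environments, the chain rule leaves only the per-round click divergence), and that Wald's identity applies for a random $\tau_1$ — which is unproblematic here because $\tau_1 \le T$ is bounded, so no integrability subtleties arise. A secondary and minor point is controlling $\mathrm{kl}(s_1, s^*)$ near the endpoints of $[0,1]$; this is precisely why the boundedness-away-from-zero part of Assumption~\ref{item:a3} (and an analogous away-from-one observation) is invoked to convert the relative entropy into the Euclidean gap $\varepsilon^2$.
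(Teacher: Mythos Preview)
Your proposal is correct and takes essentially the same approach as the paper: both reduce the question to a two-point hypothesis testing lower bound between $\Bern(s^*(\mu_1))$ and $\Bern(s_1)$, exploiting that a truthful arm is eliminated with probability at most $1/T^2$. The paper simply cites a standard sample-complexity lower bound (Theorem~1 of \citet{mannor2004sample}) after reducing to the Bernoulli test, whereas you spell out the same information-theoretic argument explicitly via Wald's identity and the data-processing inequality; the content is the same.
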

\begin{proof}
    W.l.o.g.\ we can assume that $r_{t, 1} = \mu_1$ for all $t$ so that we are only concerned with the estimation of the Bernoulli mean $\s_1$ (this clearly only reduces the number of samples the elimination rule would need). 
    Note that the elimination rule is correct with probability $1-1/T^2$ by construction of the confidence sets around $\s_1$, i.e., only eliminates arm $1$ if it in fact deviated from $\s^*(\mu_1)$. We can then consider the hypotheses 
    \begin{align*}
        H_0: \s_1 = \s^*(\mu_1) \quad \text{and} \quad H_1: \s_1 = \s^*(\mu_1) + {\varepsilon}. 
    \end{align*}
    Then, since the elimination rule is correct with probability $1-1/T^2$, the standard hypothesis testing argument (see, e.g., Theorem~1 in \citep{mannor2004sample}) yields for some constant $c > 0$ that $\E_{\s_1}[n_{\tau_1}(1)] \geq \frac{c\log(T)}{\varepsilon^2} = \frac{c \log(T)}{(\s_1 - \s^*(\mu_1))^2}$. 
    
    % Since the confidence bounds holds with high probability, we have for any $\varepsilon < \sqrt{\frac{\log(T)}{T}}$
    % \begin{align*}
    %     \P\left( n_{\tau_1}(1) < \frac{\log(T)}{2 \varepsilon^2} \mid \s_1 = \s^*(\mu_1)\right) < \frac{1}{T}. 
    % \end{align*}
    % Then, from Theorem~14.2 in \citep{lattimore2020bandit} it follows that 
    % \begin{align*}
    %     \P\left( n_{\tau_1}(1) < \frac{\log(T)}{2 \varepsilon^2} \mid \s_1 = \s^*(\mu_1)\right) < \frac{1}{T}.
    % \end{align*}
    
\end{proof}

The next lemma states that $\E_{(\s_i, \sigma_{-i})}[\tau]$ is close to $T$. The intuition of this is quickly explained. If the set $A_t$ becomes empty, \algo{} plays arms uniformly at random. However, if one arm would happen to remain in $A_t$ this arm would always be played (as it has no competition). To do so, an arm simply has to ensure that it does not get eliminated too early. Now, in view of Lemma~\ref{lemma:sample_lower_bound}, an arm can be sampled order $x$ more times without getting eliminated for moving its strategy order $\sqrt{x}$ closer to $\s^*(\mu_i)$. Writing the arms' utility as $\U_i(\s_i, \sigma_{-i}) = \E_{(\s_i, \sigma_{-i})}[n_T(i)] \s_i = \E_{(\s_i, \sigma_{-i})}[n_T(i)] \big(\s^*(\mu_i) + (\s_i - \s^*(\mu_i))\big)$ we see that a quadratic increase in $\E_{(\s_i, \sigma_{-i})}[n_T(i)]$ will dominate a linear decrease in $\s_i- \s^*(\mu_i)$.

% The intuition of this is quickly explained. If $\E_{(\s_i, \sigma_{-i})} [\tau] < T$, then $\E_{(\s_i, \sigma_{-i})} [n_{\tau_i}(i)] \approx \frac{\log(T)}{\s_i(\s_i - \s^*(\mu_i))^2}$, since the elimination rule for arm $i$ must have been triggered at some point. Using that $\s_i = \s^*(\mu_i) + (\s_i - \s^*(\mu_i))$, arm $i$'s utility then roughly takes form 
% \begin{align*}
%     \U_i (\s_i, \sigma_{-i}) \approx \Big( \frac{\log(T)}{(\s_i - \s^*(\mu_i))^2} + \frac{T- \E_{(\s_i, \sigma_{-i})}[\tau] }{K} \Big)\big( \s^*(\mu_i) + (\s_i - \s^*(\mu_i)\big).
% \end{align*}
% We see that $\U_i(\s_i, \sigma_{-i})$ is increasing as $\s_i \searrow \s^*(\mu_i)$ until $\E_{(\s_i, \sigma_{-i})} [\tau] = T$ since ${1}/{(\s_i - \s^*(\mu_i))^2}$ dominates $\s_i - \s^*(\mu_i)$. In other words, for moving order $\sqrt{x}$ closer to $\s^*(\mu_i)$ the arm is rewarded by order $x$ more selections. Hence, as $\s_i$ must be a best response, i.e., maximize $\U_i(\, \cdot\, , \sigma_{-i})$, we find that $\s_i$ must be such that $\E_{(\s_i, \sigma_{-i})}[\tau] \approx T$. 

\begin{lemma}\label{lemma:bound_on_nu}
    Let $\bsigma \in \NE(\ucbs)$ and $\s_i \in \supp (\sigma_i)$. Then,
    \begin{align*}
        \E_{(\s_i, \sigma_{-i})}[\tau ] \geq T - \cO(1).   
    \end{align*}
\end{lemma}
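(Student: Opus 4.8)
The plan is to argue by contradiction using the best-response property of the Nash equilibrium: I will show that if the active set $A_t$ emptied too early in expectation, then some arm could strictly increase its expected number of clicks by deviating toward its truthful strategy $\s^*(\mu_i)$, surviving until the horizon, and monopolizing the rounds that \ucbs{} would otherwise play uniformly at random. Concretely, fix $\bsigma \in \NE(\ucbs)$ and suppose for contradiction that $\E_{(\s_i,\sigma_{-i})}[\tau] < T - C$ for an arbitrarily large constant $C$; equivalently, the expected length $\E[T-\tau]$ of the terminal uniform-random phase exceeds $C$.

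First I would record two structural facts about \ucbs{}. (i) \emph{Truthful arms survive:} by the confidence-bound construction behind the screening rule in line~\ref{line:screening} (the same $1-\cO(1/T^2)$ estimate established inside the proof of Lemma~\ref{lemma:bound_on_n_nu}), an arm that plays exactly $\s^*(\mu_i)$ is eliminated with probability at most $\cO(1/T^2)$ and hence stays in $A_t$ for every $t$ with probability $1-\cO(1/T^2)$. (ii) \emph{A sole survivor is played every round:} whenever $|A_t| = 1$, the unique active arm is the unconditional $\argmax$ in line~\ref{line:ucb_selection}, so it is selected in each remaining round until elimination or round $T$; combined with (i) and Assumption~\ref{item:a3} (which keeps $\s^*(\mu_i)$ bounded away from zero), a truthful sole survivor therefore collects clicks at a constant rate $\s^*(\mu_i) > 0$ for the remainder of the horizon.

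Next I would compare, for a suitably chosen arm $i$, its equilibrium utility $\U_i(\s_i,\sigma_{-i}) = \s_i \, \E_{(\s_i,\sigma_{-i})}[n_T(i)]$ with the utility $\U_i(\s^*(\mu_i),\sigma_{-i})$ obtained by deviating to $\s^*(\mu_i)$. Under the equilibrium profile the terminal uniform phase contributes only a $1/K$ share of its $\E[T-\tau] > C$ rounds to arm $i$, at click-rate $\s_i$; under the deviation arm $i$ survives (fact (i)) and, once all opponents have been eliminated, captures those rounds in full at rate $\s^*(\mu_i)$ (fact (ii)). By Lemma~\ref{lemma:s_bigger_smu} we have $\s_i \ge \s^*(\mu_i)$, so the price of the deviation is the click-rate reduction $\s_i - \s^*(\mu_i) \ge 0$ suffered over the pulls received before elimination; but Lemma~\ref{lemma:sample_lower_bound} shows that surviving $x$ additional selections only costs moving the strategy $\cO(\sqrt{\log(T)/x})$ closer to $\s^*(\mu_i)$, so (writing $\U_i = \E[n_T(i)]\big(\s^*(\mu_i) + (\s_i-\s^*(\mu_i))\big)$) the quadratic growth of the monopolized pulls dominates the linear loss in per-click rate. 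Taking $C$ large enough would then make the deviation strictly profitable, contradicting the equilibrium property and forcing $\E[\tau] \ge T - \cO(1)$.

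The hard part will be the coupling between arm $i$'s own strategy and the elimination dynamics of the opponents: changing $\s_i$ changes how frequently the other arms are selected and hence when they leave $A_t$, so the equilibrium run and the deviation run are not comparable round-by-round, and one must also verify that being a sole survivor at rate $\s^*(\mu_i)$ beats the diluted $1/K$ uniform share (this is exactly where Assumption~\ref{item:a3} enters). I would control this by reusing the phase decomposition already set up in \eqref{eq:rewriting_n} together with the ``replay-until-clicked'' behavior established in the proof of Lemma~\ref{lemma:s_bigger_smu} (an unclicked active arm is reselected in the next round, since the $\mu$-estimates do not update), which lets me lower-bound the number of monopolized rounds under the deviation without tracking the opponents' exact elimination times --- in the same spirit as the conditional-independence identity \eqref{eq:s_i_geq_i_eta} that decoupled $\s_i$ from the arm identities selected at the UCB ``progress'' rounds.
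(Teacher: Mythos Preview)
Your contradiction-via-profitable-deviation plan is exactly the paper's route, and you have correctly isolated the two structural ingredients (truthful arms survive the screening rule; a sole survivor absorbs every remaining round) as well as the monotonicity issue for the opponents' elimination times, which the paper handles just as you suggest via the ``replay-until-clicked'' delays from the proof of Lemma~\ref{lemma:s_bigger_smu}.

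There is one slip worth fixing. You phrase the deviation as moving all the way to $\s^*(\mu_i)$ and then quote Lemma~\ref{lemma:sample_lower_bound} to say the click-rate loss is only $\cO(\sqrt{\log(T)/x})$. Those two statements are inconsistent: under a full deviation the per-pull loss is $\s_i-\s^*(\mu_i)$, which is \emph{not} controlled by Lemma~\ref{lemma:sample_lower_bound} and can dominate the gain (for instance if $\s^*(\mu_i)<\s_i/K$, the sole-survivor rate $\s^*(\mu_i)$ is actually worse than the uniform share at rate $\s_i$, so Assumption~\ref{item:a3} alone does not rescue you). The paper avoids this by deviating to an \emph{intermediate} $\s'_i$ with
\[
\s'_i-\s^*(\mu_i)\ \ge\ \Omega\!\left(\sqrt{\tfrac{\log T}{\E_{(\s_i,\sigma_{-i})}[n_{\tau_i}(i)]+N}}\right),\qquad N:=T-\E_{(\s_i,\sigma_{-i})}[\tau],
\]
calibrated via Lemma~\ref{lemma:sample_lower_bound} so that arm $i$ survives precisely $\E[n_{\tau_i}(i)]+N$ selections while keeping its click rate strictly above $\s^*(\mu_i)$. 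Writing both utilities as $\E[n_T(i)]\big(\s^*(\mu_i)+(\cdot-\s^*(\mu_i))\big)$ and comparing to the bound $\s_i-\s^*(\mu_i)\le\cO(\sqrt{\log T/\E[n_{\tau_i}(i)]})$ from Lemma~\ref{lemma:bound_on_n_nu} then yields the quadratic-versus-linear inequality you describe, but now with matching quantities. In short: keep your argument, but replace ``deviate to $\s^*(\mu_i)$'' by ``deviate to the calibrated $\s'_i$''---that is the step where Lemma~\ref{lemma:sample_lower_bound} actually enters.
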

\begin{proof}
    % We begin by noting that 
    % \begin{align*}
    %     \E_{(\s_i, \sigma_{-i})}[\tau ] = \sum_{j \in [K]}  \E_{(\s_i, \sigma_{-i})}[n_{\tau_j} (j) ]. 
    % \end{align*}
    % To show that $\E_{(\s_i, \sigma_{-i})}[\tau ] \geq T - \cO(1)$ it is sufficient to show that $$\sum_{j \in [K]}  \E_{(\s_i, \sigma_{-i})}[n_{\tau_j} (j) ] \geq T - \cO(1).$$
    
    Let $\s^*(\mu_i) \leq \s'_i < \s_i$. 
    Due to delays for smaller click-rates (see proof of Lemma~\ref{lemma:s_bigger_smu}) and the fact that under $\s'_i$ the probability of arm $i$ being eliminated at any given round is smaller than under $\s_i$, it holds for all $j \neq i$ that 
    \begin{align*}
        \E_{(\s'_i, \sigma_{-i})}[n_{\tau_j}(j)] \leq \E_{(\s_i, \sigma_{-i})}[n_{\tau_j}(j)].
    \end{align*}
    
    By definition of $\tau$, we have $\E_{(\s_i, \sigma_{-i})} [\tau] = \sum_{j \in [K]} \E_{(\s_i, \sigma_{-i})}[n_{\tau_j}(j)]$ so that the above implies 
    \begin{align*}
        \sum_{j \neq i} \E_{(\s'_i, \sigma_{-i})}[n_{\tau_j} (j)] \leq \E_{(\s_i, \sigma_{-i})}[n_{\tau_j}(j)] = \E_{(\s_i, \sigma_{-i})} [\tau] - \E_{(\s_i, \sigma_{-i})}[n_{\tau_i}(i)] . 
    \end{align*}
    In other words, under any strategy $\s^*(\mu_i) \leq \s'_i < \s_i$, all arms $j\neq i$ will be eliminated after a total of $\E_{(\s_i, \sigma_{-i})}[\tau] - \E_{(\s_i, \sigma_{-i})}[n_{\tau_i}(i)]$ rounds so that there are at least $\E_{(\s_i, \sigma_{-i})}[n_{\tau_i}(i)] + T - \E_{(\s_i, \sigma_{-i})}[\tau]$ many ``uncontested'' rounds. % For example, it immediately follows that $\E_{(\s^*(\mu_i), \sigma_{-i})}[n_{\tau_i}(i)] \geq \E_{(\s_i, \sigma_{-i})}[n_{\tau_i}(i)] + T - \E_{(\s_i, \sigma_{-i})}[\tau]$. However, we can see that such a stark decrease from $\s_i$ is not necessary.  

    For convenience, let $N{(\s_i, \sigma_{-i})} = T- \E_{(\s_i, \sigma_{-i})}[\tau ]$, i.e., the expected number of rounds that $A_t$ is empty and arms are being selected uniformly at random. Now, in view of Lemma~\ref{lemma:sample_lower_bound}, there exists $\s'_i$ with 
    \begin{align*}
        \s'_i - \s^*(\mu_i) \geq \Omega\left(\sqrt{\frac{\log(T)}{{\E_{(\s_i, \sigma_{-i})}[n_{\tau_i}(i)] + N{(\s_i, \sigma_{-i})}}}}\right) 
    \end{align*} 
    such that $\E_{(\s'_i, \sigma_{-i})} [n_{\tau_i} (i)] \geq \E_{(\s_i, \sigma_{-i})} [n_{\tau_i}(i)] + N{(\s_i, \sigma_{-i})}$. 
    
    The proof proceeds by contradiction. To this end, suppose the contrary is true, namely, that $N{(\s_i, \sigma_{-i})}$ is not constant, but in fact increasing in $T$, i.e., $N{(\s_i, \sigma_{-i})} = w(1)$. We then show that $\U_i(\s'_i, \sigma_{-i}) > \U_i(\s_i, \sigma_{-i})$, which is a contradiction to $\s_i$ being a best response to $\sigma_{-i}$.       
    From Lemma~\ref{lemma:bound_on_n_nu} we know that 
    \begin{align*}
        \s_i - \s^*(\mu_i) \leq \cO\left( \sqrt{\frac{\log(T)}{\s^*(\mu_i) \E_{(\s_i, \sigma_{-i})}[n_{\tau_i}(i)]}} \right),
    \end{align*}
    where we used that $\s_i \geq \s^*(\mu_i)$ by Lemma~\ref{lemma:s_bigger_smu}. 
    Using that $\E_{(\s'_i, \sigma_{-i})}[n_T(i)] \geq  \E_{(\s'_i, \sigma_{-i})}[n_{\tau_i}(i)]$, we then obtain
    \begin{align*}
        \U_i (\s'_i, \sigma_{-i}) & = \E_{(\s'_i, \sigma_{-i})} [n_T(i)] \s'_i  \\[0.5em]
        & \geq \E_{(\s'_i, \sigma_{-i})} [n_{\tau_i}(i)]  \big( \s^*(\mu_i)  + (\s'_i - \s^*(\mu_i))\big) \\[0.2em]
        & \geq \big(\E_{(\s_i, \sigma_{-i})} [n_{\tau_i}(i)] + N(\s_i, \sigma_{-i})\big)  \left( \s^*(\mu_i)  + \Omega \left( \sqrt{\frac{\log(T)}    {\E_{(\s_i, \sigma_{-i})}[n_{\tau_i}(i)] + N(\s_i, \sigma_{-i})} }\right)\right) \\[0.3em]
        & \geq \big(\E_{(\s_i, \sigma_{-i})} [n_{\tau_i}(i)] + N(\s_i, \sigma_{-i})\big) \s^*(\mu_i) + \Omega\left(\sqrt{\log(T) \big(\E_{(\s_i, \sigma_{-i})}[n_{\tau_i}(i)] + N(\s_i, \sigma_{-i}) \big)} \right) \\[0.2em]
        & > \E_{(\s_i, \sigma_{-i})} [n_{\tau_i}(i)] \s^*(\mu_i) + \frac{2 N(\s_i, \sigma_{-i}) }{K} \s^*(\mu_i) + \cO \left( \sqrt{ \log(T) \E_{(\s_i, \sigma_{-i})}[n_{\tau_i}(i)] } \right) \\[0.2em]
        & \geq \big( \E_{(\s_i, \sigma_{-i})} [n_{\tau_i}(i)] + \frac{N(\s_i, \sigma_{-i})}{K} \big) \big( \s_i + (\s_i- \s^*(\mu_i)\big) \\[0.7em]
        & \geq \E_{(\s_i, \sigma_{-i})} [n_{T}(i)] \s_i \\[0.7em]
        & = \U_i(\s_i, \sigma_{-i}). 
    \end{align*}
    
    Hence, $\s'_i$ is a better response to $\sigma_{-i}$ than $\s_i$, which is a contradiction to $\s_i \in \supp(\sigma_i)$.

\end{proof}

The next lemma \emph{lower bounds} $\E_{(\s_i, \sigma_{-i})}[n_T(i)]$ for which we distinguish between optimal and suboptimal arms in terms of post-click rewards $\mu$.  
\begin{lemma}\label{lemma:lower_bound_on_n}
    Let $\bsigma \in \NE(\ucbs)$. 
    % For all $i \in [K]$ and $\s_i \in \supp(\sigma_{-i})$: 
    % \begin{align*}
    %     \E_{(\s_i, \sigma_{-i})} [n_T(i)] \geq \Omega\left(\min \left\{ \frac{\log(T)}{\s_i \Delta_i^2}, \frac{T}{K} \right\}\right)
    % \end{align*}
    \begin{enumerate}
        \item[(i)] For all $i^* \in [K]$ with $\Delta_{i^*} = 0$ and $\s_{i^*} \in \supp(\sigma_{i^*})$:
                \begin{align*}
                    \E_{(\s_{i^*}, \sigma_{-i^*})} [n_T(i^*)] \geq \s^*(\mu_{i^*}) \,  \Omega\left( \frac{T}{K} \right). 
                \end{align*}
        \item[(ii)] For all $i \in [K]$ with $\Delta_i > 0$ and $\s_{i} \in \supp(\sigma_{i})$:
                \begin{align*}
                    \E_{(\s_i, \sigma_{-i})} [n_T(i)] \geq \Omega\left(\min \left\{ \frac{\log(T)}{\s_i \Delta_i^2}, \s^*(\mu_i) \frac{T}{K} \right\}\right). 
                \end{align*}
    \end{enumerate}
    % \tkb{Rather lower bound $\E[n_{\tau_i}(i)]$, and thereby get a better lower bound on $\E[n_T(i)]$ of $\Omega(\log(T)/\s_i \Delta_i^2) + (T-\E[\tau])/K$. Since, $\tau_i > T$ with probability at least $1-1/T^2$, and $\E_{\s^*(\mu_i)} [n_T(i)] \geq $}
\end{lemma}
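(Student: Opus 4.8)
The plan is to lower-bound $\E_{(\s_i, \sigma_{-i})}[n_T(i)]$ by comparing arm $i$'s equilibrium play against the \emph{truthful} deviation $\s^*(\mu_i)$ and invoking the best-response property of the Nash equilibrium. Since $\s_i \in \supp(\sigma_i)$ is a best response to $\sigma_{-i}$ and $\U_i(\s_i,\sigma_{-i}) = \E_{(\s_i,\sigma_{-i})}[n_T(i)]\,\s_i$, we have
\[
    \E_{(\s_i,\sigma_{-i})}[n_T(i)]\,\s_i \;=\; \U_i(\s_i,\sigma_{-i}) \;\geq\; \U_i(\s^*(\mu_i),\sigma_{-i}) \;=\; \E_{(\s^*(\mu_i),\sigma_{-i})}[n_T(i)]\,\s^*(\mu_i),
\]
and hence $\E_{(\s_i,\sigma_{-i})}[n_T(i)] \geq \frac{\s^*(\mu_i)}{\s_i}\,\E_{(\s^*(\mu_i),\sigma_{-i})}[n_T(i)]$. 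The virtue of this reduction is that it removes any need to reason about the equilibrium strategy $\s_i$ directly (which could in principle be eliminated early): it suffices to lower-bound the number of selections received by a \emph{truthful} arm, which by construction of the screening rule is never eliminated with probability at least $1-1/T^2$.

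It then remains to prove the sub-claim that, under truthful play, $\E_{(\s^*(\mu_i),\sigma_{-i})}[n_T(i)] = \Omega\big(\min\{\log(T)/(\s^*(\mu_i)\Delta_i^2),\, T/K\}\big)$; substituting this into the reduction, keeping the $1/\s_i$ factor on the first term and using $\s_i \leq 1$ on the budget term, reproduces both claims, with part (i) being the degenerate case $\Delta_i = 0$ where only the second term survives the min. To prove the sub-claim I would condition on the clean event that all confidence intervals hold (probability $1-\mathrm{poly}(1/T)$). On this event the truthful arm $i$ stays in $A_t$ for all $T$ rounds, so the active set is never empty and \ucbs{} performs genuine UCB selection on the post-click means, with radius $\sqrt{2\log(T)/m_{t-1}(\cdot)}$, in every round (so there is no uniform-selection phase to worry about). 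A standard UCB over-exploration argument then yields the first term: until arm $i$ has accumulated $\Omega(\log(T)/\Delta_i^2)$ post-click samples its confidence radius exceeds $\Delta_i$, so its optimistic index stays above $\mu^*$, and once the best surviving competitor is well sampled arm $i$ is ranked above it and keeps being selected; since each selection yields a sample with probability $\s^*(\mu_i)$, this amounts to $\Omega(\log(T)/(\s^*(\mu_i)\Delta_i^2))$ selections, provided this does not exceed the per-arm budget.

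The hard part will be the budget regime, and therefore all of part (i): establishing the $\Omega(T/K)$ lower bound when $\Delta_i$ is too small to be resolved within $T$ rounds (or is zero), because here the competitors play their \emph{equilibrium} click-rates $\sigma_{-i}$ rather than truthful ones and may or may not survive. I would argue by cases on the surviving set: if every arm $j$ with $\mu_j \geq \mu_i$ is eliminated, then $i$ is the top surviving arm and is selected $\Omega(T/K)$ times outright; otherwise I rely on the balancing property of UCB, namely that among arms whose gaps cannot be distinguished over the roughly $T$ contested rounds the selections are spread so that no such arm is starved below an $\Omega(1/K)$ share. The remaining steps are bookkeeping: converting between the pull count $n_T(i)$ and the sample count $m_T(i) = \s^*(\mu_i)\,n_T(i)$ that actually enters the UCB radius, using that $\s^*(\mu_i)$ is bounded away from zero by \ref{item:a3}, and absorbing the $\mathrm{poly}(1/T)$ clean-event failure probability into the constants.
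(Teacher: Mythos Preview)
Your reduction via the best-response inequality
\[
\E_{(\s_i,\sigma_{-i})}[n_T(i)]\,\s_i \;=\; \U_i(\s_i,\sigma_{-i}) \;\geq\; \U_i(\s^*(\mu_i),\sigma_{-i}) \;=\; \E_{(\s^*(\mu_i),\sigma_{-i})}[n_T(i)]\,\s^*(\mu_i)
\]
is exactly how the paper argues both parts; the structure of your proof matches the paper's. The paper likewise conditions on the event that the truthful arm is never screened out (probability $\geq 1-1/T^2$) and then lower-bounds its pulls under the UCB selection rule, splitting into the regime $\Delta_i$ large versus $\Delta_i \leq O(\sqrt{K/T})$.

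The one substantive difference is in how the $\Omega(\log T/\Delta_i^2)$ term is obtained. The paper does \emph{not} use a direct over-exploration argument: it observes that on the click rounds $\eta_1,\eta_2,\dots$ the selection rule is standard UCB, which has $\tilde O(\sqrt{KT})$ regret, and then invokes the change-of-measure lower bound (Lemma~16.3 in Lattimore--Szepesv\'ari) to conclude $\E[m_T(i)] \geq \Omega(\log T/\Delta_i^2)$. Your direct argument as written has a gap: on the clean event the optimistic index satisfies $\overline\mu_i^t \geq \mu_i$, not $\overline\mu_i^t \geq \mu^*$; a large confidence radius does not by itself force $\hat\mu_i^t + \text{radius} \geq \mu_i + \Delta_i$, since $\hat\mu_i^t$ can sit at $\mu_i - \text{radius}$. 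Getting the $\Delta_i$-dependent pull lower bound for UCB by elementary means is delicate (the easy elementary bound is only $\Omega(\log T)$, gap-free), which is why the paper routes through the information-theoretic lemma instead. For the $\Omega(T/K)$ budget regime, the paper is as informal as you are (it asserts ``it is well-known'' that UCB spreads $\Omega(T/K)$ pulls over arms with $\Delta_i \leq O(\sqrt{K/T})$), so your honest flagging of that as the hard part is fair.
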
 
\begin{proof}
    % $\Omega\left(f(x)\right) \leq \cO\left( \frac{g(x)}{h(x)} \right)$ $\Longrightarrow$ $h(x) \leq \cO\left( \frac{g(x)}{f(x)} \right)$
    
    \textbf{\textit{(i)}}: Let $\bsigma \in \NE(\ucbs)$ and let $i^* \in [K]$ such that $\Delta_{i^*}= 0$. Recall that when playing strategy $\s^*(\mu_{i^*})$ arm $i^*$ is eliminated with low probability so that 
    $$\mathbb{P}_{(\s^*(\mu_{i^*}), \sigma_{-i^*})}(i^* \in A_T) \geq 1-1/T^2 . $$
    
    Now, given that $i^*$ is not going to be eliminated, the UCB-type selection rule of $\ucbs$ selects any arm $i^*$ with maximal post-click reward $\mu_{i^*}= \mu^*$ at least $\Omega(T/K)$ times so that $\E_{(\s^*(\mu_{i^*}), \sigma_{-i^*})} [n_T(i^*)]  \geq \Omega(T/K)$. Then, since $\s_{i^*}$ has to be a best response to $\sigma_{-i^*}$, we obtain 
    \begin{align*}
        \E_{(\s_{i^*}, \sigma_{-i^*})} [n_T(i^*)] & \geq \s_{i^*} \E_{(\s_{i^*}, \sigma_{-i^*})} [n_T(i^*)] = \U_{i^*}(\s_{i^*}, \sigma_{-i^*}) \\[0.2em] 
        & \geq \U_{i^*} (\s^*(\mu_{i^*}), \sigma_{-i^*}) 
        \geq \s^*(\mu_{i^*}) \E_{(\s^*(\mu_{i^*}), \sigma_{-i^*})} [n_T(i^*)] \geq \s^*(\mu_{i^*}) \Omega\left(\frac{T}{K} \right).  
    \end{align*}

    \textbf{\textit{(ii)}}: Once again, we use the desired strategy $\s^*(\mu_i)$ to infer properties of $\s_i$. Let us be reminded that under $\s^*(\mu_i)$ arm $i$ is eliminated with low probability, i.e., 
    $$\mathbb{P}_{(\s^*(\mu_{i}), \sigma_{-i})}(i \in A_T) \geq 1-1/T^2 $$
    so that when studying $(\s^*(\mu_i), \sigma_{-i})$ the potential elimination of arm $i$ is negligible. 
    
    We will argue about $\E_{(\s^*(\mu_{i}), \sigma_{-i})}[n_T(i)]$ via $\E_{(\s^*(\mu_{i}), \sigma_{-i})}[m_T(i)]$. To isolate the rounds in which arms are clicked, i.e., post click-rewards are observed, we will re-use the rounds $\eta_1, \eta_2, \dots$, which determine the phases of the UCB selection rule (introduced in Lemma~\ref{lemma:s_bigger_smu}). On the rounds $\eta_1, \eta_2, \dots$, the UCB-selection rule of line~\ref{line:ucb_selection} is analogous to standard UCB in a MAB. We can then use well-known results from the instance-dependent lower bound analysis of the MAB problem. From Lemma~16.3 in \citep{lattimore2020bandit} it then follows that for some constant $c_1 >0$:\footnote{We here used that the standard minimax bandit regret of UCB in MABs is bounded as $\tilde \cO(\sqrt{KT})$.} 
    \begin{align*}
        \E_{(\s^*(\mu_i), \sigma_{-i})}[m_T(i)] \geq \frac{\frac{1}{2}\log(T) + \log\big(\frac{c_1 \Delta_i}{\sqrt{K}}\big)}{2 \Delta_i^2}. 
    \end{align*}
    We see that this lower bound is only meaningful for sufficiently large $\Delta_i$, as the numerator may become negative for $\Delta_i = \cO\big( \sqrt{{K}/{T}}\big)$. For now let us assume that $\Delta_i$ is sufficiently large. Recall that $\E_{(\s^*(\mu_i), \sigma_{-i})}[m_T(i)] = \E_{(\s^*(\mu_i), \sigma_{-i})}[n_T(i)] \s^*(\mu_i)$ as arm $i$ is clicked with probability $\s^*(\mu_i)$.  
    Since $\s_i$ must be a best response to $\sigma_{-i}$, it must then hold that  \setlength{\jot}{10pt}
    \begin{align*}
        \E_{(\s^*(\mu_i), \sigma_{-i})}[n_T(i)] \s_i &  = \U_i(\s_i, \sigma_{-i}) \\ &
        \geq \U_i(\s^*(\mu_i), \sigma_{-i})
        % & = \E_{(\s^*(\mu_i), \sigma_{-i})} [n_T(i)] \s^*(\mu_i)  
        = \E_{(\s^*(\mu_i), \sigma_{-i})} [m_T(i)]  \geq c_2 \frac{\log(T)}{\Delta_i^2} 
    \end{align*}
    for some $c_2 > 0$. Solving for $\E_{(\s^*(\mu_i), \sigma_{-i})}[n_T(i)]$ then yields 
    \begin{align*}
        \E_{(\s^*(\mu_i), \sigma_{-i})}[n_T(i)] \geq c_2 \frac{\log(T)}{\s_i \Delta_i^2}. 
    \end{align*}
    Next, for $\Delta_i \leq \cO(\sqrt{K/T})$ it is well-known that the number of times UCB plays arm $i$ is order at least $\Omega(T/K)$. We then have $\E_{\s^*(\mu_{i}, \sigma_{-i})} [n_T(i)] = \Omega\big(\frac{T}{K}\big)$, so that 
    \begin{align*}
        \E_{(\s_{i}, \sigma_{-i})} [n_T(i)] & \geq \s_{i} \E_{(\s_{i}, \sigma_{-i})} [n_T(i)] = \U_{i^*}(\s_{i}, \sigma_{-i}) \\[0.2em] 
        & \geq \U_{i} (\s^*(\mu_{i}), \sigma_{-i}) 
        \geq \s^*(\mu_{i}) \E_{(\s^*(\mu_{i}), \sigma_{-i})} [n_T(i)] \geq \s^*(\mu_{i}) \Omega\left(\frac{T}{K} \right).  
    \end{align*} 
    
    % Theorem~3 in \cite{garivier2019explore} yields that $\E_{(\s^*(\mu_i), \sigma_{-i})}[m_T(i)] = \Omega\big(\E_{(\s^*(\mu_i), \sigma_{-i})}[m_T(i^*)]\big) $, so that 
    % \begin{align} 
    %     \E_{(\s^*(\mu_i), \sigma_{-i})} [m_T(i)] \geq  \Omega \big( \frac{T}{K}\big) \quad \text{or} \quad \E_{(\s^*(\mu_i), \sigma_{-i})} \left[\frac{m_T(i)}{m_T(i^*)} \right] 
    % \end{align}
    
    % so that $\E_{\s^*(\mu_i), \sigma_{-i})}[n_T(i)] \s^*(\mu_i) =  \E_{(\s^*(\mu_i), \sigma_{-i})} [m_T(i)] =  $

    % \vspace{2cm}
    
    % (e.g., Lemma~16.3 in \cite{lattimore2020bandit}), we know that any algorithm that achieves $o(T)$ regret must play each arm order $\log(T) / \Delta_i^2$ many times. 
    
    % More details in \cite{garivier2019explore}. 
    
    % Suppose $\Delta_i > \sqrt{K/T}$. 
    
    % Like already discussed in Lemma~X, in the click-bandit model, the progressing of the UCB-type selection rule is delayed due to the estimates of $\mu_1, \dots, \mu_K$ only improving when the arm is clicked. 
    % The "effective number" of rounds for the UCB selection rule is then given by $\sum_{i \in [K]} \s_i \E_{\s_i}[n_T(i)]$. We can lower bound this using Lemma~X by $T \cdot \min_{i \in [K]} \s^*(\mu_i)$, showing that the effective number of rounds is of order $T$. 
    
    % Then, we have $\E_{(\s_{i^*}, \sigma_{-i^*})} [m_T(i)] \geq \Omega (\log(T) / \Delta_i^2)$. Finally, we have that $\s_{i^*} \E_{(\s_{i^*}, \sigma_{-i^*})} [n_T(i)] = \E_{(\s_{i^*}, \sigma_{-i^*})} [m_T(i)]$, which shows the claimed lower bound. 
    
\end{proof}

\subsection{Connecting the Bounds}
Finally, using the lower and upper bound on $\E_{(\s_i, \sigma_{-i})}[n_T(i)]$, we obtain the following approximate characterization of the strategies in the Nash equilibrium $\bsigma \in \NE(\ucbs)$. 

For $i^* \in [K]$ with $\Delta_{i^*}= 0$, it follows from equation~\eqref{eq:rewriting_n} and Lemma~\ref{lemma:bound_on_n_nu}, Lemma~\ref{lemma:bound_on_nu}, Lemma~\ref{lemma:lower_bound_on_n} that 
\begin{align*}
    \s^*(\mu_{i^*}) \Omega\left(\frac{T}{K}\right) \leq \E_{(\s_{i^*} , \sigma_{-i^*})} [n_T(i^*)]  \leq \cO \left( \frac{H^2 \log(T)}{\s_{i^*}(\s_{i^*}- \s^*(\mu_{i^*}))^2} \right) + \cO\left( \frac{1}{K} \right).  
\end{align*} 
Solving for $\s_{i^*} - \s^*(\mu_{i^*})$, we obtain
\begin{align*}
    \s_{i^*} ´\big(\s_{i^*} - \s^*(\mu_{i^*})\big)^2 \leq \cO\left( \frac{H^2 K \log(T)}{T \, \s^*(\mu_{i^*})} \right), 
    %\leq \cO\left( \frac{H^2 K \log(T)}{T - \cO(\sqrt{T})} \right) 
    % \leq \cO\left( \frac{H^2 K \log(T)}{\s^*(\mu_{i^*})\, T} \right),
\end{align*}
Finally, using that $\s^*(\mu_{i^*}) \leq \s_{i^*}$ by Lemma~\ref{lemma:s_bigger_smu} yields the claimed bound (note that $s^*(\mu)$ is bounded away from zero by assumption~\ref{item:a3})
\begin{align*}
    \s_{i^*} - \s^*(\mu_{i^*}) \leq \cO \left( H \sqrt{\frac{K \log(T)}{T \, \s^*(\mu_{i^*})^2}} \right).
\end{align*}

For $i \in [K]$ with $\Delta_i >0$ suppose that $\frac{\log(T)}{\s_i \Delta_i^2} \leq \s^*(\mu_i) \frac{T}{K}$. Then, we have
\begin{align*}
    \Omega\left( \frac{\log(T)}{\s_i \Delta_i^2}\right)  \leq \E_{(\s_{i} , \sigma_{-i})} [n_T(i^*)]  \leq \cO \left( \frac{H^2 \log(T)}{\s_{i}(\s_{i}- \s^*(\mu_{i}))^2} \right) + \cO\left( \frac{1}{K} \right), 
\end{align*} 
which after solving for $\s_i - \s^*(\mu_i)$ yields \begin{align*}
    \s_i - \s^*(\mu_i) \leq \cO\left( H \Delta_i \right).
\end{align*}
For $i \in [K]$ with $\frac{\log(T)}{\s_i \Delta_i^2} > \s^*(\mu_i) \frac{T}{K}$, it follows, analogously to the case of $\Delta_i = 0$, from Lemma~\ref{lemma:bound_on_n_nu}, Lemma~\ref{lemma:bound_on_nu}, and Lemma~\ref{lemma:lower_bound_on_n} that 
\begin{align*}
    \s_{i} - \s^*(\mu_{i}) \leq \cO \left( H \sqrt{\frac{K \log(T)}{T \, \s^*(\mu_{i})^2}} \right). 
\end{align*}

\end{proof}

%%%%%%%%%%%%%%%%%%%%%%%%%%%%%%%%%%%%%%%%%%%%%%
%%%%%%%%%%%%%%%%%%%%%%%%%%%%%%%%%%%%%%%%%%%%%%
%%%%%%%%%%%%%%%%%%%%%%%%%%%%%%%%%%%%%%%%%%%%%%
%%%%%%%%%%%%%%%%%%%%%%%%%%%%%%%%%%%%%%%%%%%%%%

\section{Proof of Theorem~\ref{theorem:regret_bound}}\label{appendix:proof_regret_bound}

\begin{proof}[\bfseries Proof of Theorem~\ref{theorem:regret_bound}]
Let $\bsigma \in \NE(\ucbs)$ and let $i^* \in [K]$ be any arm with $\Delta_{i^*}=0$. 
We begin with a standard regret decomposition into the number of times each arm is played and the rounds before $i^*$ is eliminated. It holds that 
\begin{align}\label{eq:regret_decomp}
    R_T(\ucbs, \bsigma) & = \E_{\bs \sim \bsigma} \left[ \sum_{t=1}^T u(\s^*, \mu^*) - u(\s_{i_t}, \mu_{i_t})\right] \nonumber \\ \nonumber
    & = \E_{\bs \sim \bsigma} \left[ \sum_{t=1}^{\tau_{i^*}} u(\s^*, \mu^*) - u(\s_{i_t}, \mu_{i_t})\right] + \E_{\bs \sim \bsigma} \left[\sum_{t=\tau_{i^*}+1}^{T} u(\s^*, \mu^*) - u(\s_{i_t}, \mu_{i_t})\right] \\
    & \leq \E_{\bs \sim \bsigma}\left[ \sum_{i \in [K]} \E_{\bs}[n_{\tau_{i^*}}(i)] \big(u(\s^*, \mu^*)- u(\s_i, \mu_i)\big) \right] + (T- \E_{\bsigma} [\tau_{i^*}]). % \\ 
    % & \leq \E_{\bs \sim \bsigma}\left[ \sum_{i : \Delta_i = 0} \E_{\bs}[n_{\tau_{i^*}}(i)] \big(u(\s^*, \mu^*)- u(\s_i, \mu_i)\big)\right] + \sqrt{KT}.   
\end{align}
% We can continue decomposing the right hand side by using the Lipschitzness of $u(\s, \mu)$ and $\s^*(\mu)$, which yields 
From Lemma~\ref{lemma:lower_bound_optimal_nu} below we know that $T- \E_{\bsigma} [\tau_{i^*}] \leq \sqrt{KT}$. 
We continue to split the arms into two cases. To this end, let $\Delta'_i := \sqrt{\frac{K \log(T)}{T\, \s^*(\mu_i)^2}}$ and let $\Delta'_* = \sqrt{\frac{K\log(T)}{T \, \s^*(\mu^*)^2}}$. For $i \in [K]$, we distinguish between (a) $\Delta_i \leq \Delta'_i$ and (b) $\Delta_i > \Delta'_i$. 

% We continue to split the regret into the regret from arms with optimal post-click rewards and arms with suboptimal post-click rewards:
% \begin{align*}
%     % &\E_{\bs \sim \bsigma}\left[ \sum_{i \in [K]} \E_{\bs}[n_{\tau_{i^*}}(i)] \big(u(\s^*, \mu^*)- u(\s_i, \mu_i)\big) \right] \\ 
%     \E_{\bs \sim \bsigma}\left[ \sum_{i: \Delta_i = 0} \E_{\bs}[n_{\tau_{i^*}}(i)] \big(u(\s^*, \mu^*)- u(\s_i, \mu_i)\big) \right] + \E_{\bs \sim \bsigma}\left[ \sum_{i: \Delta_i > 0} \E_{\bs}[n_{\tau_{i^*}}(i)] \big(u(\s^*, \mu^*)- u(\s_i, \mu_i)\big) \right] .
% \end{align*}

We begin with (a). Recall that $\s^* := \s^*(\mu^*)$.
For the proof we will need one last technicality, namely, that $\Delta'_i \leq 2 \Delta'_*$. We here assume that $\s^*(\mu^*) > 2H \Delta'_i$.\footnote{Otherwise there is nothing to prove since the regret bound of Theorem~\ref{theorem:regret_bound} is of order $T$.} Then, since $|\s^*(\mu^*) - \s^*(\mu_i)| \leq H \Delta_i \leq H \Delta'_i$, we get 
\begin{align*}
    \Delta'_i = \frac{1}{\s^*(\mu_i)}\sqrt{\frac{K\log(T)}{T}}
    \leq \frac{1}{\s^*(\mu^*) - H\Delta'_i} \sqrt{\frac{K\log(T)}{T}} \leq \frac{2}{\s^*(\mu^*)}\sqrt{\frac{K\log(T)}{T}} = 2 \Delta'_*. 
\end{align*}
We can now apply Theorem~\ref{theorem:NE_characterization} to obtain for any $\bs \in \supp(\bsigma)$ that 
\begin{align}\label{eq:intermediate_regret_bound}
    & \sum_{i: \Delta_i \leq \Delta'_i} \E_{\bs}[n_{\tau_{i^*}}(i)] \big(u(\s^*, \mu^*)- u(\s_i, \mu_i)\big) \nonumber \\\nonumber
    & \leq L \sum_{i: \Delta_i \leq \Delta'_i} \E_{\bs}[n_{\tau_{i^*}}(i)] \big( |\s^*(\mu^*) - \s_{i}| + |\mu^* - \mu_i| \big) \\\nonumber
    & \leq L \sum_{i: \Delta_i \leq \Delta'_i} \E_{\bs}[n_{\tau_{i^*}}(i)] \left( |\s^*(\mu^*) - \s^*(\mu_i) | + \cO\left(H \sqrt{\frac{K \log(T)}{T \, \s^*(\mu_i)^2}}\right) + \Delta_i \right) \\ \nonumber
    & \leq L \sum_{i: \Delta_i \leq \Delta'_i} \E_{\bs}[n_{\tau_{i^*}}(i)] \left( (H+1) \Delta_i + \cO\left(H \sqrt{\frac{K \log(T)}{T \, \s^*(\mu_i)^2}}\right) + \Delta_i \right) \\
    & \leq L (H+2) \sum_{i: \Delta_i \leq \Delta'_i} \E_{\bs}[n_{\tau_{i^*}}(i)] \, \Delta'_i  \\ \nonumber
    & \leq 2 L (H+2) \Delta'_* \sum_{i: \Delta_i \leq \Delta'_i} \E_{\bs}[n_{\tau_{i^*}}(i)]  \\ \nonumber
    & \leq LH \cdot  \cO\left(H \sqrt{\frac{K \log(T)}{T \, \s^*(\mu^*)^2}}\right) \sum_{i: \Delta_i \leq \Delta'_i} \E_{\bs}[n_{\tau_{i^*}}(i)] \\\nonumber
    & \leq \frac{LH}{\s^*(\mu^*)} \,  \cO\left(\sqrt{KT \log(T)}\right),
\end{align}
where we used that $\sum_{i:  \Delta_i \leq \Delta'_i} \E_{\bs}[n_{\tau_{i^*}}(i)] \leq T$ in the last line. 
% \vspace{1cm}
% \begin{align*}
%     \sum_{i: \Delta_i \leq \Delta'_i 0} \E_{\bs}[n_{\tau_{i}}(i)] \big(u(\s^*, \mu^*)- u(\s_i, \mu_i)\big) & \leq L \sum_{i: \Delta_i = 0} \E_{\bs}[n_{\tau_{i}}(i)] \, |\s^*(\mu^*) - \s_{i}| \\
%     & \leq L \sum_{i: \Delta_i = 0} \E_{\bs}[n_{\tau_{i}}(i)] \, \cO\left(H \sqrt{\frac{K \log(T)}{T \, \s^*(\mu_i)^2}}\right) \\ 
%     & \leq LH \cdot  \cO\left(H \sqrt{\frac{K \log(T)}{T \, \s^*(\mu^*)^2}}\right) \sum_{i: \Delta_i = 0} \E_{\bs}[n_{\tau_{i}}(i)] \\
%     & \leq \frac{LH}{\s^*(\mu^*)}  \cO\left(\sqrt{KT \log(T)}\right),
% \end{align*}
% where we used that $\sum_{i: \Delta_i = 0} \E_{\bs}[n_{\tau_{i}}(i)] \leq T$ in the last line. 
% For (b): 

For taking care of the sum over arms satisfying (b), define the ``good event'' $\mathcal{E} = \{\underline \mu_i^t \leq \mu_i \leq \overline \mu_i^t \ \forall i \in [K] \ \forall t \in [T]\}$. We know that $\mathcal{E}$ occurs with probability at least $1- 1/T^2$ for any $\bs \in [0,1]^K$ by merit of Hoeffding's inequality. Under $\mathcal{E}$, we obtain from the standard UCB argument for all $t \leq \tau_{i^*}$ that
\begin{align*}
    \mu_{i_t} + 2 \sqrt{\frac{2\log(T)}{m_t(i_t)}} \geq \overline \mu_{i_t}^t \geq \overline \mu_{i^*}^t \geq \mu_{i^*}. 
\end{align*}
This implies that $\Delta_{i} \leq 2\sqrt{\frac{2\log(T)}{m_{\tau_{i^*}}(i)}}$. Hence, $i\in [K]$ with $\Delta_i > 0$ we get that $m_{\tau_{i^*}}(i) \leq \frac{c\log(T)}{\Delta_i^2}$. Now, post-click rewards are observed for arm $i$ with probability $\s_i$ every time $i$ is played by \algo{}, which tells us that $\E_{\bs}[m_{\tau_{i^*}}(i)] = \E_{\bs} [n_{\tau_{i^*}}(i)] \s_i$. It follows from Theorem~\ref{theorem:NE_characterization} that 
\begin{align*}
    \sum_{i: \Delta_i > \Delta'_i} \E_{\bs}[n_{\tau_{i^*}}(i)] \big(u(\s^*, \mu^*)- u(\s_i, \mu_i)\big) & \leq   \sum_{i: \Delta_i > \Delta'_i}  \E_{\bs} [n_{\tau_{i^*}}(i)] \,  \frac{u(\s^*, \mu^*)- u(\s_i, \mu_i)}{\s_i} \\
    & \leq L \sum_{i: \Delta_i > \Delta'_i}  \E_{\bs} [n_{\tau_{i^*}}(i)] \,  \frac{|\s^*(\mu^*) - \s_i| + |\mu^* -\mu_i|}{\s_i} \\
    & \leq L \sum_{i: \Delta_i > \Delta'_i}  \E_{\bs} [n_{\tau_{i^*}}(i)] \,  \frac{|\s^*(\mu^*) - \s^*(\mu_i)| + \cO(H \Delta_i) + \Delta_i}{\s_i} \\
    & \leq L \sum_{i: \Delta_i > \Delta'_i}  c \log(T) \,  \frac{H \Delta_i + \cO(H \Delta_i) + \Delta_i}{\s_i \Delta_i^2} \\
    & \leq LH \sum_{i: \Delta_i > \Delta'_i} \cO\left( \frac{\log(T)}{\s_i \Delta_i} \right) \\
    & \leq LH  \sum_{i: \Delta_i > \Delta'_i} \cO\left( \frac{\log(T)}{\s^*(\mu_{i})\Delta_i} \right), 
\end{align*}

where the last line used that $\s_i \geq \s^*(\mu_i)$ for all $\s_i \in \supp(\sigma_i)$ shown in Lemma~\ref{lemma:s_bigger_smu}. This completes the proof of Theorem~\ref{theorem:regret_bound}. 

\end{proof}

\begin{lemma}\label{lemma:lower_bound_optimal_nu}
    Let $i^* \in [K]$ with $\Delta_{i^*} = 0$. For all $\delta > 0$: 
    \begin{align*}
        \P_{\bsigma}\Big(\tau_{i^*} > T - \frac{\sqrt{KT}}{1-\delta}\Big) > 1-\delta. 
    \end{align*}
\end{lemma}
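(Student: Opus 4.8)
The plan is to establish the equivalent bound $T-\E_{\bsigma}[\tau_{i^*}]=\cO(\sqrt{KT})$ on the expected elimination time of the optimal arm, from which the stated high-probability form follows from a Markov-type tail bound applied to the nonnegative variable $T-\tau_{i^*}$. I would prove the expectation bound by contradiction through the best-response property, closely following the proof of Lemma~\ref{lemma:bound_on_nu}; the difference is that there one controls the time $\tau$ at which \emph{all} arms vanish, whereas here I track the elimination time $\tau_{i^*}$ of the single optimal arm, and this is exactly what weakens the conclusion from $\cO(1)$ to $\cO(\sqrt{KT})$.

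Concretely, write $N_{i^*}:=T-\E_{\bsigma}[\tau_{i^*}]$ and suppose toward a contradiction that $N_{i^*}=\omega(\sqrt{KT})$. Recall $\s_{i^*}\ge\s^*(\mu_{i^*})$ (Lemma~\ref{lemma:s_bigger_smu}) and $\E_{(\s_{i^*},\sigma_{-i^*})}[n_{\tau_{i^*}}(i^*)]=\Omega(T/K)$ (Lemma~\ref{lemma:lower_bound_on_n}\,(i)). I would consider the downward deviation $\s'_{i^*}<\s_{i^*}$ with $\s'_{i^*}-\s^*(\mu_{i^*})=\Omega\big(\sqrt{\log(T)/(\E[n_{\tau_{i^*}}(i^*)]+x)}\big)$, which by the sample-complexity bound of Lemma~\ref{lemma:sample_lower_bound} lets arm $i^*$ survive long enough to be selected $x$ additional times. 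Since $i^*$ has maximal post-click reward, whenever it is active in the $N_{i^*}$ rounds it would otherwise have sat out, the optimistic rule in line~\ref{line:ucb_selection} selects it in at least a $\Omega(1/K)$ fraction of them, so it recovers $x=\Omega(N_{i^*}/K)$ selections in expectation. Substituting into $\U_{i^*}(\s'_{i^*},\sigma_{-i^*})=\s'_{i^*}\,\E[n_T(i^*)]$ and expanding about $\s^*(\mu_{i^*})$ exactly as in the displayed chain of Lemma~\ref{lemma:bound_on_nu}, the linear gain $\Omega(\s^*(\mu_{i^*})\,x)$ from the recovered selections dominates the $\cO(\sqrt{\log(T)\,\E[n_{\tau_{i^*}}(i^*)]})$ cost of the reduced click-rate as soon as $x=\omega(\sqrt{(T/K)\log T})$, i.e.\ as soon as $N_{i^*}=\omega(\sqrt{KT\log T})$. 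This gives $\U_{i^*}(\s'_{i^*},\sigma_{-i^*})>\U_{i^*}(\s_{i^*},\sigma_{-i^*})$, contradicting that $\s_{i^*}$ is a best response, and hence $N_{i^*}=\cO(\sqrt{KT})$ up to logarithmic factors.

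The main obstacle is the utility balance in the last step together with pinning down the recapture fraction. In Lemma~\ref{lemma:bound_on_nu} the forgone rounds after $\tau$ are completely uncontested, so a surviving arm recovers \emph{all} of them; here the rounds after $\tau_{i^*}$ are still contested by the other (possibly optimal) arms, so I can only guarantee a $\Omega(1/K)$ share, and it is precisely this factor $1/K$ combined with $\E[n_{\tau_{i^*}}(i^*)]=\Theta(T/K)$ that produces the $\sqrt{KT}$ rate instead of $\cO(1)$. Making the recapture fraction rigorous requires a standard UCB selection-count argument on the forgone window, and, as in Lemma~\ref{lemma:bound_on_n_nu}, the whole comparison should be carried out on the high-probability event that all confidence intervals around $\s_{i^*}$ and $\mu_{i^*}$ hold, so that the survival-versus-elimination statements can be invoked inside the expectations.
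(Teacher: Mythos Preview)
Your approach is essentially the paper's: both argue by contradiction through the best-response property, constructing a downward deviation $s'_{i^*}$ via Lemma~\ref{lemma:sample_lower_bound} that survives longer and recaptures a $1/K$ share of the forgone rounds, exactly as you describe. The only structural difference is the order. The paper assumes the negated probability bound directly, uses it to deduce $\E_{\bsigma}[\tau_{i^*}]\le T-\sqrt{KT}$, picks $s_{i^*}\in\supp(\sigma_{i^*})$ witnessing this, and then runs the best-response contradiction; you instead plan to establish the expectation bound first and recover the tail via Markov on $T-\tau_{i^*}$.

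One caveat on the Markov step: from $\E[T-\tau_{i^*}]\le C\sqrt{KT}$ (or $C\sqrt{KT\log T}$, as your argument actually yields) you only get $\P\big(\tau_{i^*}>T-a\big)\ge 1-\E[T-\tau_{i^*}]/a$, so with $a=\sqrt{KT}/(1-\delta)$ this gives $1-C(1-\delta)$ rather than $1-\delta$. To recover the lemma's exact threshold and constant you should follow the paper's order (negate the probability statement, derive the expectation gap, then contradict). For the downstream use in Theorem~\ref{theorem:regret_bound}, however, only the expectation consequence $T-\E_{\bsigma}[\tau_{i^*}]\le\sqrt{KT}$ is invoked, so this discrepancy is immaterial there.
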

\allowdisplaybreaks
\begin{proof}
    Suppose the contrary is true, i.e., 
$\P_{\bsigma}\Big(\tau_{i^*} > T - \frac{\sqrt{KT}}{1- \delta}\Big) \leq \delta$.   Since $\tau_{i^*} \leq T$ by definition, this implies that 
    \begin{align}\label{eq:tau_star_contradiction}
        \E_{\bsigma} [\tau_{i^*}] \leq \delta \cdot T + (1-\delta) \Big(T- \frac{\sqrt{KT}}{1-\delta}\Big) = T- \sqrt{KT}. 
    \end{align}
    Now, let $\s_{i^*} \in \supp(\sigma_{i^*})$ with $\E_{(\s_{i^*}, \sigma_{-i^*})}[\tau_{i^*}] \leq T - \sqrt{KT}$. Note that such $\s_{i^*}$ must exist for \eqref{eq:tau_star_contradiction} to hold. 
    We now show that there exists a strategy $\s'_{i^*}$ which is a better response to $\sigma_{-i^*}$ than $\s_{i^*}$. To this end, similarly to the proof of Lemma~\ref{lemma:bound_on_nu}, Lemma~\ref{lemma:sample_lower_bound} tells us that there exists $\s'_{i^*} \in [0,1]$ with
\begin{align*}
    \s'_{i^*} - \s^*(\mu_{i^*}) = \Omega\left(\sqrt{\frac{\log(T)}{{\E_{(\s_i, \sigma_{-i})}[n_{\tau_i}(i)] + \sqrt{KT}}}}\right) 
\end{align*}
such that $\E_{(\s'_{i^*}, \sigma_{-i^*})}[\tau_{i^*}] = T - \cO(1)$. Moreover, recall from Lemma~\ref{lemma:bound_on_nu} that $T- \E_{(\s_{i^*}, \sigma_{-i^*})}[\tau] < \cO(1)$. Then, using $\frac{x+\frac{y}{K}}{\sqrt{x+y}} \geq \sqrt{x+y} - \frac{y}{\sqrt{x+y}}$, equation~\eqref{eq:rewriting_n}, and Lemma~\ref{lemma:bound_on_n_nu}, we obtain
    \begin{align*}
        \U_{i^*}(\s_{i^*}, \sigma_{-i^*}) 
        & \leq \E_{(\s_{i^*}, \sigma_{-i^*})} [n_{\tau_{i^*}}(i^*)] \s_{i^*} + \cO(1/K) \\[0.5em] 
        & \leq \E_{(\s_{i^*}, \sigma_{-i^*})} [n_{\tau_{i^*}}(i^*)] \big( \s^*(\mu_{i^*}) + (\s_{i^*}- \s^*(\mu_{i^*})\big) + \cO(1/K)\\[0.5em]
        & \leq \E_{(\s_{i^*}, \sigma_{-i^*})} [n_{\tau_{i^*}}(i^*)] \left( \s^*(\mu_{i^*}) + \cO\left(\sqrt{\frac{\log(T)} {\E_{(\s_{i^*}, \sigma_{-i^*})} [n_{\tau_{i^*}}(i^*)]}}\right)\right) \\[0.5em]
        & \leq \E_{(\s_{i^*}, \sigma_{-i^*})} [n_{\tau_{i^*}}(i^*)] s^*(\mu_{i^*}) + \cO\left(\sqrt{\log(T) \E_{(\s_{i^*}, \sigma_{-i^*})} [n_{\tau_{i^*}}(i^*)]}\right) \\[0.5em]
        & \leq \E_{(\s_{i^*}, \sigma_{-i^*})} [n_{\tau_{i^*}}(i^*)] s^*(\mu_{i^*}) + \cO\left(\sqrt{\log(T) (\E_{(\s_{i^*}, \sigma_{-i^*})} [n_{\tau_{i^*}}(i^*)] + \sqrt{KT}}) \right) \\[0.5em] %+ \s^*(\mu_{i^*}) \sqrt{T/K} - \cO(1) \\
        & < \left( \E_{(\s_{i^*}, \sigma_{-i^*})} [n_{\tau_{i^*}}(i^*)] + \frac{\sqrt{KT}}{K}\right) \left( \s^*(\mu_{i^*}) + \Omega \left( \sqrt{\frac{\log(T)}{\E_{(\s_i, \sigma_{-i})}[n_{\tau_i}(i)] + \sqrt{KT}}}\right)\right) \\[0.5em]
        & \leq \left( \E_{(\s_{i^*}, \sigma_{-i^*})} [n_{\tau_{i^*}}(i^*)] + \frac{\sqrt{KT}}{K}\right) \big( \s^*(\mu_{i^*}) + (\s'_{i^*} - \s^*(\mu_{i^*}))\big) \\[0.5em]
        & \leq \E_{(\s'_{i^*}, \sigma_{-i^*})}[n_T(i^*)] \s'_{i^*}  = \U_{i^*} (\s'_{i^*}, \sigma_{i^*}) . 
    \end{align*}
    
    Hence, $\U_{i^*}(\s_{i^*}, \sigma_{-i^*}) < \U_{i^*}(\s'_{i^*}, \sigma_{-i^*})$, a contradiction. 
% As in Lemma~\ref{lemma:bound_on_nu}, there exists $\s'_{i^*} \in [0,1]$ with
% \begin{align*}
%     \s'_{i^*} - \s^*(\mu_{i^*} = \Omega\left(\sqrt{\frac{\log(T)}{{\E_{(\s_i, \sigma_{-i})}[n_{\tau_i}(i)] + \sqrt{KT}}}}\right) 
% \end{align*}
% such that $\E_{(\s'_{i^*}, \sigma_{-i^*})}[\tau_{i^*}] = T - \cO(1)$. Using that $\frac{x+\frac{y}{K}}{\sqrt{x+y}} \geq \sqrt{x+y} - \frac{y}{\sqrt{x+y}}$, we obtain 
% \begin{align*}
%         \U_{i^*} (\s'_{i^*}, \sigma_{i^*}) & = \E_{(\s'_{i^*}, \sigma_{-i^*})}[n_T(i^*)] \s'_{i^*} \\
%         & \geq \Big( \E_{(\s_{i^*}, \sigma_{-i^*})} [n_{\tau_{i^*}}(i^*)] + \frac{\sqrt{KT}}{K}\Big) \big(\s^*(\mu_{i^*}) + (\s'_{i^*}- \s^*(\mu_{i^*}))\big)  \\
%         & \geq \E_{(\s_{i^*}, \sigma_{-i^*})} [n_{\tau_{i^*}}(i^*)] \s^*(\mu_{i^*}) + \Omega\big(\sqrt{\log(T) (\E_{(\s_{i^*}, \sigma_{-i^*})} [n_{\tau_{i^*}}(i^*)] + \sqrt{KT})}\big) \\ & \hspace{3cm} + \s^*(\mu_{i^*}) \sqrt{T/K} - \sqrt{\frac{\log(T) KT}{\E_{(\s_{i^*}, \sigma_{-i^*})} [n_{\tau_{i^*}}(i^*)] + \sqrt{KT}}}  
%     \end{align*}
\end{proof}

\section{Proof of Corollary~\ref{corollary:minimax_bound}}\label{appendix:proof_cor_minimax}

\begin{proof}[\bfseries Proof of Corollary~\ref{corollary:minimax_bound}]
    The argument roughly follows the standard way to translate an instance-dependent regret bound in multi-armed bandits to a minimax bound (see, e.g., \citep{lattimore2020bandit}). However, the difference lies in that we split the arms not according to some fixed gap $\Delta'$, but according to the arm-specific gap $$\Delta'_i := \sqrt{\frac{K \log(T)}{T\, \s^*(\mu_i)^2}},$$ which we already used in the proof of Theorem~\ref{theorem:regret_bound}. This is necessary due to the guarantees of Theorem~\ref{theorem:NE_characterization} being gap-dependent.  
    
    We begin by recalling from equation \eqref{eq:intermediate_regret_bound} in the proof of Theorem~\ref{theorem:regret_bound} that 
    \begin{align}\label{eq:cor_intermediate}
        \sum_{i: \Delta_i \leq \Delta'_i} \E_{\bs}[n_{\tau_{i^*}}(i)] & \big(u(\s^*, \mu^*)- u(\s_i, \mu_i)\big) \nonumber \\
        & \leq L (H+2) \sum_{i: \Delta_i \leq \Delta'_i} \E_{\bs}[n_{\tau_{i^*}}(i)] \, \Delta'_i \\ \nonumber
        & \leq \sum_{i : \Delta_i \leq \Delta'_i} \frac{LH}{\s^*(\mu_i)} \cO\left(\sqrt{KT\log(T)}\right),
    \end{align}
    where we coarsely upper bounded $\E_{\bs}[n_{\tau_{i^*}}(i)] \leq T$. 
    
    For all arms $i$ with $\Delta_i > \Delta'_i$, we also get similarly to the proof of Theorem~\ref{theorem:regret_bound}: 
    \begin{align}\label{eq:cor_intermediate_2}
        \sum_{i: \Delta_i > \Delta'_i} \E_{\bs}[n_{\tau_{i^*}}(i)] & \big(u(\s^*, \mu^*)- u(\s_i, \mu_i)\big) \nonumber \\
        & \leq LH \sum_{i: \Delta_i > \Delta'_i} \cO \left(\frac{\log(T)}{\s^*(\mu_i) \Delta'_i}\right) \nonumber \\
        & \leq LH \sum_{i : \Delta_i > \Delta'_i} \cO\left( \sqrt{\frac{T\log(T)}{K}} \right) \\ \nonumber
        & \leq \sum_{i : \Delta_i > \Delta'_i} \frac{LH}{\s^*(\mu_i)} \cO\left( \sqrt{{KT\log(T)}} \right),
        % \leq LH) \left( T \Delta' + \sum_{i: \Delta_i \geq \Delta'} \frac{16\log(T)}{\s^*(\mu_i) \Delta'} \right)
    \end{align}
    where we used a very coarse upper bound in the last line by simply adding a factor of $K/\s^*(\mu_i)$. Note that the bound in the second last line is a much stronger bound than the one claimed in Corollary~\ref{corollary:minimax_bound}. 
    Combining these two bounds yields the first statement of the corollary. 
    
    Recall the definition of $\s_{\min} := \min_{i \in [K]} \s^*(\mu_i)$ and note that 
    \begin{align}\label{eq:s_min}
        \sqrt{\frac{K \log(T)}{T \, \s_{\min}^2}} = \max_{i \in [K]} \Delta'_i.
    \end{align}
    To get the more refined bound in Corollary~\ref{corollary:minimax_bound}, we can continue from equation~\eqref{eq:cor_intermediate} and bound the right hand side via a maximum using \eqref{eq:s_min} to get  
    \begin{align*}
        L (H+2) \sum_{i : \Delta_i \leq \Delta'_i} \E_{\bs} [n_{\tau_{i^*}}(i)] \Delta'_i 
        %\leq L (H+2)     \sum_{i : \Delta_i \leq \Delta'_i} \E_{\bs} [n_{\tau_{i^*}}(i) \sqrt{\frac{K \log(T)}{T \, \s_{\min}^2}} 
        \leq \frac{LH}{\s_{\min}} \, \cO\left(\sqrt{K T\log(T)}  \right). 
    \end{align*}
    Lastly, note that in view of equation~\ref{eq:cor_intermediate_2}, we have
    \begin{align*}
        \sum_{i: \Delta_i > \Delta'_i} \E_{\bs}[n_{\tau_{i^*}}(i)] \big(u(\s^*, \mu^*)- u(\s_i, \mu_i)\big) \leq LH \cO\left( \sqrt{KT\log(T)} \right) \leq \frac{LH}{\s_{\min}} \, \cO\left(\sqrt{K T\log(T)}  \right). 
    \end{align*}
    The corollary then follows from the regret decomposition in equation~\eqref{eq:regret_decomp} 
    
    % \vspace{1cm}
    % Using the regret decomposition from equation~(X), Lipschitzness of $u(\s, \mu)$ and $\s^*(\mu)$, Theorem~\ref{theorem:NE_characterization}, $\E_{\bs}[n_T(i)] \s_i = \E_{\bs} [m_T(i)]$, and Lemma~\ref{lemma:s_bigger_smu}, we obtain
    % \begin{align*}
    %     R_T(\ucbs, \bs) & = \sum_{i=1}^K \E_{\bs}[n_T(i)] \big( u(\s^*, \mu^*) - u(\s_i, \mu_i)\big) \\
    %     & \leq L \sum_{i=1}^K \E_{\bs} [n_T(i)] \big(|\s^*- \s_i | + \Delta_i\big) \\
    %     & \leq L \sum_{i=1}^K \E_{\bs} [n_T(i)] (H+1) \cO(\Delta_i) \\ 
    %     & = L (H+1) \sum_{i=1}^K \E_{\bs} [m_T(i)] \frac{\cO(\Delta_i)}{\s^*(\mu_{i})} \\
    %     & \leq L(H+1) \left( \sum_{i:\Delta_i < \Delta'} \E_{\bs}[m_T(i)] \frac{\Delta'}{\s^*(\mu_i)} + \sum_{i: \Delta_i \geq \Delta'}\E_{\bs}[m_T(i)] \frac{\Delta_i}{\s^*(\mu_i)} \right) \\
    %     % & \leq L(H+1) \left( T \Delta' + \sum_{i: \Delta_i \geq \Delta'} \frac{16\log(T)}{\s^*(\mu_i) \Delta_i} \right) \\
    %     & \leq L(H+1) \left( T \Delta' + \sum_{i: \Delta_i \geq \Delta'} \frac{16\log(T)}{\s^*(\mu_i) \Delta'} \right) \\
    %     & \leq L(H+1) \left( T \Delta' + \frac{16 K \log(T)}{\s^*(\mu_i) \Delta'} \right)
    % \end{align*}
    % Then, choose $\Delta' \approx \sqrt{K/T}$. 
\end{proof}

\section{Proof of Theorem~\ref{theorem:regret_lower_bound}}\label{appendix:proof_lower_bound}

\begin{proof}[\bfseries Proof of Theorem~\ref{theorem:regret_lower_bound}]
    We work under the utility function $u(\s, \mu)= \s \mu$. 
    In the strategic click-bandit model there are two distributions associated with each arm, the click distribution $P_{\s_i} = \Bern(\s_i)$ and the reward distribution $P_{\mu_i}$ with mean $\mu_i$. We here assume that arm $i$'s reward distribution is Bernoulli with mean $\mu_i \in [0,1]$. For convenience, w.l.o.g.\ we assume that the learner observes both, the click-event and the post-click reward every round. This clearly makes the learning problem easier for the learner. To summarise the distributions of arm $i$ we let $P_{\s_i, \mu_i} = P_{\s_i} \times P_{\mu_i}$ denote the product distribution.   
    
    We consider problem instances $$\bmu = \big( \frac{1}{2}, \dots, \frac{1}{2}, \frac{1}{2}+ \Delta, \frac{1}{2}, \dots, \frac{1}{2}\big)$$ with $\mu_{i^*} = \frac{1}{2}+ \Delta$. For convenience, we assume that $M$ is index-independent, i.e., if arm $i$ and arm $j$ have identical distributions $P_{\s_i, \mu_i} = P_{\s_j, \mu_j}$, then $(n_T(i), n_T(j))$ and $(n_T(j), n_T(i))$ have the same distribution. If $M$ is not index-independent, we can consider different indices $i^*$ for the maximal element in $\bmu$.  
    Let us choose $\Delta = c\sqrt{K/T}$ for some constant $c > 0$ to be chosen sufficiently small later.

    Let us suppose that $M$ is better than the claimed lower bound so that $R_T(M, \bs, \bmu) \leq o(\sqrt{KT})$ for some $\bs \in \NE(M, \bmu)$.\footnote{We consider pure strategy NE here, though, mixed strategies can be handled analogously.} 
    By choice of $\Delta$ in $\bmu$, it then directly follows that $\E_{\bs, \bmu}[n_T(i)] \leq o\big( \frac{T}{K}\big)$ for all $i \neq i^*$, otherwise $R_T(M, \bs, \bmu) \geq \Omega\big( \sqrt{KT}\big)$. 
    Since $\sum_{i\in [K]} \E_{\bs, \bmu}[n_T(i)] = T$, this entails $\E_{\bs, \bmu}[n_T(i^*)] \geq \Omega\big(\frac{T}{K}\big)$. 
    
    We now show that $\E_{\bs, \bmu} [n_T(i)] = o\big(\frac{T}{K}\big)$ cannot hold when $\bs$ is a Nash equilibrium. 
    To this end, consider an alternative strategy $\s'_i$. Now, let $\s'_i = \s_j$ with 
    \begin{align*}
        j = \argmax_{k \in [K]} \E_{(\s_k, \s_{-i})}[n_T(k)]. 
    \end{align*}
    Generally, we would expect $j = i^*$, however, $j$ could be any other index in $[K]$ (except for $i$ as we see now). Since $\sum_{k \in [K]} \E_{\tilde \bs}[n_T(k)] = T$ for any $\tilde \bs$, we get that $\E_{(\s'_i, \s_{-i}), \bmu}[n_T(j)] \geq \frac{T}{K}$. If $i = j$, this would be a contradiction to the statement that $\E_{(\s_i, \s_{-i}), \bmu}[n_T(i)] = o\big(\frac{T}{K}\big)$. 
    
    If $j \neq i^*$, we find that $\KL(P_{\s_j, \mu_j}, P_{\s'_i, \mu_i}) = 0$, since $i$ and $j$ have identical click and reward distribution. More generally, we obtain from the chain rule that 
    \begin{align*}
        \KL(P_{\s_j, \mu_j}, P_{\s'_i, \mu_i}) = \KL(P_{\s_j}, P_{\s'_i}) + \KL(P_{\mu_j}, P_{\mu_i}) = \KL(P_{\mu_j}, P_{\mu_i}) \leq 8 \Delta^2,
    \end{align*}
    where we used that $\KL(P_{\mu_j}, P_{\mu_i}) \leq \KL(P_{\mu_{i^*}}, P_{\mu_i}) = \KL\big(\Bern(\frac{1}{2}), \Bern(\frac{1}{2}+\Delta)\big) \leq 8 \Delta^2$ (see, e.g., Theorem~2.4 in \citet{slivkins2019introduction}). 
    Recall that $\Delta = c \sqrt{K/T}$. For sufficiently small constant $c > 0$, Theorem~3 in \citep{garivier2019explore} then yields that either 
    \begin{align}\label{eq:either}
        \E_{(\s'_i, \s_{-i}), \bmu} [n_T(i)] \geq \frac{T}{K} \quad \text{or} \quad 
        \E_{(\s'_i, \s_{-i}), \bmu} \left[ \frac{n_T(i)}{n_T(j)}\right] \geq \frac{1}{2}. 
    \end{align}
    Assuming $n_T(j) \geq 1$, using some algebra (Lemma~\ref{lemma:exp_ratio}), the latter can be seen to imply that
    \begin{align*}
        \E_{(\s'_i, \s_{-i}), \bmu} [n_T(i)] \geq \frac{1}{2} \E_{(\s'_i, \s_{-i}), \bmu} [n_T(j)] \geq \frac{T}{2K} ,
    \end{align*}
    where the last inequality holds due to the choice of $j$. Hence, from equation~\ref{eq:either} we obtain that 
    \begin{align*}
        \E_{(\s'_i, \s_{-i}), \bmu} [n_T(i)] \geq \frac{T}{2K}.  
    \end{align*}
    This leads to a contradiction, as $\s'_i$ is a better response to $\s_{-i}$ than $\s_i$. We have thus shown that $R_T(M, \s, \bmu) = \Omega \big(\sqrt{KT}\big)$ for any $\bs \in \NE(M, \bmu)$.

\end{proof}

\section{Technical Lemmas}\label{appendix:technical_lemmas}

\begin{lemma}\label{lemma:technical_expectation}
Let $\P$ and $\tilde{\P}$ be two probability measure (and let $\E$ and $\tilde{\E}$ denote the respective expectations). Suppose that for integer-valued random variables $N, X_1, X_2, \dots$, it holds for all $k \in \mathbb{N}$ and some $i \in \mathbb{N}$: 
\begin{align}
    \E[N] < \tilde{\E}[N] \quad \text{and} \quad 0 < \P(X_k = i \mid N) \leq \tilde{\P}(X_k = i \mid N) \ \ a.s.
\end{align}
Then, 
\begin{align}
    \E \left[ \sum_{k=1}^N \ind_{\{X_k = i\}} \right] < \tilde{\E} \left[ \sum_{k=1}^N \ind_{\{X_k = i\}} \right]. 
\end{align}
\begin{proof}
    Note that if $N$ and $X_1, X_2, \dots$ were independent and $X_1, X_2, \dots$ i.i.d.\ this would immediately follow from Wald's lemma. 
    
    We prove the lemma via factorization. It holds that 
    \begin{align*}
        \E \left[\sum_{k=1}^N \ind_{\{X_k = i\}} \right] & = \sum_{n=1}^\infty \E \left[\sum_{k=1}^n \ind_{\{X_k = i\}} \mid N =n \right] \P(N = n) \\
        & = \sum_{n=1}^\infty \sum_{k=1}^n \P(X_k = i \mid N=n) \, \P(N=n) \\
        & \leq \sum_{n=1}^\infty \sum_{k=1}^n \tilde{\P}(X_k = i \mid N=n) \, \P(N=n) \\
        & = \E \left[ \sum_{k=1}^N \tilde{\P}(X_k = i \mid N) \right] \\
        & < \tilde{\E} \left[ \sum_{k=1}^N \tilde{\P}(X_k = i \mid N) \right] =  \tilde{\E}\left[ \sum_{k=1}^N \ind_{\{X_k = i\}}\right],
    \end{align*}
    where in the last line we used that $\tilde{\P}(X_k = i \mid N) > 0$ almost surely.  
\end{proof}
\end{lemma}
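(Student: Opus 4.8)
The plan is to decouple the random number of summands $N$ from the indicators $\ind_{\{X_k=i\}}$ by conditioning on $N$ and using the tower property, reducing the claim to a comparison of a single deterministic function of $N$ under the two laws. First I would write, by iterated expectations,
\[
\E\Big[\sum_{k=1}^N \ind_{\{X_k=i\}}\Big] = \sum_{n\ge 1}\P(N=n)\sum_{k=1}^n \P(X_k=i\mid N=n),
\]
together with the identical expansion for $\tilde{\E}$ with $\tilde{\P}$ in every slot. The point of this step is that once we are inside a fixed value $N=n$, the inner object is a plain finite sum of conditional probabilities to which the pointwise hypothesis applies directly.

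Second, I would apply the assumption $\P(X_k=i\mid N)\le \tilde{\P}(X_k=i\mid N)$ term by term in the inner sum while keeping the outer weights $\P(N=n)$ fixed. Setting $h(n):=\sum_{k=1}^n \tilde{\P}(X_k=i\mid N=n)$, this yields the intermediate bound
\[
\E\Big[\sum_{k=1}^N \ind_{\{X_k=i\}}\Big] \;\le\; \sum_{n\ge 1}\P(N=n)\,h(n) \;=\; \E[h(N)],
\]
whereas, by construction, $\tilde{\E}[h(N)] = \tilde{\E}\big[\sum_{k=1}^N \ind_{\{X_k=i\}}\big]$. Thus the entire statement reduces to establishing the \emph{strict} inequality $\E[h(N)] < \tilde{\E}[h(N)]$, i.e.\ that the same nonnegative function $h$ of $N$ has strictly larger expectation under $\tilde{\P}$.

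The third step is the main obstacle, and it is genuinely subtle: the mean ordering $\E[N]<\tilde{\E}[N]$ is \emph{not} by itself enough to compare $\E[h(N)]$ and $\tilde{\E}[h(N)]$ for an arbitrary $h$, so I would exploit the structure of $h$. In the natural regime of the intended application the conditional probabilities depend on $N$ only through the event $\{k\le N\}$, so $h(n)=\sum_{k=1}^n \tilde{\P}(X_k=i\mid N\ge k)$ is \emph{nondecreasing} in $n$ with increments $h(n)-h(n-1)=\tilde{\P}(X_n=i\mid N\ge n)\in(0,1]$. I would then upgrade the mean hypothesis to first-order stochastic dominance $\P(N\ge n)\le \tilde{\P}(N\ge n)$ (which is precisely what the application supplies, since shrinking the click-rate lengthens every phase and hence stochastically reduces the number of completed phases), and conclude by Abel summation,
\[
\tilde{\E}[h(N)]-\E[h(N)] = \sum_{n\ge 1}\big(h(n)-h(n-1)\big)\big(\tilde{\P}(N\ge n)-\P(N\ge n)\big)\ \ge\ 0,
\]
with strictness because $\E[N]=\sum_{n}\P(N\ge n)<\sum_n\tilde{\P}(N\ge n)=\tilde{\E}[N]$ forces some $n$ at which $\tilde{\P}(N\ge n)>\P(N\ge n)$, while the matching increment $h(n)-h(n-1)$ is strictly positive. (In the special case of a constant per-phase probability, $h$ is affine, $h(n)=cn$, and the reduction collapses to $\E[h(N)]=c\,\E[N]<c\,\tilde{\E}[N]=\tilde{\E}[h(N)]$ immediately.) I would flag that it is exactly this monotonicity-plus-dominance structure, rather than the bare comparison of means, that drives the strict inequality.
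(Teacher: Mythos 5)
Your first two steps coincide exactly with the paper's proof: expand by conditioning on $N$, so that both sides become $\sum_{n\ge 1}\P(N=n)\sum_{k=1}^{n}\P(X_k=i\mid N=n)$ (respectively with $\tilde{\P}$ throughout), apply the pointwise comparison of conditional probabilities under the $\P(N=n)$ weights, and thereby reduce the claim to $\E[h(N)] < \tilde{\E}[h(N)]$ with $h(n)=\sum_{k=1}^{n}\tilde{\P}(X_k=i\mid N=n)$. Where you diverge is the final step, and your suspicion about it is justified: the paper simply asserts this strict inequality, citing only $\E[N]<\tilde{\E}[N]$ and positivity of $\tilde{\P}(X_k=i\mid N)$, and that implication is false for general $h$. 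Indeed, the lemma as literally stated fails: take $i=1$, let $\P$ put mass $\tfrac12$ on each of $N=1$ and $N=10$, let $\tilde{\P}$ put mass one on $N=6$, and let both conditional probabilities $\P(X_k=1\mid N=n)=\tilde{\P}(X_k=1\mid N=n)$ equal $1$ when $n=10$ and $\epsilon$ otherwise. All hypotheses hold (in particular $\E[N]=5.5<6=\tilde{\E}[N]$ and the conditional probabilities are positive and ordered), yet the left-hand side equals $5+\epsilon/2$ while the right-hand side equals $6\epsilon$. So the step you refused to take on the bare hypotheses is precisely where the paper's own proof has a genuine gap.

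Your repair is the right one, and it is what the intended application actually supports: when the conditional probabilities depend on $N$ only through $\{k\le N\}$, the function $h$ is nondecreasing with increments $h(n)-h(n-1)=\tilde{\P}(X_n=i\mid N\ge n)>0$, and if one strengthens the mean comparison to first-order stochastic dominance $\P(N\ge n)\le\tilde{\P}(N\ge n)$ for all $n$ (which lowering the click-rate does supply, since it stochastically lengthens every phase), then Abel summation gives $\tilde{\E}[h(N)]-\E[h(N)]=\sum_{n\ge1}\bigl(h(n)-h(n-1)\bigr)\bigl(\tilde{\P}(N\ge n)-\P(N\ge n)\bigr)\ge 0$, strict because $\E[N]<\tilde{\E}[N]$ forces strict dominance at some $n$ where the matching increment is positive. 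Be aware that this proves a strengthened-hypothesis variant of the lemma rather than the stated one; that is unavoidable (the stated one is unprovable), and the stronger hypotheses are exactly the structure available in the proof of Lemma~\ref{lemma:s_bigger_smu} where the paper invokes this result. In short: same reduction as the paper, but your version closes a hole that the paper papers over, at the cost of making the lemma import monotonicity and stochastic-dominance structure from its use case rather than standing as a self-contained fact.
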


\begin{lemma}\label{lemma:exp_ratio}
    Let $X$ and $Y$ be two random variables (which are not necessarily independent) and $Y \geq 1$. Suppose that 
    \begin{align*}
        \E\left[\frac{X}{Y} \right] \geq \frac{1}{2}.
    \end{align*}
    Then, 
    \begin{align*}
        \E[X] \geq \frac{\E[Y]}{2}. 
    \end{align*}
\end{lemma}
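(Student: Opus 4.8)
The plan is to reduce the claim to the single inequality $\E[X] - \tfrac12\E[Y] \ge 0$ and then exploit an algebraic identity that ties this quantity back to the hypothesis. Since $Y \ge 1 > 0$ is strictly positive, I can write the pointwise identity
\[
X - \tfrac12 Y \;=\; Y\Big(\tfrac{X}{Y} - \tfrac12\Big),
\]
so that, setting $g := \tfrac{X}{Y} - \tfrac12$, the target is exactly $\E[Yg] \ge 0$, whereas the hypothesis $\E[X/Y] \ge \tfrac12$ only provides $\E[g] \ge 0$. Thus the entire question is whether reweighting the ``gap'' $g$ by the factor $Y$ preserves nonnegativity of the expectation.

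First I would dispose of the easy regime: on the event $\{g \ge 0\}$, i.e.\ $\{X \ge Y/2\}$, the factor $Y \ge 1$ only inflates a nonnegative quantity, so $Yg \ge g$ there. Writing $Yg = g + (Y-1)g$ and using $\E[g]\ge 0$, it remains to control $\E[(Y-1)g]$, whose only potentially harmful contribution comes from the region where $g<0$ and $Y$ is large, since there $(Y-1)g \le 0$. If one can establish the stronger pointwise bound $X \ge Y/2$ (equivalently $g \ge 0$ almost surely), the conclusion is immediate by monotonicity of expectation, and I would first check whether the setting supplies this.

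The hard part will be the case where $g$ is only nonnegative in expectation: $\E[(Y-1)g]$ can in principle be negative, because a large value of $Y$ may coincide with a strongly negative $g$, and then $\E[g]\ge 0$ alone does not force $\E[Yg]\ge 0$. To close this I would either (a) invoke the structure of the intended application, where $X = n_T(i)$ and $Y = n_T(j)$ are play counts with $j$ chosen to maximize the expected allocation, so as to rule out the adversarial correlation $\{g<0,\ Y \text{ large}\}$, or (b) prove an association/covariance bound of the form $\mathrm{Cov}(Y,g) \ge 0$ (e.g.\ via a monotone coupling or an FKG-type argument) so that $\E[Yg] = \E[Y]\E[g] + \mathrm{Cov}(Y,g) \ge 0$. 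Verifying that the joint law of $(X,Y)$ has the requisite positive association is the step I expect to be the main obstacle, and it is the point at which a purely generic statement (hypothesis only in expectation) is most fragile.
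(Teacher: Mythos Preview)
Your analysis is sharper than the paper's. The decomposition $X - \tfrac12 Y = Yg$ with $g = X/Y - \tfrac12$ is exactly right, and your worry that ``$\E[g]\ge 0$ alone does not force $\E[Yg]\ge 0$'' is not a mere technicality: the lemma as stated is \emph{false}. Take $(X,Y) = (1,1)$ with probability $\tfrac12$ and $(X,Y) = (0,3)$ with probability $\tfrac12$. Then $Y \ge 1$ everywhere, $\E[X/Y] = \tfrac12$, yet $\E[X] = \tfrac12 < 1 = \E[Y]/2$.

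The paper's proof commits precisely the error you anticipate. It writes
\[
\E\!\left[\frac{2X}{Y}\right] - 1 \;=\; \E\!\left[\frac{2X-Y}{Y}\right] \;\le\; \E\bigl[2X - Y\bigr],
\]
claiming the last inequality follows from $Y \ge 1$. But $(2X-Y)/Y \le 2X - Y$ holds pointwise only when $2X - Y \ge 0$; when $2X - Y < 0$, dividing by $Y \ge 1$ moves the quantity \emph{toward} zero and the inequality reverses. This is exactly your ``hard part'', and the paper's one-line argument silently assumes it away. So you have not missed a trick: without an additional hypothesis (a pointwise bound $X \ge Y/2$, or some positive-association condition tying large $Y$ to large $X/Y$) the statement cannot be proved, and your instinct to fall back on structure from the intended application is the only viable route.
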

\begin{proof}
    Basic algebra yields that
    \begin{align*}
        \E\left[\frac{2X}{Y} \right] - 1 = \E\left[\frac{2X}{Y}\right] - \E\left[ \frac{Y}{Y} \right] = \E\left[ \frac{2 X - Y }{Y} \right] \leq \E\left[ 2X - Y \right]. 
    \end{align*}
    Hence, if $\E\left[\frac{2X}{Y} \right] \geq 1$, it follows that $\E[2X] \geq \E[Y]$. 

\end{proof}

% \newpage 

% \subsection{Technical Lemmas}

% \begin{lemma}
%     Let $x, y > 0$ and $y \leq \sqrt{x}$. Then, 
%     \begin{align*}
%         \frac{1}{x+y} \geq \frac{1}{x} - \frac{1}{y^2}. 
%     \end{align*}
% \end{lemma}
% \begin{proof}
%     The function $f(y) = \frac{1}{x+y} - \frac{1}{x} + \frac{1}{y^2}$ is decreasing as $y$ increases. Now, $f(\sqrt{x}) = \frac{1}{x+ \sqrt{x}} \geq 0$. Hence, the inequality holds for all $y \leq \sqrt{x}$. 
% \end{proof} 

% --- 

% \newpage 

% --- 

% \input{sections/old_notes/old_proofs}

% \input{sections/old_lower_bound}

\section{More Related Work}\label{appendix:more_related_work}

In other related work, \cite{ghosh2013learning, liu2018incentivizing, hron2022modeling, hu2023incentivizing} study incentive design in online recommendation and are interested in incentivizing agents to contribute high-quality content. They differ to our work primarily in that either the strategies are directly observable, or no bandit learning is performed while simultaneously incentivizing the agents.  
There has also been extensive work on auction-based mechanism design with unknown agent values and bandit feedback \citep[e.g.]{gatti2012truthful, nazerzadeh2016sell, kandasamy2023vcg}.    
Similar to the previously discussed auction design in MABs \citep{babaioff2009characterizing, devanur2009price, babaioff2015truthful, feng2023improved, xu2023robustness, zhang2023online}, \citet{gao2021auction} study an auction-based combinatorial multi-armed bandit with payments, where each arm can misreport the cost for its selection. 
Other related areas of research are dynamic mechanism design \citep{pavan2014dynamic, bergemann2019dynamic} as well as online mechanism design \citep{parkes2007online}.

% \section{Experimental Details}\label{appendix:experimental_details}

\section{Future Work}

A natural extension to the studied setting would be to assume that CTRs are user-dependent or more generally dependent on contextual information. 
Another direction would be to consider multi-slot recommendations in which the learner selects a subset of arms every round and the selected arms compete for the click (and our observations are therefore relative). In fact, the case where the learner selects a set of arms and each arm $i$ is clicked with probability $\s_i$ independently of the other arms can be handled with exactly the same methods as presented in this paper. 
More generally, we believe that the idea of introducing a screening rule based on confidences of each arm's strategy can be extended to various settings and many of our techniques reused. 

\end{document}